\definecolor{strblue}{HTML}{0F1ED2}
\definecolor{strred}{HTML}{E61E8C}
\theoremstyle{definition}
\newtheorem{theorem}{Theorem}[section]
\crefname{theorem}{Theorem}{Theorems}
\Crefname{theorem}{Theorem}{Theorems}
\newaliascnt{definition}{theorem}
\newtheorem{definition}[definition]{Definition}
\crefname{definition}{Definition}{Definitions}
\Crefname{definition}{Definition}{Definitions}
\newaliascnt{proposition}{theorem}
\newtheorem{proposition}[proposition]{Proposition}
\crefname{proposition}{Proposition}{Propositions}
\Crefname{proposition}{Proposition}{Propositions}
\newaliascnt{lemma}{theorem}
\newtheorem{lemma}[lemma]{Lemma}
\crefname{lemma}{Lemma}{Lemmas}
\Crefname{lemma}{Lemma}{Lemmas}
\newaliascnt{corollary}{theorem}
\newtheorem{corollary}[corollary]{Corollary}
\crefname{corollary}{Corollary}{Corollaries}
\Crefname{corollary}{Corollary}{Corollaries}
\newaliascnt{remark}{theorem}
\newtheorem{remark}[remark]{Remark}
\crefname{remark}{Remark}{Remarks}
\Crefname{remark}{Remark}{Remarks}
\newaliascnt{example}{theorem}
\newtheorem{example}[example]{Example}
\crefname{example}{Example}{Examples}
\Crefname{example}{Example}{Examples}
\newaliascnt{assumption}{theorem}
\newtheorem{assumption}[assumption]{Assumption}
\crefname{assumption}{Assumption}{Assumptions}
\Crefname{assumption}{Assumption}{Assumptions}
\renewcommand{\proofname}{\textbf{proof}}
\numberwithin{equation}{section}
\crefname{equation}{Equation}{Equations}
\crefname{figure}{Figure}{Figures}
\crefname{table}{Table}{Tables}
\crefname{algorithm}{Algorithm}{Algorithms}
\crefname{section}{}{}
\crefname{subsection}{}{}
\crefname{appendix}{Appendix}{Appendixes}
\DeclareMathOperator*{\argmin}{arg\,min} 
\DeclareMathOperator*{\argmax}{arg\,max} 
\DeclareMathOperator{\Proj}{Proj}
\DeclareMathOperator{\dom}{dom}
\newcommand{\rest}[1]{\Big\rvert_{#1}}
\title[Exact Solution to Data-Driven Inverse Optimization of MILPs]{Exact Solution to Data-Driven Inverse Optimization of MILPs \\ in Finite Time via Gradient-Based Methods}
\author{%
  Akira Kitaoka	
}
\address{NEC Corporation, 1753 Shimonumabe, Nakahara-ku, Kawasaki, Kanagawa, Japan }
\email{akira-kitaoka@nec.com}
\keywords{
    inverse optimization problem, mathematical programming, mixed integer linear programming, projected subgradient method
}
\begin{document}

\begin{abstract}

A data-driven inverse optimization problem (DDIOP) is the problem of estimating the objective-function parameters (weights) that explain observed optimal-solution data, and it arises in many applications, including mixed integer linear programming (MILP). In inverse optimization for MILPs, the prediction error of the features is discontinuous with respect to the weights, so applying gradient-based optimization directly is difficult. In this paper we focus on the suboptimality loss. This loss attains its minimum value, zero, if and only if the weights are exactly consistent with the observed data. Using the fact that this loss is convex and piecewise linear and that its set of minimizers has a relative interior point, we show that a broad class of gradient-based optimization methods, including projected subgradient descent, reaches exact consistency with the observed data in finitely many iterations (an exact solution is obtained in finite time). This guarantee holds for plain projected subgradient descent with standard diminishing step sizes, requiring neither prior knowledge of the optimal value nor smoothness of the objective. Through numerical experiments, we confirm this finite-step attainment behavior.

\end{abstract}

\maketitle

\textit{AMS subject classifications 2020}: 90C90(primary), 90C25, 90C52, 90C11, 90C05, 68Q25 (secondary)

\section{Introduction}
Inverse optimization is the problem of estimating an objective function or its parameters from observed optimal solutions, and is one of the fundamental problems spanning mathematical optimization, machine learning, and operations research~\citep{ahuja2001inverse,heuberger2004inverse,chan2019inverse,chan2023inverse}. Its applications are wide-ranging, including transportation~\citep{bertsimas2015data}, power systems~\citep{birge2017inverse}, healthcare~\citep{chan2022inverse}, advertisement scheduling~\citep{Suzuki-2019-TV}, and inverse reinforcement learning~\citep{ng2000algorithms}.

In this paper, we treat the case where the objective function of the forward problem is given as a linear combination of known features, with particular emphasis on settings that include mixed integer linear programs (MILPs).
Let $\mathcal{X} \subset \mathbb{R}^{d_{\mathcal{X}}}$ be the decision space. For each state $s \in \mathcal{S}$, let $X(s) \subset \mathcal{X}$ be a bounded and closed feasible region. For each $i = 1, \ldots, d$, let $f_i \colon \mathcal{X}\times\mathcal{S} \to \mathbb{R}$ be a piecewise linear mapping, and define $f := (f_1, \ldots, f_d)$. Let $\Theta \subset \mathbb{R}^d$ be a closed convex set representing the weight space, and let $\theta = (\theta_1, \ldots, \theta_d) \in \Theta$ denote the weights.
We define the optimal solution to the forward problem and its associated feature vector as follows:
\begin{equation}\label{eq:FOP_linear}
    x^* (\theta , s ) \in \argmax_{x \in X(s)} \theta^{\top} f ( x, s ) = \sum_{i=1}^d \theta_i f_i (x, s) , 
    \quad a^* (\theta , s ) = f (x^* (\theta , s ), s).
\end{equation}
Let $\theta^* \in \Theta$ denote the (unknown) true weights that generated the observed data. For each $n = 1, \ldots, N$, suppose that we are given $s^{(n)} \in \mathcal{S}$ and the observed optimal solution $x^{(n)} = x^*(\theta^*, s^{(n)})$. The data-driven inverse optimization problem (DDIOP) associated with the forward optimization problem~\cref{eq:FOP_linear} is defined as the problem of finding $\theta \in \Theta$ satisfying the following condition for all $n = 1, \ldots, N$:
\begin{equation}
    x^{(n)} \in \argmax_{x \in X(s^{(n)})} \theta^{\top} f ( x, s^{(n)} ) = \sum_{i=1}^d \theta_i f_i (x, s^{(n)}).
    \label{eq:IOP_linear}
\end{equation}
An illustration of solving the DDIOP for MILPs is shown in \cref{fig:IOP_image}.

\begin{wrapfigure}{r}{6.5cm}
\vspace{-\intextsep}
\begin{center}
\begin{tikzpicture}[
    scale=0.75,
    every node/.style={transform shape},
    node distance=0.5cm,
    arrow/.style={-{Stealth[length=3mm]}, line width=1.5pt}
]

\node (obj1) {
    \begin{tabular}{p{4cm}|p{2.5cm}}
        \toprule
        \textbf{Objective Function} $f_i$ & \textbf{Coefficient (cost/time)} $\theta_i$ \\
        \midrule
        Tardiness of job $1$ & \textcolor{strblue}{???} \\
        \hline
        \hspace{0.1cm}$\vdots$ & \hspace{0.1cm}\textcolor{strblue}{$\vdots$} \\
        \hline
        Tardiness of job $D$ & \textcolor{strblue}{???} \\
        \bottomrule
    \end{tabular}
};

\coordinate (before-nw) at ($(obj1.north west) + (-0.3, 0.3)$);
\coordinate (before-se) at ($(obj1.south east) + (0.3, -0.5)$);
\draw[strblue, thick, rounded corners=5pt] (before-nw) rectangle (before-se);

\node[strblue, font=\bfseries, anchor=south west] at ($(before-nw |- before-se) + (0.05, 0.05)$) {Input 1};

\coordinate (before-bottom) at ($(before-nw)!0.5!(before-se) + (0, 0)$);
\coordinate (before-bottom-center) at ($(before-bottom |- before-se) + (0.5,0)$);

\node[below=3.0cm of obj1.south west, anchor=north west] (obj2) {
    \begin{tabular}{p{4cm}|p{2.5cm}}
        \toprule
        \textbf{Objective Function} $f_i$ & \textbf{Coefficient (cost/time)} $\theta_i$ \\
        \midrule
        Tardiness of job $1$ & \textcolor{strblue}{0.15} \\
        \hline
        \hspace{0.1cm}$\vdots$ & \hspace{0.1cm}\textcolor{strblue}{$\vdots$} \\
        \hline
        Tardiness of job $D$ & \textcolor{strblue}{0.31} \\
        \bottomrule
    \end{tabular}
};

\coordinate (after-nw) at ($(obj2.north west) + (-0.3, 0.3)$);
\coordinate (after-se) at ($(obj2.south east) + (0.3, -0.5)$);
\draw[strblue, thick, rounded corners=5pt] (after-nw) rectangle (after-se);

\node[strblue, font=\bfseries, anchor=south west] at ($(after-nw |- after-se) + (0.05, 0.05)$) {Output};

\coordinate (after-top-center) at (after-nw -| before-bottom-center);
\draw[strblue,arrow] (before-bottom-center) -- (after-top-center) node[midway, right] {\hspace{0.1cm}Inverse Optimization};

\coordinate (io-mid) at ($(before-bottom-center)!0.5!(after-top-center)$);

\coordinate (sched-pos) at ($(before-nw) + (0, 0)$);
\coordinate (sched-pos-y) at ($(sched-pos |- io-mid) + (0.0,0)$);

\node[anchor=west] at (sched-pos-y) (sched-text) {
    \textbf{\textcolor{strblue}{Input 2}}: Schedules
};

\coordinate (sched-nw) at ($(sched-text.north west) + (-0.3, 0.3)$);
\coordinate (sched-se) at ($(sched-text.south east) + (0.3, -0.3)$);
\draw[strblue, thick, rounded corners=5pt] (sched-nw) rectangle (sched-se);

\coordinate (sched-ne) at (sched-se |- sched-nw);
\coordinate (sched-right-mid) at ($(sched-ne)!0.5!(sched-se)$);

\draw[strblue,arrow] (sched-right-mid) -- (io-mid);

\end{tikzpicture}
\end{center}
\caption{Illustration of the DDIOP. This figure depicts the estimation of the objective-function coefficients such that the given shift schedule becomes an optimal solution.}
\label{fig:IOP_image}
\vspace{-1\intextsep}
\end{wrapfigure}

As evaluation criteria for whether \cref{eq:IOP_linear} has been solved, one may consider the prediction loss of features (PLF)~\citep{aswani2018inverse,chan2019inverse,babier2021ensemble,chan2023inverse,ferber2023surco,liang2023data} and the suboptimality loss~\citep{Mohajerin-2018-Data}.
The PLF, denoted by $\ell_{\mathrm{plf}}(\theta)$, is given by
$
    \ell_{\mathrm{plf}}(\theta)
    =\sum_{n=1}^N
    \left\lVert a^*(\theta, s^{(n)}) - a^{(n)} \right\rVert_2^2 / N,
$
where $a^{(n)} = f(x^{(n)}, s^{(n)})$.
If the PLF equals $0$, then \cref{eq:IOP_linear} is solved.

However, in inverse optimization for MILPs, $a^*(\theta,s^{(n)})$ can vary discontinuously with respect to the weight $\theta$, so the PLF is in general discontinuous as well. For this reason, directly minimizing the PLF is not easy from the viewpoint of gradient-based optimization~\citep[cf.][]{Beck-2017-First,hazan2022introduction,garrigos2023handbook}.

On the other hand, as a Lipschitz continuous and convex loss function for assessing whether the DDIOP for MILPs~(\cref{eq:IOP_linear}) has been solved, one may consider the suboptimality loss
\[
    \ell_{\mathrm{sub}} (\theta)
    :=
    \frac{1}{N}\sum_{n=1}^N \left(
        \theta^{\top} a^*(\theta , s^{(n)})
        -
        \theta^{\top} a^{(n)}
    \right),
\]
as proposed in~\citep{Mohajerin-2018-Data}.
Solving~\cref{eq:IOP_linear} is equivalent to finding $\theta$ such that $\ell_{\mathrm{sub}}(\theta)=0$.
Therefore, by minimizing the suboptimality loss, one may expect to obtain a solution to~\cref{eq:IOP_linear}.

For minimizing the suboptimality loss, a variety of standard first-order (online) optimization methods have been used in the inverse optimization literature
\citep{Barmann-2018-online,besbes2021online,besbes2025contextual,gollapudi2021contextual,Kitaoka-2023-convergence-IRL,Kitaoka-2023-imitation-WIRL,sakaue2025online}.
Typical examples include projected subgradient descent (PSGD)
\citep{boyd2003subgradient,Beck-2017-First},
multiplicative weights update (MWU) \citep{arora2012multiplicative},
online Newton step (ONS) \citep{hazan2007logarithmic},
and MetaGrad \citep{van2016metagrad,van2021metagrad}.
These methods come with well-established regret/convergence guarantees for Lipschitz convex losses, which imply that $\ell_{\mathrm{sub}}$ decreases as the number of iterations $T$ grows.

However, much of the existing literature primarily provides \emph{asymptotic} guarantees, i.e., $\ell_{\mathrm{sub}}(\theta^t)$ $\to 0$ only in the limit as $T\to\infty$.
In contrast, whether one can \emph{exactly solve} the DDIOP for MILPs \emph{within finitely many iterations}---namely, guarantee $\ell_{\mathrm{sub}}=0$ after a finite number of updates---has not been sufficiently clarified.

In this paper, we show that, for a broad class of data-driven inverse optimization including MILPs, gradient-based optimization methods minimizing the suboptimality loss reach the minimum loss value $0$ in finitely many iterations. The key is the geometric structure that the suboptimality loss is convex and piecewise linear and that its set of minimizers has a relative interior point in the affine hull of the parameter space. Using this structure, the asymptotic guarantees known for standard first-order methods can be strengthened into guarantees of exact attainment in finitely many iterations.

\subsection*{Contributions}
The contributions of this paper are as follows.

\begin{itemize}
\item As a general theory for convex piecewise-linear functions, we show that when the set of minimizers has a relative interior point, a broad class of first-order methods reaches the minimum in finitely many iterations. This guarantee  holds for projected subgradient descent with standard diminishing step sizes, one of the simplest implementable methods. To the best of our knowledge, this is the first theory that guarantees exact finite-step attainment of the minimum for subgradient methods that do not require prior knowledge of the optimal value. The differences from existing finite-termination theories based on weak sharp minima, which target gradient projection methods and sequential quadratic programming in differentiable settings as well as proximal point methods, are described in \cref{sec:weak_sharp_minima}.

\item We show that the suboptimality loss arising in data-driven inverse optimization including MILPs is convex, Lipschitz continuous, and piecewise linear, and that, for almost every true weight, its set of minimizers has a relative interior point.

\item By combining the above, we show that gradient-based optimization methods, including projected subgradient descent, achieve exact consistency with the observed data in finitely many iterations.

\item For gradient-based optimization methods including projected subgradient descent, we provide an upper bound on the number of iterations in terms of the constant $\gamma(\ell_{\mathrm{sub}})$, and also show finite-step attainment of the PLF minimum.

\end{itemize}

\begin{table}[t]
    \vspace{-\intextsep}
    \caption{Comparison of approaches for solving the DDIOP~\cref{eq:IOP_linear} for MILPs (suboptimality loss).
    Whereas prior work primarily provides asymptotic guarantees---i.e., the loss decreases as $T\to\infty$---the present paper differs in that it guarantees $\ell_{\mathrm{sub}}=0$ within finitely many iterations.
    For a detailed comparison (explicit rates and constants), see \cref{tab:pro_con_SL_detail}.}
    \label{tab:pro_con_SL_ess}
    \centering
    \setlength{\tabcolsep}{3.5pt} 
    \renewcommand{\arraystretch}{1.15} 
    \begin{tabular}{p{2.5cm} p{7cm} c}
        \toprule
        Method (representative examples) & Typical guarantee (informal) & $\ell_{\mathrm{sub}}=0$ \\
        \midrule
        Point-set search (UPA/RPA) &
        Approximation based on covering error (strong dimension dependence) &
        No \\
        \hline
        Online optimization (MWU / ONS / MetaGrad, etc.) &
        Asymptotic decrease of $\ell_{\mathrm{sub}}$ (e.g., $O(1/\sqrt{T})$, $O(\log T/T)$) &
        No \\
        \hline
        This paper (gradient-based) &
        Finite-step attainment of $0$ by leveraging the structure of $\ell_{\mathrm{sub}}$ (convex, piecewise linear, plus an interior-point property) &
        \textbf{Yes} \\
        \bottomrule
    \end{tabular}
    \vspace{-\intextsep}
\end{table}

The remainder of this paper is organized as follows.
In \cref{sec:related_work}, we review related work.
In \cref{sec:problem_setup_and_algorithm}, we describe the DDIOP setting and present our proposed algorithm, which applies gradient-based optimization to the suboptimality loss.
In \cref{sec:main_results}, we show that the proposed method solves the DDIOP for MILPs within finitely many iterations and provide an upper bound on the required number of iterations. We further show that, when PSGD is instantiated within the proposed framework, the PLF can be driven to $0$ in finitely many iterations, together with an upper bound on the iteration complexity.
In \cref{sec:proof_overview}, we provide an intuitive explanation of the proofs of the main theorems presented in \cref{sec:main_results}.
The subsequent sections then give the complete proofs of the main results and the technical lemmas, and survey known results such as the relation between the prediction loss and the suboptimality loss and existing regret analyses.
In \cref{sec:experiment}, we report numerical experiments, demonstrating that the proposed method reaches the minimum value of the PLF within finitely many iterations.
Finally, \cref{sec:conclusion} concludes the paper and discusses directions for future work.

\section{Related Work}\label{sec:related_work}

Research on inverse optimization has classically developed for combinatorial and network optimization~\citep{ahuja2001inverse,heuberger2004inverse}, and more recently the data-driven framework---which estimates an objective function or its parameters from observed data---has been actively studied from both theoretical and applied perspectives~\citep{chan2019inverse,chan2023inverse} (the applications and the loss functions---the PLF and the suboptimality loss---were reviewed in the introduction; for systematic surveys see \citet{chan2019inverse,chan2023inverse}). In this section, we avoid restating the background covered in the introduction and focus on the work directly related to the positioning of this paper.

\paragraph{First-order/online optimization and finite-step attainment.}
When the forward problem is convex and the KKT conditions are available, approximate recovery guarantees and stability analyses are well developed~\citep{aswani2018inverse,Mohajerin-2018-Data,chan2023inverse}, whereas in discrete optimization including MILPs---the subject of this paper---the optimal-solution map can be discontinuous, so analyses effective in the continuous setting cannot be applied directly.
For the convex suboptimality loss, regret and best-iterate analyses based on first-order and online optimization methods---projected subgradient descent, multiplicative weights update, online Newton step, and MetaGrad---have been carried out~\citep{Barmann-2018-online,besbes2021online,besbes2025contextual,gollapudi2021contextual,Kitaoka-2023-convergence-IRL,sakaue2025online} (for details of existing regret analyses see \Cref{sec:known_regret_analysis}, and for the offline implications see \Cref{tab:pro_con_SL_ess}). However, these guarantees remain asymptotic, of the form $\ell_{\mathrm{sub}}(\theta^t)\to 0$ or $\min_{1\leq t\leq T}\ell_{\mathrm{sub}}(\theta^t)\to 0$, and do not yield exact attainment of $\ell_{\mathrm{sub}}=0$ at some finite iteration. This paper differs from these works in that it strengthens these asymptotic guarantees into finite-step exact attainment (\Cref{sec:main_results}).

\paragraph{Computational complexity and evaluation of the number of iterations.}
This paper sharpens the qualitative guarantee of attaining exact consistency in finitely many iterations into a quantitative form: an iteration upper bound expressed via the geometric constant $\gamma(\ell_{\mathrm{sub}})$. Our iteration upper bound is described via the problem-dependent geometric constant $\gamma(\ell_{\mathrm{sub}})$, and for PSGD (SRSS, SRSL; defined in \Cref{exa:PSGD}) it takes the form $T=O(1/\gamma(\ell_{\mathrm{sub}})^2)$ up to polynomial factors.

\paragraph{Positioning of this paper.}
In light of the above, this paper is positioned at the intersection of two streams of work: the asymptotic analysis of first-order/online optimization for the suboptimality loss, and the problem awareness of achieving exact consistency with the observed data in discrete inverse optimization including MILPs. Its novelty is to provide a general finite-step attainment principle for convex piecewise-linear functions and, by applying it to the suboptimality loss of data-driven inverse optimization including MILPs, to show exact solvability in finitely many iterations via gradient-based optimization methods. Below, we review supplementary related work topic by topic.

\paragraph{Methods for smoothing the prediction loss of optimal solutions}
\label{sec:smoothing_PLF}

To avoid the discontinuity of the PLF or of the discrete optimal-solution map, approaches that smooth the problem have been proposed.

The input-perturbation approach of \citet{berthet2020learning} requires multiple steps in order to obtain a solution and guarantee its uniqueness.
Moreover, since each gradient computation involves a Monte Carlo procedure (suppose that $M$ samples are drawn), the overall computational cost of learning becomes substantial.
This effectively means that one must solve $M$ times more linear programs.
In addition, the required number of samples $M$ for gradient estimation must itself be selected or estimated.

For LPs, \citet{wilder2019melding} proposed a method that adds a regularization term to the objective function.
However, to apply this method, $X(s)$ must be described by linear constraints.
Hence, it is not directly applicable when $X(s)$ is nonconvex, as in ILPs or MILPs.
To address this limitation, \citet{ferber2020mipaal} proposed reducing an MILP to an LP by using Gomory cuts \citep{gomory1960algorithm} and globally valid cuts \citep{balas1996gomory}.
Smoothing the PLF within the framework of \citet{ferber2020mipaal} provides one approach to solving the PLF minimization problem via backpropagation (cf.\ \citep{ferber2023surco}).
However, to the best of our knowledge, the method proposed in \citet{ferber2023surco} does not provide an approximation-error analysis, nor does it establish theory guaranteeing uniqueness of the solution.
Furthermore, it remains unclear what level of accuracy and computational effort is required for the Gomory cuts and globally valid cuts used in \citet{ferber2020mipaal}.
Thus, these methods generally solve an approximate problem and do not guarantee exact consistency for the original discrete problem in finitely many iterations. This paper does not use smoothing, but exploits the geometric structure of the suboptimality loss itself.

\paragraph{The case where the objective is strongly convex and $L$-smooth and the constraint set is convex}

Assume that the objective function in \cref{eq:FOP_linear} is $\mu$-strongly convex and $L$-smooth, and that $X(s)$ is convex.
In this case, minimizing the PLF is equivalent to minimizing the suboptimality loss (see \cref{sec:sub_0_eqs_pre}).
Moreover, by running projected gradient descent on the suboptimality loss, the resulting sequence of PLF values converges linearly (see \cref{sec:subopt_PSGD_to_pre}).

\paragraph{PLF minimization via the suboptimality loss}

Assume that the feature vector $a^*(\theta,s^{(n)})$ returns, among the solutions to
\[
    \argmax_{f(x, s)\in f(X(s), s)}\ \theta^{\top} f(x, s),
\]
the one that is minimal in the lexicographic order.
Under this assumption, \citet{Kitaoka-2023-imitation-WIRL} showed that $a^*(\theta,s^{(n)})=a^{(n)}$ holds whenever a subgradient of the suboptimality loss is equal to $0$.

\paragraph{SPO loss}

In the context of inverse optimization, there is a line of work using the SPO loss \citep{Mohajerin-2018-Data,elmachtoub2022smart}.
The constant $\gamma(\ell_{\mathrm{sub}})$ can be upper-bounded by employing the SPO loss \citep{Mohajerin-2018-Data,elmachtoub2022smart}.
For details, see \cref{sec:SPO_loss}.

\paragraph{Relation to weak sharp minima}
\label{sec:weak_sharp_minima}

The structure exploited in the finite-step attainment of \Cref{theo:grad_based_opt_achieve_min_SL}---that the set of minimizers has a relative interior point (in the affine hull of the parameter space)---is closely related to the notion of weak sharp minima \citep{burke1993weak}.
Weak sharp minima generalize sharp (strongly unique) minima to the case of a non-unique solution set, and are characterized by the objective growing at least linearly with the distance to the solution set, i.e., $\ell(\theta)-\min_\Theta\ell \ge \alpha\,\mathrm{dist}(\theta,\argmin_\Theta\ell)$ for some $\alpha>0$.

Finite termination of optimization algorithms under weak sharp minima has been studied classically. \citet{burke1993weak} showed that the sequential quadratic programming method and the gradient projection method reach the solution set in finitely many iterations under weak sharp minima (the finite termination of the gradient projection method is also noted by \citet{polyak1987introduction}), and \citet{ferris1991finite} established finite termination of the proximal point algorithm under a sharp minimum.

Our contribution differs from these in the following respects. First, whereas these existing results mainly concern differentiable settings (the gradient projection and sequential quadratic programming methods) or the proximal point algorithm, we provide finite-step attainment, in a unified manner, for a broad class of gradient-based optimization methods---including projected subgradient descent---applied to the piecewise-linear (nonsmooth) suboptimality loss (\Cref{theo:grad_based_opt_achieve_min_SL}). In particular, the Polyak step size---the standard device for obtaining stronger guarantees for subgradient methods---requires prior knowledge of the minimum value $\min_\Theta\ell$ and, moreover, falls outside our framework (\Cref{assu:gradient_based_opt_independ_ell}); in contrast, our finite-step attainment holds for standard diminishing step sizes (SRSS and SRSL) that require no such knowledge. Second, and more importantly, beyond merely establishing whether finite termination occurs, we give an upper bound on the number of iterations via the constant $\gamma(\ell_{\mathrm{sub}})$. This refines the qualitative guarantee of ``reaching exact consistency in finitely many iterations'' into a quantitative form: an iteration upper bound expressed via the geometric constant $\gamma(\ell_{\mathrm{sub}})$---a quantification not provided by the classical weak-sharp-minima theory of finite termination.

\section{Problem Setup and Algorithms}
\label{sec:problem_setup_and_algorithm}

We impose the following assumptions to specify the problem setting studied in this paper.
\begin{assumption}
    \label{assu:WIRL}
    Let the nonempty set $\mathcal{S}$ be the state space, and let the nonempty set $\mathcal{X} \subset \mathbb{R}^{d_{\mathcal{X}}}$ be the decision space. Let $f = (f_1 , \ldots , f_d)\colon \mathcal{X}\times\mathcal{S} \to \mathbb{R}^d$ be a mapping such that each component $f_i$ is a piecewise linear function.
    Let the weight space $\Theta \subset \mathbb{R}^d$ be a bounded, closed, and convex set.
    For each state $s \in \mathcal{S}$, let the feasible region $X(s)$ be a finite union of bounded, closed, and convex polytopes that are subsets of $\mathcal{X}$.
    Let $\mathcal{D} := \left\{ \left( s^{(n)}, x^{(n)} \right) \right\}_{n=1}^N$ be a set of training data such that the samples $s^{(n)} \in \mathcal{S}$ are generated from an unknown distribution $\mathbb{P}_{\mathcal{S}}$, and there exists an unknown $\theta^* \in \Theta$ satisfying, for each $n =1 , \ldots , N$,
    $
       x^{(n)} = x^* ( \theta^* , s^{(n)})
    $.
\end{assumption}

As a technical condition, we further assume the following.
\begin{assumption}
    \label{assu:WIRL-uniqueness}
    For each $n=1,\ldots,N$, the feature $a^*(\theta^*, s^{(n)})$ is uniquely determined.
\end{assumption}

\cref{assu:WIRL-uniqueness} is a natural assumption, in view of the following statement.
\begin{lemma}
    \label{lem:Psi_set_is_almost_Phi}
    Suppose \cref{assu:WIRL} holds and let $\Theta=\Delta^{d-1}\subset \mathbb{R}^d$.
    Then, for $\theta\in\Delta^{d-1}$ almost everywhere (with respect to the measure on $\Delta^{d-1}$ induced by the Lebesgue measure), the feature $a^*(\theta,s^{(n)})$ is uniquely determined for every $n=1,\ldots,N$.
\end{lemma}
The proof of this lemma is given in \cref{sec:proof_lem:Psi_set_is_almost_Phi}.

In this paper, based on the following proposition, we apply a gradient-based optimization method to the suboptimality loss $\ell_{\mathrm{sub}}$ and minimize it.

\begin{proposition}[{\citealt[Proposition~3.1]{Barmann-2018-online}; \citealt[Lemma~4.8]{Kitaoka-2023-convergence-IRL}}]
    \label{prop:SL_is_Lipschitz_convex}
    Suppose that \cref{assu:WIRL} holds.
    Then the following statements hold:
    (A) the suboptimality loss $\ell_{\mathrm{sub}}$ is convex;
    (B) the suboptimality loss $\ell_{\mathrm{sub}}$ is Lipschitz continuous; and
    (C) a subgradient of the suboptimality loss $\ell_{\mathrm{sub}}$ at $\theta\in\Theta$ is given by
    $ g(\theta)
        :=\sum_{n=1}^N \bigl(a^*(\theta, s^{(n)}) - a^{(n)}\bigr) / N $.    
\end{proposition}

\begin{wrapfigure}{r}{8cm}
    \vspace{-2\intextsep}
    \begin{minipage}{\linewidth}
    \begin{algorithm}[H]
        \caption{Minimization of the suboptimality loss}\label{alg:intention-WIRL-gradual-decay}
        \begin{algorithmic}[1]
            \STATE Initialize $\theta^1 \in \Theta$
            \FOR{$t = 1 , \ldots, T-1$}
                \STATE For each $n = 1 , \ldots , N$, solve for $x^*(\theta^t, s^{(n)})$
                \STATE $\theta^{t+1} \leftarrow \mathrm{update}_{t}\!\left(\{\theta^{t^\prime}\}_{t^\prime = 1}^t \mid \ell_{\mathrm{sub}}, g\right)$
            \ENDFOR
            \RETURN $\displaystyle \theta_{\mathrm{best}}^{T} \in \argmin_{\theta \in \{\theta^{t}\}_{t=1}^T} \ell_{\mathrm{sub}}(\theta)$
        \end{algorithmic}
    \end{algorithm}
    \end{minipage}
    \vspace{-\intextsep}
\end{wrapfigure}
Hereafter, when it is clear from the context,
$\min_{\Theta} \ell = \min_{\theta \in \Theta} \ell (\theta)$ and
$\argmin_{\Theta} \ell = \argmin_{\theta \in \Theta} \ell (\theta)$
are used as abbreviations.
Let $\ell \colon \Theta \to \mathbb{R}$ be a Lipschitz-continuous convex function, and let $\partial \ell$ denote the subdifferential of $\ell$.
Let $\nabla \ell \colon \Theta \to \mathbb{R}^d$ be any (sub)gradient selection satisfying, for every $\theta$, $\nabla \ell (\theta) \in \partial \ell (\theta)$.
For $t \in \mathbb{Z}_{\geq 1}$, let the update map be
$\mathrm{update}_{t} \colon \Theta^t \times \mathbb{R}^{t} \times (\mathbb{R}^{d})^{t} \to \Theta$.
Moreover, define
\[
    \begin{split}
        & \mathrm{update}_t \left( \{ \theta^{t^\prime} \}_{t^\prime =1}^t \mid \ell , \nabla \ell \right) 
         :=\mathrm{update}_{t} ( \{ \theta^{t^\prime} \}_{t^\prime = 1}^t , \{ \ell (\theta^{t^\prime}) \}_{t^\prime = 1}^t , \{ \nabla \ell (\theta^{t^\prime}) \}_{t^\prime = 1}^t )    
    \end{split}
\]
The resulting procedure is described in \cref{alg:intention-WIRL-gradual-decay}.

\begin{example}[Projected subgradient descent]
    \label{exa:PSGD}
    Let $\alpha_{t} \colon \mathbb{R}^d \times \mathbb{R} \times \mathbb{R}^{d} \to \mathbb{R}_{\geq 0}$ be a mapping, and refer to $\{ \alpha_{t} \}_{t \in \mathbb{Z}_{\geq 1}}$ as the \emph{learning rate}.
    Let $\Proj_{\Theta}$ denote the projection onto $\Theta$.
    We identify the mapping $\alpha_t$ with its value $\alpha_t ( \theta^t , \ell(\theta^t) , \nabla \ell(\theta^t))$ when no confusion arises.
    We define projected subgradient descent (PSGD) \citep[cf.][]{boyd2003subgradient,Beck-2017-First} as the update rule $\{ \mathrm{update}_t \}_t$ in \cref{alg:intention-WIRL-gradual-decay} by
    $
        \mathrm{update}_t \left(\{ \theta^{t^\prime} \}_{t^\prime =1}^t | \ell , \nabla \ell \right)
        = \Proj_{\Theta} \left( \theta^t - \alpha_{t}  \nabla \ell( \theta^t) \right) .
    $
    The learning rate $\alpha_{t}$ is said to be a \emph{nonsummable step size (NSS)} if there exists a sequence $\{\beta_{t} \}_{t} \subset \mathbb{R}$ such that
        \begin{equation}
            \label{eq:seq_nonsummbale}
            \beta_{t} > 0 ,
            \quad 
            \lim_{t \to \infty} \frac{\sum_{t=1}^\infty \beta_{t}^2}{\sum_{t=1}^\infty \beta_{t}} = 0 
        \end{equation}
    and $\alpha_{t} = \beta_{t}$.
    The learning rate $\alpha_{t}$ is said to be a \emph{square root step size (SRSS)} if, for some $\beta > 0$, it is an NSS learning rate with $\beta_{t} = \beta t^{-1/2}$.
    The learning rate $\alpha_{t}$ is said to be a \emph{nonsummable step length (NSL)} if, for some sequence $\{\beta_{t} \}_{t} \subset \mathbb{R}$ satisfying \cref{eq:seq_nonsummbale}, we set $\alpha_{t} = \beta_{t} \| \nabla \ell(\theta^t) \|^{-1}$ when $\nabla \ell(\theta^t) \not = 0$, and $\alpha_{t} = 0$ when $\nabla \ell(\theta^t)  = 0$.
    The learning rate $\alpha_{t}$ is said to be a \emph{square root step length (SRSL)} if, for some $\beta > 0$, it is an NSL learning rate with $\beta_{t}  = \beta t^{-1/2}$.
    The learning rate $\alpha_{t}$ is said to be \emph{Polyak} if the minimum value $\min_{\Theta} \ell$ is known and
    $ 
        \alpha_{t} = 
        \left( \ell (\theta^t) - \min_{\Theta} \ell \right)\| \nabla \ell (\theta^t ) \|^{-2}
    $
    holds.
\end{example}

We present below an example satisfying \cref{assu:WIRL}, together with an implementation example of \cref{alg:intention-WIRL-gradual-decay}.
\begin{example}
    \label{exa:mip}
    In \cref{assu:WIRL}, let
    $f(x, s) = x$ (i.e., the features are the decision variables themselves).
    Under this setting, \cref{eq:FOP_linear} becomes an MILP.
    Let the sequence $\{ \mathrm{update}_t \}_t$ be PSGD, and implement the projection $\Proj_{\Delta^{d-1}}$ onto $\Theta = \Delta^{d-1}$ as in \citet{Wang-13-projection}.
    Then one can implement \cref{alg:intention-WIRL-gradual-decay}.
    This implementation is identical to \citet[Algorithm~1]{Kitaoka-2023-convergence-IRL}.
\end{example}

\begin{remark}\label{remark:mip}
We explain why it is preferable to exclude $0$ from the weight space $\Theta$.
At the origin $0$, one has $\ell_{\mathrm{sub}}(0)=0$, and thus $0$ attains the minimum value of the suboptimality loss.
Consequently, for the DDIOP for MILPs (\cref{eq:IOP_linear}), $\theta=0$ is a solution.

However, the optimizer $x^*(0,s^{(n)})$ may be any point in $X(s^{(n)})$; hence, unless $X(s^{(n)})$ is a singleton, the optimizer is not uniquely determined.
Therefore, even if the true weight were $\theta^*=0$, Assumption~\ref{assu:WIRL-uniqueness} would not be satisfied.
That is, even if learning returns $\theta=0$, such a $\theta$ does not coincide with the true weight $\theta^*$.
This issue is common to many IOPs for LPs, regardless of whether the setting is online or offline \citep{bertsimas2015data,Mohajerin-2018-Data,Barmann-2018-online,Chen-2020-Online,sun2023maximum,Kitaoka-2023-convergence-IRL,Kitaoka-2023-imitation-WIRL}.
\end{remark}

\section{Main Results}
\label{sec:main_results}

\subsection{Finite-time exact solvability of DDIOP for MILP (gradient based optimization method)}
\label{sec:solve_MILP_DDIOP_grad_theory}

For gradient-based optimization methods, we make the following assumptions: namely, that the update rule depends only on the past sequence of iterates and (sub)gradients, and that a convergence rate in terms of the best iterate is guaranteed, respectively.

\begin{assumption}
    \label{assu:gradient_based_opt_independ_ell}
    For any $t \in \mathbb{Z}_{\geq 1}$, we assume that $\mathrm{update}_{t} \colon \Theta^t \times \mathbb{R}^{t} \times(\mathbb{R}^{d})^{t} \to \Theta$ does not depend on the second component space $\mathbb{R}^{t}$.
\end{assumption}

\begin{assumption}
    \label{assu:gradient_based_opt_Q}
    For any $L$-Lipschitz-continuous convex function $\ell \colon \Theta \to \mathbb{R}$, any initial point $\theta^1 \in \Theta$, and any $t \in \mathbb{Z}_{\geq 1}$, define the iterates inductively by
    $\theta^{t+1} = \mathrm{update}_t \left( \{ \theta^{t^\prime} \}_{t^\prime =1}^t \mid \ell ,\nabla \ell \right)$.
    Then there exists a nonincreasing function $Q_{L , \theta^1 } \colon \mathbb{R}_{> 0 } \to \mathbb{R}$ such that, for every $\varepsilon > 0$, if $T \geq Q_{L , \theta^1 } (\varepsilon)$, then for any $L$-Lipschitz-continuous convex function $\ell \colon \Theta \to \mathbb{R}$, the best iterate is $\varepsilon$-accurate, that is,
    $\min_{t=1 , \ldots , T} \ell (\theta^t) - \min_{\Theta} \ell \leq \varepsilon$.
\end{assumption}

Among PSGD methods, those with NSS or NSL learning rates satisfy \cref{assu:gradient_based_opt_independ_ell}, and explicit convergence rates for the best iterate are available (\cref{prop:convergence_rate_NSS_eg,prop:convergence_rate_NSL_eg}).
In contrast, for PSGD with the Polyak learning rate, $\mathrm{update}_t$ depends on the second argument, i.e., the sequence of objective values $\{\ell(\theta^{t^\prime})\}_{t^\prime=1}^t$, and hence it does not satisfy \cref{assu:gradient_based_opt_independ_ell}.

Since $X(s^{(n)})$ is a finite union of polyhedra and $f$ is such that each component function $f_i$ is piecewise affine, it follows that $f\!\left(X(s^{(n)}), s^{(n)}\right)$ is also a finite union of polyhedra (see \cref{prop:PolyhedralMap,prop:image_polyhedral_map_on_polyhedra}).
Let $Y^{(n)}$ denote the set of vertices of the polyhedral complex $f\!\left(X(s^{(n)}), s^{(n)}\right)$ (for the definition of the vertex set, see \cref{defi:vertex}).
Under \cref{assu:WIRL-uniqueness}, by the maximum principle,
\begin{equation}
    a^* (\theta^* ,s^{(n)}) \in Y^{(n)}
    \label{eq:optimal_in_vertex}
\end{equation}
holds.
We define the Lipschitz constant $L (\ell_\mathrm{sub})$ and a positive constant $\gamma (\ell_\mathrm{sub})$ as follows:
\begin{align}
    L (\ell_\mathrm{sub}) & := \sup_{\substack{ a^n \in f (X (s^{(n)}), s^{(n)}), \\ n =1, \ldots ,N }} \left\| \frac{1}{N} \sum_{n=1}^N  ( a^n - a^{(n)} ) \right\|
    \left( \leq L(f) \mathrm{diam} (\mathcal{X}) \right)
    ,
    \label{eq:Lipschitz_SL} \\
    \gamma (\ell_\mathrm{sub}) & := \max_{\theta \in \Theta} \min_{( a^{n} )_n \in \prod_{n=1}^N Y^{(n)} \setminus \{ (a^{(n)} )_n \} } \frac{1}{N} \sum_{n=1}^N \theta^\top \left( a^{(n)} - a^n \right) .
    \label{eq:gamma_SL}
\end{align}
Here, $L(f)$ denotes the Lipschitz constant of $f$, and $\mathrm{diam} (\mathcal{X}) = \sup_{x, x^\prime \in \mathcal{X}} \| x - x^\prime \|$ denotes the diameter of $\mathcal{X}$.
For an intuitive explanation of the constant $\gamma (\ell_\mathrm{sub})$, see \cref{sec:proof_overview}.
For the fact that $\gamma (\ell_\mathrm{sub})$ is positive, see \cref{prop:grad_based_opt_achieve_min_SL_before,prop:gamma_is_positive}.

Our main theorem is as follows.
\begin{theorem}
    \label{theo:grad_based_opt_achieve_min_SL}
    Suppose that \cref{assu:WIRL,assu:gradient_based_opt_independ_ell,assu:gradient_based_opt_Q,assu:WIRL-uniqueness} hold.
    Then, \\
    if $T \geq Q_{L (\ell_\mathrm{sub}), \theta^1} (\gamma (\ell_\mathrm{sub}))$, we have $\min_{t=1 , \ldots , T} \ell_{\mathrm{sub}} (\theta^t ) = \min_{\Theta} \ell_{\mathrm{sub}} = 0$.
\end{theorem}

See \cref{sec:techinical_lemmas} for the proof.
From \cref{lem:Psi_set_is_almost_Phi} and \cref{theo:grad_based_opt_achieve_min_SL}, we obtain the following corollary.

\begin{corollary}
    \label{cor:grad_based_opt_achieve_min_SL_on_ps}
    Suppose that \cref{assu:WIRL,assu:gradient_based_opt_independ_ell,assu:gradient_based_opt_Q} hold.
    Let $\Theta = \Delta^{d-1}$.
    Then, for Lebesgue-almost every $\theta^* \in \Delta^{d-1}$ (with respect to the measure on $\Delta^{d-1}$ induced by Lebesgue measure), the following holds:
    if $T \geq Q_{L (\ell_\mathrm{sub}), \theta^1} (\gamma (\ell_\mathrm{sub}))$, then
    $\min_{t=1 , \ldots , T} \ell_{\mathrm{sub}} (\theta^t ) = \min_{\Theta} \ell_{\mathrm{sub}} = 0$.
\end{corollary}

\begin{proof}
    By \Cref{lem:Psi_set_is_almost_Phi}, \Cref{assu:WIRL-uniqueness} holds for a.e. $\theta^*\in\Delta^{d-1}$.
    Fix such a $\theta^*$.
    Then \Cref{theo:grad_based_opt_achieve_min_SL} is applicable, which yields the claim.
\end{proof}

\begin{remark}
By strengthening the result of \cref{theo:grad_based_opt_achieve_min_SL}, one can, in the case $\Theta = \Delta^{d-1}$, replace the constant $\gamma (\ell_\mathrm{sub})$ by a constant depending only on $\{ X (s^{(n)} ) \}_n$, $f$, and $\Theta$ (see \cref{theo:universal_grad_based_opt_achieve_min_SL}).
\end{remark}

\subsection{Concrete complexity bounds for PSGD}
\label{sec:solve_MILP_DDIOP_grad_eg}

In this section, by applying \Cref{theo:grad_based_opt_achieve_min_SL}, we bound the number of iterations required to solve the DDIOP associated with an MILP when incorporating a concrete gradient-based optimization method into the update rule of \Cref{alg:intention-WIRL-gradual-decay}.

Let $\mathrm{diam}(\Theta)$ denote the diameter of $\Theta$.
In what follows, we use the fact that each variant of PSGD satisfies \Cref{assu:gradient_based_opt_independ_ell} and admits an error bound for the best iterate (for NSS, see \Cref{prop:convergence_rate_NSS}; for NSL, see \Cref{prop:convergence_rate_NSL}).
Then, by \Cref{theo:grad_based_opt_achieve_min_SL}, the following results hold.

\paragraph{PSGD (NSS)}
\begin{corollary}
    \label{cor:finite_solve_NSS}
    Under \Cref{assu:WIRL,assu:WIRL-uniqueness}, \Cref{alg:intention-WIRL-gradual-decay} with PSGD (NSS) solves \Cref{eq:IOP_linear} exactly in finitely many iterations.
\end{corollary}

When the learning rate is SRSS, one can explicitly construct the function $Q_{L,\theta^1}$ (see \Cref{prop:convergence_rate_NSS_eg}); hence, one can bound the number of iterations required to solve the DDIOP associated with an MILP as follows:
\begin{corollary}
    \label{cor:convergence_rate_NSS_eg_mini}
    Under \Cref{assu:WIRL,assu:WIRL-uniqueness}, \Cref{alg:intention-WIRL-gradual-decay} with PSGD (SRSS) solves \Cref{eq:IOP_linear} exactly within at most\\
    $
        \max \left( 1, \left( \frac{ \mathrm{diam}(\Theta)^2 + (1 + \log 2) \beta^2 L(\ell_{\mathrm{sub}})^2}{\beta \gamma (\ell_{\mathrm{sub}})}  \right)^2 - 2 \right)
    $
    iterations.
\end{corollary}

For the construction of $Q_{L,\theta^1}$ for NSS (including SRSS) and the proof of \Cref{cor:convergence_rate_NSS_eg_mini}, see \Cref{sec:Finite-time_solvability_DDIOP_w_PSGD_NSS}.

\paragraph{PSGD (NSL)}
\begin{corollary}
    \label{cor:finite_solve_NSL}
    Under \Cref{assu:WIRL,assu:WIRL-uniqueness}, \Cref{alg:intention-WIRL-gradual-decay} with PSGD (NSL) solves \Cref{eq:IOP_linear} exactly in finitely many iterations.
\end{corollary}

When the learning rate is SRSL, one can explicitly construct the function $Q_{L,\theta^1}$ (see \Cref{prop:convergence_rate_NSL_eg}); hence, one can bound the number of iterations required to solve the DDIOP associated with an MILP as follows:
\begin{corollary}
    \label{cor:convergence_rate_NSL_eg_mini}
    Under \Cref{assu:WIRL,assu:WIRL-uniqueness}, \Cref{alg:intention-WIRL-gradual-decay} with PSGD (SRSL) solves \Cref{eq:IOP_linear} exactly within at most\\
    $
        \max \left( 1, \left( \frac{ L(\ell_{\mathrm{sub}}) (\mathrm{diam}(\Theta)^2 + (1 + \log 2) \beta^2 )}{\beta \gamma (\ell_{\mathrm{sub}}) }  \right)^2 - 2 \right)
    $
    iterations.
\end{corollary}

For the construction of $Q_{L,\theta^1}$ for NSL (including SRSL) and the proof of \Cref{cor:convergence_rate_NSL_eg_mini}, see \Cref{sec:Finite-time_solvability_DDIOP_w_PSGD_NSL}.

\subsection{Extension: finite-time attainment of the PLF minimum}
\label{sec:finite_time_PLF_min}

In this section, we show that, by using PSGD, one can drive the PLF to $0$ within finitely many iterations.
For a nonempty subset $\Theta^\prime \subset \Theta$ and a point $\theta \in \Theta$, define
$d (\theta , \Theta^\prime) := \inf_{\theta^\prime \in \Theta^\prime } \| \theta - \theta^\prime \|$.

As in the case of the DDIOP for MILPs, by employing PSGD (NSS, NSL), one can attain the minimum value $0$ of the PLF in finitely many iterations (\cref{theo:achieve_min_PLF_PSGD_NSS,theo:achieve_min_PLF_PSGD_NSL}).
In particular, for PSGD (SRSS, SRSL), we can bound the distance between the iterate sequence $\{ \theta^t \}_t$ and the set of PLF minimizers $\argmin_{\Theta} \ell_{\mathrm{plf}}$, as well as the number of iterations required to attain the PLF minimum (\cref{theo:achieve_min_PLF_PSGD_SRSS_readable,theo:achieve_min_PLF_PSGD_SRSL_readable}).
These iteration upper bounds also depend on the constant $\gamma(\ell_{\mathrm{sub}})$.

\begin{theorem}
    \label{theo:achieve_min_PLF_PSGD_SRSS_readable} 
    Suppose that \cref{assu:WIRL,assu:WIRL-uniqueness} hold.
    Let $\{ \theta^t \}_t$ be the sequence obtained by implementing the update rule $\{ \mathrm{update}_t \}_t$ in \cref{alg:intention-WIRL-gradual-decay} via PSGD (SRSS).
    Then:
    (A) For $T \in \mathbb{Z}_{\geq 1}$,
    \begin{align*}
        & d (\theta^{T+1} , \argmin_{\Theta} \ell_{\mathrm{plf}} )^2
        \notag \\
        & \leq 
        \mathrm{diam} (\Theta)^2
        - \beta (2-\sqrt{2}) \left( \sqrt{T} -1 \right)
        \left(
        2 \gamma (\ell_{\mathrm{sub}})
        - \beta L (\ell_{\mathrm{sub}})^2 \frac{2 ( 1 + \log 2)}{\sqrt{T+2}} 
        \right)
    \end{align*}
    or $\ell_{\mathrm{plf} } (\theta^{T}  ) = 0$.
    (B) The condition $\ell_{\mathrm{plf} } (\theta^{T}  ) = 0$ is attained within at most\\
    $
        \max \left( \frac{ 4 (1 + \log 2)^2 L(\ell_{\mathrm{sub}})^4 \beta^2}{\gamma (\ell_{\mathrm{sub}})^2} -2, \left( 1 + \frac{\mathrm{diam} (\Theta)^2}{(2-\sqrt{2}) \beta \gamma (\ell_{\mathrm{sub}}) } \right)^2 \right)
    $
    iterations.
\end{theorem}

\begin{theorem}
    \label{theo:achieve_min_PLF_PSGD_SRSL_readable} 
    Suppose that \cref{assu:WIRL,assu:WIRL-uniqueness} hold.
    Let $\{ \theta^t \}_t$ be the sequence obtained by implementing the update rule $\{ \mathrm{update}_t \}_t$ in \cref{alg:intention-WIRL-gradual-decay} via PSGD (SRSL).
    Then:
    (A) For $T \in \mathbb{Z}_{\geq 1}$,
    \[
       d (\theta^{T+1 } , \argmin_{\Theta} \ell_{\mathrm{plf}})^2
        \leq 
        \mathrm{diam} (\Theta)^2
        - \beta (2-\sqrt{2}) \left( \sqrt{T} -1 \right)
        \left(
        2 \frac{\gamma (\ell_{\mathrm{sub}})}{L(\ell_{\mathrm{sub}})}
        - \beta \frac{2 ( 1 + \log 2)}{\sqrt{T+2}} 
        \right)
    \]
    or $\ell_{\mathrm{plf} } (\theta^{T}  ) = 0$.
    (B) The condition $\ell_{\mathrm{plf} } (\theta^{T}  ) = 0$ holds within at most\\
    $
        \max \left( \frac{ 4 (1 + \log 2)^2 L(\ell_{\mathrm{sub}})^2 \beta^2}{\gamma (\ell_{\mathrm{sub}})^2} -2, \left( 1 + \frac{\mathrm{diam} (\Theta)^2 L(\ell_{\mathrm{sub}})}{(2-\sqrt{2}) \beta \gamma (\ell_{\mathrm{sub}}) } \right)^2 \right)
    $
    iterations.
\end{theorem}
For the proofs of \cref{theo:achieve_min_PLF_PSGD_SRSS_readable,theo:achieve_min_PLF_PSGD_SRSL_readable}, see \cref{sec:finite_time_PLF_min_appx}.

\section{Comparison tables}

\subsection{DDIOP for MILPs}

A performance comparison of methods for solving the DDIOP for MILPs (\cref{eq:IOP_linear}) is provided in \cref{tab:pro_con_SL_detail}.
\begin{table}[ht]
\vspace{-\intextsep} 
    \caption{Performance comparison of methods for solving the DDIOP for MILPs (\cref{eq:IOP_linear}).
    The integer $T$ denotes the number of iterations or the number of optimization calls.
    The integer $t$ denotes the iteration index or the optimization-call index.}
    \label{tab:pro_con_SL_detail}
    \centering
    \begin{tabular}{{p{3.5cm}|p{8.5cm}}}
        \toprule
        Method & Suboptimality loss \\
        \midrule
        UPA & $O\!\left( \mathrm{diam} (\Theta)\, L(\ell_{\mathrm{sub}})\, T^{-\nicefrac{1}{(d-1)} } \right)$ \\
        & \citep[cf.][Theorem 2.1]{flatto1977random}\\
        \hline
        RPA & $O_{\mathbb{P}}\!\left( \mathrm{diam} (\Theta)\, L(\ell_{\mathrm{sub}})\, \left( \log T / T \right)^{\nicefrac{1}{(d-1)} }\right)$
        \\
        & \citep[cf.][Corollary 2.3]{reznikov2015covering}\\
        \hline
        MWU & $O \left( L(\ell_{\mathrm{sub}}) \log d  / \sqrt{T} \right)$, provided that $\Theta = \Delta^{d-1}$ \\
        \citep{arora2012multiplicative} & \citep[Theorem 3.5]{Barmann-2018-online} \\
        \hline
        \citet{besbes2021online,besbes2025contextual} & $O \left( d^4 \log T /T \right)$, provided that $\Theta$ is the unit sphere and $L(\ell_{\mathrm{sub}})\leq 1$\\
        \hline
        \citet{gollapudi2021contextual} & $O \left( d \log T /T \right)$, $O \left( d^{2(d+1)} /T \right)$ provided that $\Theta$ is the unit ball and $L(\ell_{\mathrm{sub}})\leq 1$ \\
        \hline
        ONS & $O \left(\mathrm{diam} (\Theta)\, L(\ell_{\mathrm{sub}})\, d\log (T/d)/T \right)$\\
        \citep{hazan2007logarithmic} &  \citep[Theorem 3.1]{sakaue2025online} \\
        \hline
        MetaGrad & $O \left( \mathrm{diam} (\Theta)\, L(\ell_{\mathrm{sub}})\, d\log (T/d)/T \right)$ \\
        (\citealp{van2016metagrad}; & \citep[Theorem 4.1]{sakaue2025online} \\
        \citealp{van2021metagrad}) & \\
        \hline
        PSGD & $O \left( \mathrm{diam} (\Theta)\, L(\ell_{\mathrm{sub}}) / \sqrt{T} \right)$ \\
        (with step size & \citep[Theorem 3.11]{Barmann-2018-online} \\
        $\mathrm{diam} (\Theta) L(\ell_{\mathrm{sub}})^{-1} t^{-1/2}$) &  \\
        \hline
        PSGD & $0$ if $T \geq \max \left( 1, \left( \frac{ \mathrm{diam}(\Theta)^2 + (1 + \log 2) \beta^2 L(\ell_{\mathrm{sub}})^2}{\beta \gamma (\ell_{\mathrm{sub}})}  \right)^2 -2 \right)$ \\
        (with step size $\beta t^{-1/2}$) & (Ours, \cref{cor:convergence_rate_NSS_eg_mini,cor:convergence_rate_NSS_eg}) \\
        \hline
        PSGD & $0$ if $T \geq \max \left( 1, \left( \frac{ L(\ell_{\mathrm{sub}})\, (\mathrm{diam}(\Theta)^2 + (1 + \log 2) \beta^2 )}{\beta \gamma (\ell_{\mathrm{sub}}) }  \right)^2 -2 \right)$ \\
        (with step length $\beta t^{-1/2}$) & (Ours, \cref{cor:convergence_rate_NSL_eg_mini,cor:convergence_rate_NSL_eg}) \\
        \bottomrule
    \end{tabular}
    \vspace{-\intextsep} 
\end{table}

\subsection{PLF minimization}

A performance comparison of methods for solving the minimum value of the PLF for MILPs is provided in \cref{tab:pro_con_PLF}.
\begin{table}[ht]
    \caption{Performance comparison of methods for minimizing the PLF for MILPs.
    The notation is the same as in \cref{tab:pro_con_SL_detail}.}
    \label{tab:pro_con_PLF}
    \centering
    \begin{tabular}{p{1.5cm}|p{10cm}}
        \toprule
        Method & Distance between the set where the PLF equals $0$ and the learned weight $\theta^T$ (when the PLF is nonzero) \\
        \midrule
        UPA & $O\!\left( \mathrm{diam} (\Theta)\, T^{-\nicefrac{1}{(d-1)} } \right)$ \\
        \hline
        RPA & $O_{\mathbb{P}}\!\left( \mathrm{diam} (\Theta)\, \left( \log T / T \right)^{\nicefrac{1}{(d-1)} }\right)$ \\
        \hline
        PSGD & $
        \mathrm{diam} (\Theta)^2
        - \beta (2-\sqrt{2}) \left( \sqrt{T} -1 \right)
        \left(
        2 \gamma (\ell_{\mathrm{sub}})
        - \beta L (\ell_{\mathrm{sub}})^2 \frac{2 ( 1 + \log 2)}{\sqrt{T+2}}
        \right)$, \\
        (with step size $\beta t^{-1/2}$) & \cref{theo:achieve_min_PLF_PSGD_SRSS_readable} \\
        \hline
        PSGD & $\mathrm{diam} (\Theta)^2
        - \beta (2-\sqrt{2}) \left( \sqrt{T} -1 \right)
        \left(
        2 \frac{\gamma (\ell_{\mathrm{sub}})}{L(\ell_{\mathrm{sub}})}
        - \beta \frac{2 ( 1 + \log 2)}{\sqrt{T+2}}
        \right)$, \\
        (with step length $\beta t^{-1/2}$) & \cref{theo:achieve_min_PLF_PSGD_SRSL_readable} \\
        \bottomrule
    \end{tabular}
\end{table}

\begin{wrapfigure}{r}{7.0cm}
    \vspace{-2\intextsep}
    \begin{center}
    \begin{tikzpicture}[scale=0.95]

        \coordinate (A) at (-3,2 / 1.1);
        \coordinate (B) at (-1,0);
        \coordinate (C) at (1,0);
        \coordinate (D) at (3,2/1.05);
        \coordinate (E) at (0,-1/1.05);
        \coordinate (O) at (0,0);

        \draw (D) node[font=\scriptsize, above]{$\ell(\theta)=\max(|\theta|-1,0)$};
        \draw (E) node[font=\scriptsize, below left]{$\widetilde{\ell}(\theta)=|\theta|-1$};
        \draw (C) node[font=\scriptsize, right]{\textcolor{strblue}{$\min \ell$}};

        \draw[thick] (A) -- (B);
        \draw[thick,strblue] (B) -- (C);
        \draw[thick] (C) -- (D);

        \draw[dashed] (B) -- (E) -- (C);

        \draw (O) to[out=-60,in=60]
          node[midway,left,xshift=0pt, yshift=-2pt,fill=white,inner sep=1pt, font=\scriptsize]
          {$\gamma=\min\ell-\min\widetilde{\ell}$}
        (E);

        \coordinate (ABmid) at ($(A)!0.55!(B)$);
        \node[font=\scriptsize, fill=white, inner sep=1pt, above=2pt] at (ABmid)
          {$\ell=\widetilde{\ell}$};

        \coordinate (CDmid) at ($(C)!0.55!(D)$);
        \node[font=\scriptsize, fill=white, inner sep=1pt, above=2pt] at (CDmid)
          {$\ell=\widetilde{\ell}$};

        \coordinate (P1) at (-2.5,1.5/1.05);
        \coordinate (P2) at (1.9,0.9/1.05);
        \coordinate (P3) at (-1.3,0.3/1.05);
        \coordinate (P4) at (0.7,-0.3/1.05);
        \coordinate (Q4) at (0.7,0);

        \fill[strred] (P1) circle (2pt);
        \draw[strred] (P1) node[below left]{$\theta_1$};
        \fill[strred] (P2) circle (2pt);
        \draw[strred] (P2) node[below right]{$\theta_2$};
        \fill[strred] (P3) circle (2pt);
        \draw[strred] (P3) node[below left]{$\theta_3$};
        \fill[strred] (Q4) circle (2pt);
        \draw[strred] (Q4) node[above]{$\theta_4$};
        \fill[strred] (P4) circle (2pt);
        \draw[strred] (P4) node[below right]{};

        \draw[->,strred] (P1) -- (P2);
        \draw[->,strred] (P2) -- (P3);
        \draw[->,strred] (P3) -- (P4);
        \draw[->,strred] (P4) -- (Q4);

    \end{tikzpicture}
    \end{center}
    \caption{Schematic illustration of reaching the minimum value $\min_{\Theta} \ell = 0$ in finitely many iterations when applying the (sub)gradient method to the convex piecewise-linear function $\ell(\theta)=\max(|\theta|-1,0)$.}
    \label{fig:idea_achieve_minimum_w_GD_for_PL}
    \vspace{-1\intextsep} 
\end{wrapfigure}

\section{Proof Overview / Intuition}
\label{sec:proof_overview}

We provide an intuition for the claim that
``for convex piecewise-linear functions whose minimizer set has a relative interior point,
first-order methods such as projected subgradient descent reach the minimum value in finitely many iterations.''

In this paper, we take $\Theta$ to be a convex compact set and consider first-order methods
(methods whose iterates are updated solely based on the past iterates and (sub)gradients; e.g., PSGD).
In this setting, for a convex piecewise-linear function $\ell:\Theta\to\mathbb{R}$,
if the minimizer set $\argmin_{\Theta} \ell$ has a relative interior point,
then one can construct another convex piecewise-linear function $\widetilde{\ell}$
by decreasing the function values only in the relative interior of $\argmin_{\Theta}\ell$.
Moreover, since one may construct $\widetilde{\ell}$ so that $\ell=\widetilde{\ell}$ holds outside $\argmin_{\Theta}\ell$,
the iterates produced by a first-order method applied to $\ell$ and those produced when applied to $\widetilde{\ell}$
coincide (and so do the subgradients) until the iterates reach $\argmin_{\Theta}\ell$.

We illustrate this behavior via a concrete example.
Consider $\ell(\theta)=\max(|\theta|-1,0)$.
Then $\argmin \ell=[-1,1]$, and the minimizer set has a relative interior point.
Let $\widetilde{\ell}(\theta)=|\theta|-1$.
Since $\ell(\theta)=\widetilde{\ell}(\theta)$ holds for $|\theta|>1$,
the iterates of any first-order method are identical for the two objectives until they enter $[-1,1]$.
Furthermore, since $\min_{\Theta} \ell=0$ and $\min_{\Theta}\widetilde{\ell}<0$,
we can define
$
\gamma:=\min_{\Theta} \ell-\min_{\Theta}\widetilde{\ell}>0
$
(the gap is depicted as $\gamma$ in \cref{fig:idea_achieve_minimum_w_GD_for_PL}).

On the other hand, for first-order methods such as projected subgradient descent,
standard convergence guarantees imply that ``$\widetilde{\ell}(\theta^t)$ converges to $\min_{\Theta}\widetilde{\ell}$.'' 
Hence, there exists a finite iteration index $T$ such that
$
\widetilde{\ell}(\theta^T)\le \min_{\Theta}\widetilde{\ell}+\gamma
= \min_{\Theta}\ell
$.
Since the iterates coincide until the minimum is attained, the same $T$ yields $\ell(\theta^T)=\min_{\Theta}\ell$,
i.e., the minimum value of $\ell$ is attained in finitely many iterations.
Finally, the main object of interest in this paper, the suboptimality loss $\ell_{\mathrm{sub}}$,
satisfies the following proposition:
\begin{proposition}
    \label{prop:argmin_SL_has_inner_pt}
    Assume \cref{assu:WIRL,assu:WIRL-uniqueness}.
    Then: (A) it is piecewise-linear; (B) the minimum value of the suboptimality loss is $0$; and (C) the set of minimizers has a relative interior point.
\end{proposition}
See \cref{sec:proof_prop:argmin_SL_has_inner_pt} for the proof.
By \cref{prop:SL_is_Lipschitz_convex,prop:argmin_SL_has_inner_pt} and \cref{lem:Psi_set_is_almost_Phi},
the suboptimality loss $\ell_{\mathrm{sub}}$ is convex, Lipschitz, and piecewise-linear; moreover,
when $\Theta = \Delta^{d-1}$, its minimizer set has a relative interior point (for Lebesgue-almost every true weight $\theta^*$).
Therefore, by an argument analogous to the above intuition,
one expects that first-order methods including PSGD attain $\ell_{\mathrm{sub}}=0$ in finitely many iterations
(i.e., solve the DDIOP for MILPs exactly), as formalized in \cref{theo:grad_based_opt_achieve_min_SL}.

\section{Technical Lemmas}
\label{sec:techinical_lemmas}

\subsection{A minimum-attainment theorem via gradient-based optimization for convex piecewise-linear functions}
\label{sec:grad_based_opt_for_PWLF}

In this section, we show that, for a convex piecewise-linear function whose set of minimizers has a relative interior point, one can attain the minimum value in finitely many iterations by employing a gradient-based optimization method that includes projected subgradient descent (\cref{theo:PL_convex_achieve_min}).

\begin{assumption}
    \label{assu:piecewise_linear_covex}
    Let $V \subset \mathbb{R}^d$ be an affine subspace.
    Let the \textbf{direction space} associated with the affine space $V$ (the linear subspace obtained by translating $V$ so that it passes through the origin) be defined by
    \[
        V_0 := \{\, x - y \mid x, y \in V \,\} \subset \mathbb{R}^d .
    \]
    Then $V_0$ is a linear subspace of $\mathbb{R}^d$, and for any $\theta_0 \in V$ one can write $V = \theta_0 + V_0$. Let $P_{V_0} \colon \mathbb{R}^d \to V_0$ denote the orthogonal projection onto $V_0$.
    Let $\Theta \subset V$ be a convex body, i.e., a convex set with nonempty interior relative to $V$.
    Let $I$ be a finite set.
    For each $i \in I$, let $a^i \in \mathbb{R}^d$ and $b_i \in \mathbb{R}$.
    Assume that $\ell (\theta) = \max_{i \in I} \left( a^{i \top} \theta + b_i \right)$ is not constant on $\Theta$ and that $\argmin_{\Theta} \ell$ has a relative interior point in $V$.
\end{assumption}

\cref{assu:piecewise_linear_covex} is equivalent to the condition that $\ell$ is convex and piecewise-linear and that its minimizer set has a relative interior point (cf.\ \cref{sec:PWLF}).
Under \cref{assu:piecewise_linear_covex}, the function $\ell$ is $\max_{i \in I } \| a^i \|$-Lipschitz continuous.
Define $\widetilde{\ell} \colon \Theta \to \mathbb{R}$ and $\gamma (\ell) \in \mathbb{R}$ by
\begin{equation}
    \label{eq:def_gamma_ell}
    \widetilde{\ell} (\theta) := \max_{i \in I \,:\, P_{V_0} a^i \not = 0 } \left( a^{i \top} \theta + b_i \right) ,
    \quad
    \gamma (\ell) := \min_{\Theta} \ell - \min_{\Theta} \widetilde{\ell}
\end{equation}
Here the excluded indices are those $i$ with $P_{V_0} a^i = 0$, i.e., those $i$ for which the affine function $a^{i\top}\theta + b_i$ is constant on $V$. Indeed, writing $\theta = \theta_0 + u$ (with $u \in V_0$), we have
\[
    a^{i\top}\theta + b_i = (a^{i\top}\theta_0 + b_i) + (P_{V_0}a^i)^\top u
\]
(since $u \in V_0$ implies $a^{i\top}u = (P_{V_0}a^i)^\top u$), so the variation over $V$ is determined solely by $P_{V_0}a^i$. Since in \cref{assu:piecewise_linear_covex} we assumed that $\ell$ is not constant on $\Theta$, there exists at least one $i$ with $P_{V_0}a^i \neq 0$, and hence $\widetilde\ell$ is well-defined. Note that when $V = \mathbb{R}^d$ one has $P_{V_0}a^i \neq 0 \iff a^i \neq 0$.
\begin{remark}
    For a convex piecewise-linear function $\ell$, the definitions of $\widetilde{\ell}$ and $\gamma (\ell)$ are not necessarily well-defined in an intrinsic sense.
    Note that $\widetilde{\ell}$ is defined here only because $\ell$ is given in the explicit form $\ell (\theta) = \max_{i \in I} \left( a^{i \top} \theta + b_i \right)$.
\end{remark}

Then, the following theorem holds.
\begin{theorem}
    \label{theo:PL_convex_achieve_min}
    Suppose that \cref{assu:piecewise_linear_covex,assu:gradient_based_opt_independ_ell,assu:gradient_based_opt_Q} hold.
    Let the Lipschitz constant be $L(\ell) = \max_{i \in I } \| a^i \|$.
    If $T \geq Q_{L(\ell), \theta^1} (\gamma (\ell))$, then $\min_{t=1 , \ldots , T} \ell (\theta^t ) = \min_{\Theta} \ell$.
\end{theorem}
See \cref{sec:grad_based_opt_for_PWLF_appx} for the proof.

\subsection{Proof Sketch of Theorem \ref{theo:grad_based_opt_achieve_min_SL}}

For the suboptimality loss, by the maximum principle, we can write
\begin{equation}
    \ell_\mathrm{sub} (\theta)
    =  \frac{1}{N} \sum_{n=1}^N \max_{a^n \in Y^{(n)}} \theta^\top ( a^n - a^{(n)} ) =
    \max_{( a^n )_n \in \prod_{n=1}^N Y^{(n)}} \theta^\top \left( \frac{1}{N} \sum_{n=1}^N ( a^n - a^{(n)} ) \right)
    \label{eq:SL_is_PWLF}
\end{equation}
The constant $\gamma (\ell_\mathrm{sub})$ is given as follows.
\begin{proposition}
    \label{prop:grad_based_opt_achieve_min_SL_before}
    Under \cref{assu:WIRL,assu:WIRL-uniqueness}, the constant $\gamma (\ell_\mathrm{sub})$ defined by \cref{eq:gamma_SL} is positive, and the thickness $\min_\Theta\ell_\mathrm{sub} - \min_\Theta\widetilde{\ell_\mathrm{sub}}$ of $\ell_\mathrm{sub}$ (in the sense of \cref{eq:def_gamma_ell}) is at least this $\gamma (\ell_\mathrm{sub})$.
\end{proposition}
See \cref{sec:proof_prop:grad_based_opt_achieve_min_SL_before} for the proof.

{
\renewcommand{\proofname}{Proof of \cref{theo:grad_based_opt_achieve_min_SL}}%
\begin{proof}
    If $\ell_\mathrm{sub}$ is constant on $\Theta$, then by \cref{prop:argmin_SL_has_inner_pt}(B) we have $\ell_\mathrm{sub} \equiv \min_\Theta\ell_\mathrm{sub} = 0$, so the conclusion holds for every $T \geq 1$. In what follows, assume that $\ell_\mathrm{sub}$ is not constant on $\Theta$.
    By \cref{assu:WIRL,assu:WIRL-uniqueness}, \cref{eq:SL_is_PWLF}, and
    \cref{prop:argmin_SL_has_inner_pt,prop:SL_is_Lipschitz_convex},
    the suboptimality loss satisfies \cref{assu:piecewise_linear_covex}.
    By \cref{assu:gradient_based_opt_independ_ell,assu:gradient_based_opt_Q} and \cref{theo:PL_convex_achieve_min}, if $T \geq Q_{L(\ell_\mathrm{sub}),\theta^1}(\min_\Theta\ell_\mathrm{sub} - \min_\Theta\widetilde{\ell_\mathrm{sub}})$, then $\min_{t=1,\ldots,T}\ell_\mathrm{sub}(\theta^t) = \min_\Theta\ell_\mathrm{sub} = 0$. By \cref{prop:grad_based_opt_achieve_min_SL_before}, we have $\min_\Theta\ell_\mathrm{sub} - \min_\Theta\widetilde{\ell_\mathrm{sub}} \geq \gamma(\ell_\mathrm{sub}) > 0$, and since $Q_{L(\ell_\mathrm{sub}),\theta^1}$ is nonincreasing, $Q_{L(\ell_\mathrm{sub}),\theta^1}(\gamma(\ell_\mathrm{sub})) \geq Q_{L(\ell_\mathrm{sub}),\theta^1}(\min_\Theta\ell_\mathrm{sub} - \min_\Theta\widetilde{\ell_\mathrm{sub}})$. Hence, if $T \geq Q_{L(\ell_\mathrm{sub}),\theta^1}(\gamma(\ell_\mathrm{sub}))$, the conclusion of the theorem follows.
\end{proof}
}

\section{Proof of Lemma~\ref{lem:Psi_set_is_almost_Phi}}
\label{sec:proof_lem:Psi_set_is_almost_Phi}

We introduce the following notation:
\begin{align*}
    \Psi
    &:= \bigcap_{n=1}^N
    \bigcup_{\xi^{(n)} \in Y^{(n)}} \Theta(\xi^{(n)},Y^{(n)}) \, ,
    \\
    \Theta(\xi^{(n)},Y^{(n)})
    &:=
    \left\{\theta\in\Theta \,\middle|\, \forall b\in Y^{(n)}\setminus\{\xi^{(n)}\},\ \theta^{\top}\xi^{(n)}>\theta^{\top}b \right\} \, .
\end{align*}
For any $n=1,\ldots,N$ and any $\xi^{(n)}\in Y^{(n)}$, the set $\Theta(\xi^{(n)},Y^{(n)})$ is open in the relative topology on $\Theta$.
That is, for any $\theta^*\in\Psi$, there exist $\varepsilon(\theta^*)>0$ and points $\xi^{(n)}\in Y^{(n)}$ (for each $n$) such that
\begin{equation}
  \left\{\theta\in\Delta^{d-1} \,\middle|\, \|\theta-\theta^*\|<\varepsilon(\theta^*)\right\}
  \subset \bigcap_{n=1}^N \Theta(\xi^{(n)},Y^{(n)})
  \label{eq:PLW_smaller_PLF_smaller}
\end{equation}
holds.

Next, for $D\subset Y^{(n)}$, define
\[
    \Theta(D,Y^{(n)})
    :=
    \left\{
        \theta\in\Delta^{d-1}
        \,\middle|\,
        \forall a_1,a_2\in D,\ \forall a_3\in Y^{(n)}\setminus D,\ 
        \theta^{\top}a_1=\theta^{\top}a_2>\theta^{\top}a_3
    \right\}.
\]
Then,
\[
    \Delta^{d-1} = \bigcup_{D\subset Y^{(n)}} \Theta(D,Y^{(n)}).
\]
We further define
\begin{align*}
    \mathcal{M}(Y^{(n)})
    &:= \left\{D\subset Y^{(n)} \,\middle|\, \dim \Theta(D,Y^{(n)})=d-1\right\} \, .
\end{align*}

\begin{lemma}
    \label{lem:uniqueness_DinM}
    For any $D\in\mathcal{M}(Y^{(n)})$, the set $D$ is a singleton.
\end{lemma}

\begin{proof}
    Assume that $D$ contains two distinct points, and take $a_1,a_2\in D$.
    If there exists $c\neq 0$ such that $a_1-a_2=(c,\ldots,c)$, then, since $\theta\in\Delta^{d-1}$,
    \[
        0=\theta^{\top}(a_1-a_2)=\theta^{\top}(c,\ldots,c)=c
    \]
    holds, which contradicts $c\neq 0$.
    Hence, we may assume that $a_1-a_2$ and $(1,\ldots,1)$ are linearly independent.
    However,
    \[
        \Theta(D,Y^{(n)})
        \subset
        \left\{\theta\in\mathbb{R}^d \,\middle|\,
        \theta^{\top}(a_1-a_2)=0,\ 
        \theta^{\top}(1,\ldots,1)=1
        \right\},
    \]
    implying that $\dim \Theta(D,Y^{(n)})\leq d-2$.
    This contradicts $D\in\mathcal{M}(Y^{(n)})$.
\end{proof}

\begin{lemma}
    \label{lem:disjoint_vertex_set}
    Let $D_1,D_2\subset Y^{(n)}$.
    If $D_1\neq D_2$, then $\Theta(D_1,Y^{(n)})\cap\Theta(D_2,Y^{(n)})=\emptyset$.
\end{lemma}

\begin{proof}
    Suppose that $D_1\neq D_2$ and that there exists $\theta\in\Theta(D_1,Y^{(n)})\cap\Theta(D_2,Y^{(n)})$.
    By symmetry, we may assume $D_1\setminus D_2\neq\emptyset$.
    Take $a\in D_1\setminus D_2$ and $b\in D_2$.
    Since $\theta\in\Theta(D_2,Y^{(n)})$, we have $\theta^{\top}b>\theta^{\top}a$.
    On the other hand, since $\theta\in\Theta(D_1,Y^{(n)})$, we have $\theta^{\top}b\leq \theta^{\top}a$.
    This is a contradiction.
\end{proof}

{
\renewcommand{\proofname}{Proof of \cref{lem:Psi_set_is_almost_Phi}}%
\begin{proof}
    By \cref{lem:disjoint_vertex_set}, we have
    \[
        \Delta^{d-1} \setminus \bigcup_{D\in\mathcal{M}(Y^{(n)})} \Theta(D,Y^{(n)})
        =
        \bigcup_{\substack{D\subset Y^{(n)} \\ \dim \Theta(D,Y^{(n)}) < d-1}} \Theta(D,Y^{(n)}).
    \]
    If $\dim \Theta(D,Y^{(n)})<d-1$, then the convex set $\Theta(D,Y^{(n)})$ has (Lebesgue) measure zero in $\mathbb{R}^d$ (equivalently, zero $(d-1)$-dimensional measure relative to the simplex).
    Since the power set of $Y^{(n)}$ is finite, it follows that
    \[
        \Delta^{d-1} \setminus \bigcup_{D\in\mathcal{M}(Y^{(n)})} \Theta(D,Y^{(n)})
    \]
    has measure zero.
    Moreover,
    \[
        \Delta^{d-1} \setminus \Psi
        =
        \bigcup_{n=1,\ldots,N}
        \left(
            \Delta^{d-1} \setminus \bigcup_{D\in\mathcal{M}(Y^{(n)})} \Theta(D,Y^{(n)})
        \right),
    \]
    and hence $\Delta^{d-1}\setminus\Psi$ has measure zero.
    This proves the lemma.
\end{proof}
}

\section{Finite-time exact solvability of the DDIOP for MILPs}

\subsection{PSGD (NSS)}
\label{sec:Finite-time_solvability_DDIOP_w_PSGD_NSS}

For PSGD (NSS), the following convergence rate is well known.
\begin{proposition}[{\citealp[e.g.][Theorem 8.25]{Beck-2017-First}}]
    \label{prop:convergence_rate_NSS}
    Let $\Theta$ be a bounded, closed, and convex set, and let $L>0$, $\theta^1\in \Theta$, and $T\in\mathbb{Z}_{\geq 1}$.
    Then, \emph{(A)} for any $L$-Lipschitz convex function $\ell\colon\Theta\to\mathbb{R}$, the sequence $\{\theta^t\}_{t=1}^T$ generated by PSGD (NSS) satisfies
    \[
        \min_{t=1,\ldots,T}\ell(\theta^t)-\min_{\Theta}\ell
        \leq
        \frac{1}{2}\frac{\mathrm{diam}(\Theta)^2}{\sum_{t=1}^T \beta_t}
        +
        \frac{L^2}{2}\frac{\sum_{t=1}^T \beta_t^2}{\sum_{t=1}^T \beta_t}
        \, .
    \]
    In particular, \emph{(B)} there exists a function $Q_{L,\theta^1}\colon \mathbb{R}_{>0}\to\mathbb{R}$ satisfying \cref{assu:gradient_based_opt_Q}.
\end{proposition}

\begin{proposition}
    \label{prop:convergence_rate_NSS_eg}
    Assume that \cref{assu:WIRL,assu:WIRL-uniqueness} hold.
    Let $0 < q <1$ and $\beta >0$, and suppose that the step size is given by $\beta_t = \beta t^{-q}$.
    Define
    \begin{equation}
        Q_{L, \theta^1} (\varepsilon)
        =
        \begin{cases}
            \left( \frac{\mathrm{diam}(\Theta)^2
            +
            \frac{L^2 \beta^2}{1 - 2 q}}{2\beta (1-q)\varepsilon}  \right)^{1/q} , & 0< q <1/2, \\
            \left( \frac{\mathrm{diam}(\Theta)^2
        +
        \frac{2 q L^2 \beta^2}{2 q - 1}}{2\beta (1-q)\varepsilon}  \right)^{1/(1-q)} , & 1/2 < q <1 ,\\
        \max \left( 1, \left( \frac{ \mathrm{diam}(\Theta)^2 + (1 + \log 2) \beta^2 L^2}{\beta \varepsilon } \right)^2 -2 \right) , & q =1/2 \, .
        \end{cases}
        \label{eq:QLtheta1_NSS_eg}
    \end{equation}
    Then $Q_{L,\theta^1}$ satisfies \cref{assu:gradient_based_opt_Q}.
\end{proposition}

By \cref{theo:grad_based_opt_achieve_min_SL} and \cref{prop:convergence_rate_NSS_eg}, we obtain the following corollary.
\begin{corollary}
    \label{cor:convergence_rate_NSS_eg}
    Assume that \cref{assu:WIRL,assu:WIRL-uniqueness} hold.
    Then, if we implement \cref{alg:intention-WIRL-gradual-decay} using PSGD in the present setting, the DDIOP \cref{eq:IOP_linear} is solved exactly within at most
    \begin{equation*}
        \begin{cases}
            \left( \frac{\mathrm{diam}(\Theta)^2
            +
            \frac{L(\ell_{\mathrm{sub}})^2 \beta^2}{1 - 2 q}}{2\beta (1-q) \gamma (\ell_{\mathrm{sub}})}  \right)^{1/q} , & 0< q <1/2, \\
            \left( \frac{\mathrm{diam}(\Theta)^2
        +
        \frac{2 q L(\ell_{\mathrm{sub}})^2 \beta^2}{2 q - 1}}{2\beta (1-q) \gamma (\ell_{\mathrm{sub}})}  \right)^{1/(1-q)} , & 1/2 < q <1 ,\\
        \max \left( 1, \left( \frac{ \mathrm{diam}(\Theta)^2 + (1 + \log 2) \beta^2 L(\ell_{\mathrm{sub}})^2}{\beta \gamma (\ell_{\mathrm{sub}})}  \right)^2 -2 \right) , & q =1/2
        \end{cases}
    \end{equation*}
    iterations.
\end{corollary}

We next prepare the proof of \cref{prop:convergence_rate_NSS_eg}.

\begin{proposition}[{\citealp[Lemmas 8.11 and 8.24]{Beck-2017-First}}]
    \label{prop:update_PSGD}
    Let $\Theta$ be a bounded, closed, and convex set, and let $\theta^1\in\Theta$ and $T\in\mathbb{Z}_{\geq 1}$.
    Let $\{\alpha_t\}_{t=1}^T$ be a step-size sequence.
    Let $\ell\colon\Theta\to\mathbb{R}$ be a Lipschitz continuous convex function.
    Take $\theta^*\in\argmin_{\Theta}\ell$.
    Let $\{\theta^t\}_{t=1}^{T+1}$ be the sequence produced by PSGD.
    Then, for any $t=1,\ldots,T$,
    \begin{align}
        \| \theta^{t+1} - \theta^* \|^2
        & \leq
        \| \theta^{t} - \theta^* \|^2
        - 2 \alpha_t \left( \ell(\theta^t) - \min_{\Theta}\ell \right)
        + \alpha_t^2 \| \nabla \ell(\theta^t) \|^2 ,
        \label{eq:update_PSGD_1}
        \\
        \sum_{t=1}^T \alpha_t \left( \ell(\theta^t) - \min_{\Theta}\ell \right)
        & \leq \frac{1}{2} \| \theta^1 - \theta^* \|^2
        + \frac{1}{2} \sum_{t=1}^T \alpha_t^2 \| \nabla \ell(\theta^t) \|^2
        \, .
        \label{eq:update_PSGD_2}
    \end{align}
\end{proposition}

\begin{proposition}
    \label{prop:sum_SQ_SQS}
    Let $\delta>0$.
    Then, for any $T\in\mathbb{Z}_{\geq 1}$,
    \begin{equation*}
        \frac{\delta + \sum_{t=\lceil T/2 \rceil}^T t^{-1}}{\sum_{t=\lceil T/2 \rceil}^T t^{-1/2}}
        \leq
        \frac{2(\delta + 1 + \log 2)}{\sqrt{T+2}}
        \, .
    \end{equation*}
\end{proposition}

\begin{proof}
    For $T=1$, the inequality holds.

    Let $T\in\mathbb{Z}_{\geq 2}$ and set $k:=\lceil T/2\rceil$.
    Then,
    \begin{align*}
        \sum_{t=k}^T t^{-1}
        &\leq 1 + \int_{k}^T z^{-1}\,dz
        = 1 + \log\left(\frac{T}{k}\right)
        \leq 1 + \log\left(\frac{T}{T/2}\right)
        = 1 + \log 2
        \, .
    \end{align*}
    On the other hand,
    \begin{align*}
        \sum_{t=k}^T t^{-1/2}
        &\geq \int_{k}^{T+1} z^{-1/2}\,dz
        = 2\sqrt{T+1} - 2\sqrt{k}
        \\
        &\geq 2\sqrt{T+1} - 2\sqrt{\tfrac{T+1}{2}}
        = (2-\sqrt{2})\,\sqrt{T+1}
        \geq \frac{\sqrt{T+2}}{2}
        \, .
    \end{align*}
    Here, for the second inequality we used $k=\lceil T/2\rceil\leq (T+1)/2$ (and hence $\sqrt{k}\leq\sqrt{(T+1)/2}=\sqrt{T+1}/\sqrt{2}$). For the last inequality, since both sides are nonnegative, squaring and rearranging shows that it is equivalent to $(24-16\sqrt{2})(T+1)\geq T+2$; because $\sqrt{2}\leq 17/12$ gives $24-16\sqrt{2}\geq 24-\tfrac{68}{3}=\tfrac{4}{3}$, for $T\geq 2$ we have $(24-16\sqrt{2})(T+1)\geq\tfrac{4}{3}(T+1)\geq T+2$, which holds.
    Combining these bounds yields the claim.
\end{proof}

{
\renewcommand{\proofname}{Proof of \cref{prop:convergence_rate_NSS_eg}}%
\begin{proof}
Let $0<q<1$ and $q\neq 1/2$.
For the sum of the step sizes, since $t^{-q}\geq T^{-q}$ (for $t\leq T$),
\begin{equation}
    \sum_{t=1}^T \beta_t = \beta \sum_{t=1}^T t^{-q} \geq \beta T^{1-q} \geq \beta (1-q) T^{1-q}
    \label{eq:zeta_q_not_SR_1}
\end{equation}
holds.
Moreover, since $z\mapsto z^{-2q}$ is monotonically decreasing, for the sum of the squared step sizes, when $0<q<1/2$,
\begin{equation}
    \sum_{t=1}^T \beta_t^2 = \beta^2 \sum_{t=1}^T t^{-2q}
    \leq \beta^2 \int_0^T z^{-2q}\,dz
    = \frac{\beta^2}{1-2q} T^{1-2q},
    \label{eq:zeta_q_not_SR_2}
\end{equation}
and when $1/2<q<1$,
\begin{equation}
    \sum_{t=1}^T \beta_t^2 = \beta^2 \sum_{t=1}^T t^{-2q}
    \leq \beta^2 \left( 1 + \int_1^T z^{-2q}\,dz \right)
    = \beta^2 \left( 1 + \frac{1 - T^{1-2q}}{2q-1} \right)
    \leq \frac{2q}{2q-1}\,\beta^2
    \label{eq:zeta_q_not_SR_3}
\end{equation}
hold (for the latter, we used that $T^{1-2q}\in(0,1]$ since $2q>1$).

When $0<q<1/2$, by \Cref{prop:convergence_rate_NSS}, \Cref{eq:zeta_q_not_SR_1,eq:zeta_q_not_SR_2}, and $q<1-q$ (i.e., $T^{-(1-q)}\leq T^{-q}$),
\begin{align*}
    \min_{t=1,\ldots,T} \ell(\theta^t) - \min_{\Theta}\ell
    &\leq
    \frac{1}{2}\frac{\mathrm{diam}(\Theta)^2}{\sum_{t=1}^T \beta_t}
    +
    \frac{L^2}{2}\frac{\sum_{t=1}^T \beta_t^2}{\sum_{t=1}^T \beta_t} \\
    &\leq
    \frac{\mathrm{diam}(\Theta)^2}{2\beta (1-q)} T^{-(1-q)}
    +
    \frac{L^2\beta}{2(1-q)(1-2q)} T^{-q} \\
    &\leq
    \frac{\mathrm{diam}(\Theta)^2 + \frac{L^2\beta^2}{1-2q}}{2\beta (1-q)} T^{-q}
\end{align*}
holds. Therefore, \Cref{assu:gradient_based_opt_Q} holds for $0<q<1/2$.

When $1/2<q<1$, by \Cref{prop:convergence_rate_NSS}, \Cref{eq:zeta_q_not_SR_1,eq:zeta_q_not_SR_3},
\begin{align*}
    \min_{t=1,\ldots,T} \ell(\theta^t) - \min_{\Theta}\ell
    &\leq
    \frac{1}{2}\frac{\mathrm{diam}(\Theta)^2}{\sum_{t=1}^T \beta_t}
    +
    \frac{L^2}{2}\frac{\sum_{t=1}^T \beta_t^2}{\sum_{t=1}^T \beta_t} \\
    &\leq
    \frac{\mathrm{diam}(\Theta)^2}{2\beta (1-q)} T^{-(1-q)}
    +
    \frac{L^2 q\beta}{(2q-1)(1-q)} T^{-(1-q)} \\
    &=
    \frac{\mathrm{diam}(\Theta)^2 + \frac{2q L^2\beta^2}{2q-1}}{2\beta (1-q)} T^{-(1-q)}
\end{align*}
holds. Therefore, \Cref{assu:gradient_based_opt_Q} holds for $1/2<q<1$.

Finally, consider $q=1/2$.
Summing \cref{eq:update_PSGD_1} over $t=\lceil T/2\rceil,\lceil T/2\rceil+1,\ldots,T$ yields
\begin{align*}
    \sum_{t=\lceil T/2\rceil}^T 2\beta_t \left( \ell(\theta^t) - \min_{\Theta}\ell \right)
    &\leq
    \| \theta^{\lceil T/2\rceil} - \theta^* \|^2
    - \| \theta^{T+1} - \theta^* \|^2
    + \sum_{t=\lceil T/2\rceil}^T \beta_t^2 \| \nabla \ell(\theta^t) \|^2 \\
    &\leq
    \mathrm{diam}(\Theta)^2
    + \sum_{t=\lceil T/2\rceil}^T \frac{\beta^2}{t} \| \nabla \ell(\theta^t) \|^2 \\
    &\leq
    \mathrm{diam}(\Theta)^2
    + \beta^2 L^2 \sum_{t=\lceil T/2\rceil}^T \frac{1}{t}
    \, .
\end{align*}
On the other hand,
\[
    \sum_{t=\lceil T/2\rceil}^T 2\beta_t \left( \ell(\theta^t) - \min_{\Theta}\ell \right)
    \geq
    2\beta \left( \sum_{t=\lceil T/2\rceil}^T t^{-1/2} \right)
    \left( \min_{t=1,\ldots,T}\ell(\theta^t) - \min_{\Theta}\ell \right).
\]
Therefore,
\begin{align*}
    \min_{t=1,\ldots,T}\ell(\theta^t) - \min_{\Theta}\ell
    &\leq
    \frac{\mathrm{diam}(\Theta)^2 + \beta^2 L^2\sum_{t=\lceil T/2\rceil}^T t^{-1}}
    {2\beta \left( \sum_{t=\lceil T/2\rceil}^T t^{-1/2} \right)} \\
    &=
    \frac{\beta L^2}{2}
    \frac{\frac{\mathrm{diam}(\Theta)^2}{\beta^2 L^2} + \sum_{t=\lceil T/2\rceil}^T t^{-1}}
    {\sum_{t=\lceil T/2\rceil}^T t^{-1/2}}
    \, .
\end{align*}
By \cref{prop:sum_SQ_SQS},
\begin{align*}
    \frac{\beta L^2}{2}
    \frac{\frac{\mathrm{diam}(\Theta)^2}{\beta^2 L^2} + \sum_{t=\lceil T/2\rceil}^T t^{-1}}
    {\sum_{t=\lceil T/2\rceil}^T t^{-1/2}}
    &\leq
    \frac{\beta L^2}{2}\frac{2\left(\frac{\mathrm{diam}(\Theta)^2}{\beta^2 L^2} + 1 + \log 2\right)}{\sqrt{T+2}} \\
    &=
    \frac{\mathrm{diam}(\Theta)^2 + (1+\log 2)\beta^2 L^2}{\beta}(T+2)^{-1/2}
    \, .
\end{align*}
Hence, \cref{assu:gradient_based_opt_Q} holds for $q=1/2$ as well.
\end{proof}
}

\begin{remark}
    The proof idea for the case $q=1/2$ in \cref{prop:convergence_rate_NSS_eg} follows \citet[Theorem 8.30]{Beck-2017-First}.
\end{remark}

\subsection{PSGD (NSL)}
\label{sec:Finite-time_solvability_DDIOP_w_PSGD_NSL}

The convergence rate for PSGD (NSL) is as follows.
\begin{proposition}
    \label{prop:convergence_rate_NSL}
    Let $\Theta$ be a bounded, closed, and convex set, and let $L>0$, $\theta^1\in\Theta$, and $T\in\mathbb{Z}_{\geq 1}$.
    Then, \emph{(A)} for any $L$-Lipschitz convex function $\ell\colon\Theta\to\mathbb{R}$, the sequence $\{\theta^t\}_{t=1}^T$ produced by PSGD (NSL) satisfies
    \[
        \min_{t=1,\ldots,T} \ell(\theta^t) - \min_{\Theta} \ell
        \leq
        \frac{L}{2}\frac{\mathrm{diam}(\Theta)^2}{\sum_{t=1}^T \beta_t}
        +
        \frac{L}{2}\frac{\sum_{t=1}^T \beta_t^2}{\sum_{t=1}^T \beta_t}
        \, .
    \]
    In particular, \emph{(B)} there exists a function $Q_{L,\theta^1}\colon\mathbb{R}_{>0}\to\mathbb{R}$ satisfying \cref{assu:gradient_based_opt_Q}.
\end{proposition}

\begin{proof}
    (A) Substituting the NSL step size $\alpha_t=\beta_t/\|\nabla \ell(\theta^t)\|$ into \cref{eq:update_PSGD_2} yields
    \[
        \sum_{t=1}^T \frac{\beta_t}{\| \nabla \ell(\theta^t) \|} \left( \ell(\theta^t) - \min_{\Theta} \ell \right)
        \leq \frac{1}{2} \| \theta^1 - \theta^* \|^2
        + \frac{1}{2} \sum_{t=1}^T \beta_t^2
        \, .
    \]
    On the other hand, since $\ell$ is $L$-Lipschitz, we have $\|\nabla \ell(\theta^t)\|\leq L$, and thus
    \begin{align*}
        \sum_{t=1}^T \frac{\beta_t}{\| \nabla \ell(\theta^t) \|} \left( \ell(\theta^t) - \min_{\Theta} \ell \right)
        &\geq
        \sum_{t=1}^T \frac{\beta_t}{\| \nabla \ell(\theta^t) \|} \left( \min_{t=1,\ldots,T} \ell(\theta^t) - \min_{\Theta} \ell \right) \\
        &\geq
        \frac{1}{L}\left( \sum_{t=1}^T \beta_t \right)\left( \min_{t=1,\ldots,T} \ell(\theta^t) - \min_{\Theta} \ell \right).
    \end{align*}
    Combining the two displays proves (A).

    (B) By the definition of NSL, we have $\sum_{t=1}^{\infty}\beta_t=\infty$.
    Hence, the right-hand side of (A) converges to $0$ as $T\to\infty$, which implies the existence of $Q_{L,\theta^1}$ satisfying \cref{assu:gradient_based_opt_Q}.
\end{proof}

\begin{proposition}
    \label{prop:convergence_rate_NSL_eg}
    Consider the same setting as in \cref{prop:convergence_rate_NSL}.
    Let $0<q<1$ and $\beta>0$, and suppose that the step length is given by $\beta_t=\beta t^{-q}$.
    Define
    \[
        Q_{L,\theta^1}(\varepsilon)
        =
        \begin{cases}
            \left( \frac{L(\mathrm{diam}(\Theta)^2 + \frac{\beta^2}{1-2q})}{2\beta (1-q)\varepsilon}  \right)^{1/q} , & 0< q <1/2, \\
            \left( \frac{L(\mathrm{diam}(\Theta)^2 + \frac{2q\beta^2}{2q-1})}{2\beta (1-q)\varepsilon} \right)^{1/(1-q)} , & 1/2 < q <1 ,\\
            \max \left( 1, \left( \frac{ L(\mathrm{diam}(\Theta)^2 + (1 + \log 2) \beta^2 )}{\beta \varepsilon}  \right)^2 -2 \right) , & q =1/2 \, .
        \end{cases}
    \]
    Then $Q_{L,\theta^1}$ satisfies \cref{assu:gradient_based_opt_Q}.
\end{proposition}

\begin{proof}
    Suppose $0<q < 1$ and $q \neq 1/2$.
    Proceeding as in the proof of \cref{prop:convergence_rate_NSS_eg}, we obtain
    \[
        \min_{t = 1 , \ldots, T} \ell (\theta^t ) - \min_{\Theta} \ell
        \leq
        \begin{cases}
            \frac{L (\mathrm{diam}(\Theta)^2 + \frac{\beta^2}{1 - 2 q})}{2\beta (1-q)} T^{-q} , & 0< q <1/2, \\
            \frac{L (\mathrm{diam}(\Theta)^2 + \frac{2 q \beta^2}{2 q - 1})}{2\beta (1-q)} T^{-(1-q)} , & 1/2 < q <1 .
        \end{cases}
    \]
    This implies the claim for $0<q<1$ with $q\neq 1/2$.

    Now consider $q=1/2$.
    Again arguing as in the proof of \cref{prop:convergence_rate_NSS_eg}, we have
    \begin{align}
        \min_{t=1, \ldots, T}\ell (\theta^t ) - \min_{\Theta} \ell
        &\leq
        \frac{L\mathrm{diam}(\Theta)^2
        + \beta^2 L\sum_{t=\lceil T/2 \rceil}^T t^{-1} }{2 \beta \left( \sum_{t=\lceil T/2 \rceil}^T t^{-1/2} \right)}
        \notag\\
        &=
        \frac{\beta L}{2}
        \frac{\frac{\mathrm{diam}(\Theta)^2}{\beta^2}
        + \sum_{t=\lceil T/2 \rceil}^T t^{-1} }{\sum_{t=\lceil T/2 \rceil}^T t^{-1/2}}
        \, .
        \label{eq:NSL_q_half_bound}
    \end{align}
    By \cref{prop:sum_SQ_SQS},
    \begin{align}
        \frac{\beta L}{2}
        \frac{\frac{\mathrm{diam}(\Theta)^2}{\beta^2}
        + \sum_{t=\lceil T/2 \rceil}^T t^{-1} }{\sum_{t=\lceil T/2 \rceil}^T t^{-1/2}}
        &\leq
        \frac{\beta L}{2} \frac{2 \left(\frac{\mathrm{diam}(\Theta)^2}{\beta^2} + 1 + \log 2\right)}{\sqrt{T+2}}
        \notag\\
        &=
        \frac{ L}{\beta} \left( \mathrm{diam}(\Theta)^2 + (1 + \log 2) \beta^2 \right) (T+2)^{-1/2}
        \, .
        \notag
    \end{align}
    Hence, the claim also holds for $q=1/2$.
\end{proof}

By \cref{theo:grad_based_opt_achieve_min_SL} and \cref{prop:convergence_rate_NSL_eg}, we obtain the following corollary.
\begin{corollary}
    \label{cor:convergence_rate_NSL_eg}
    Assume the setting of \cref{prop:convergence_rate_NSL_eg}, and suppose that \cref{assu:WIRL,assu:WIRL-uniqueness} hold.
    Then, if we implement \cref{alg:intention-WIRL-gradual-decay} using PSGD under the present setting, the DDIOP \cref{eq:IOP_linear} is solved exactly within at most
    \begin{equation*}
        \begin{cases}
            \left( \frac{L(\ell_{\mathrm{sub}})\left(\mathrm{diam}(\Theta)^2 + \frac{\beta^2}{1-2q}\right)}{2\beta (1-q)\gamma(\ell_{\mathrm{sub}})}  \right)^{1/q} , & 0< q <1/2, \\
            \left( \frac{L(\ell_{\mathrm{sub}})\left(\mathrm{diam}(\Theta)^2 + \frac{2q\beta^2}{2q-1}\right)}{2\beta (1-q)\gamma(\ell_{\mathrm{sub}})} \right)^{1/(1-q)} , & 1/2 < q <1 ,\\
            \max \left( 1, \left( \frac{L(\ell_{\mathrm{sub}})\left(\mathrm{diam}(\Theta)^2 + (1 + \log 2) \beta^2\right)}{\beta \gamma(\ell_{\mathrm{sub}})}  \right)^2 -2 \right) , & q =1/2
        \end{cases}
    \end{equation*}
    iterations.
\end{corollary}

\section{Characterization of Piecewise Linear Mappings and Polyhedra}
\label{sec:PWLF}

\begin{definition}
    A bounded closed convex set $P$ is called a convex polytope if $P$ is the convex hull of a finite set.
    A set $P \subset \mathbb{R}^m$ is said to be a finite union of polyhedra if $P$ can be represented as the union of finitely many convex polytopes.
\end{definition}

\begin{definition}
    A mapping $f\colon\mathbb{R}^m\to\mathbb{R}$ is called an affine function if there exist $a_1,\ldots,a_m\in\mathbb{R}$ and $b\in\mathbb{R}$ such that,
    for any $x=(x_1,\ldots,x_m)\in\mathbb{R}^m$,
    \[
        f(x)=a_1x_1+\cdots+a_mx_m+b
    \]
    holds.
    A mapping $f\colon\mathbb{R}^m\to\mathbb{R}^n$ is called an affine map if every component $f_i$ of $f$ is an affine function.
\end{definition}

\begin{definition}
    Let $P\subset\mathbb{R}^m$ be a closed convex set.
    A function $f\colon P\to\mathbb{R}$ is said to be piecewise linear if there exists a finite family $\mathcal{Q}$ of convex polytopes such that
    \[
        P=\bigcup_{Q\in\mathcal{Q}} Q
    \]
    and, for every $Q\in\mathcal{Q}$, the restriction $f\rest{Q}$ is an affine function.
    A mapping $f\colon P\to\mathbb{R}^n$ is said to be piecewise linear if every component $f_i$ of $f$ is piecewise linear.
\end{definition}

\begin{definition}
    Let $P$ be a closed convex set.
    A mapping $f\colon P\to\mathbb{R}^n$ is said to be polyhedral if its graph
    \[
        \Gamma(f)=\{(x,f(x))\mid x\in P\}
    \]
    is a finite union of polyhedra.
\end{definition}

In this setting, piecewise linear functions can be characterized as follows.
\begin{proposition}
\label{prop:PolyhedralMap}
    Assume that the domain $\dom f$ of a mapping $f$ is a convex polytope.
    Then the following statements are equivalent.
    (A) The mapping $f$ is polyhedral.
    (B) The mapping $f$ is piecewise linear.
    (C) The mapping $f$ can be represented as a composition of an affine map, $\min$, and $\max$.
\end{proposition}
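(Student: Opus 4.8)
The plan is to prove the three statements equivalent by establishing the cycle $(1) \Rightarrow (2) \Rightarrow (3) \Rightarrow (1)$, using the two results quoted just before the statement for the first two arrows and supplying a direct argument on the graph for the last.

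First I would dispatch $(1) \Rightarrow (2)$. Since $\dom h$ is assumed to be a convex polyhedron, it is in particular convex, so \cite{stallings1967lectures}*{Theorem 2.2.4} applies verbatim and shows that a polyhedral $h$ is piecewise linear. Next, $(2) \Rightarrow (3)$ is the forward direction of \cite{ovchinnikov2002max}*{Theorem 2.1, \S 3}, which characterizes piecewise linear maps on a convex polyhedral domain as precisely the compositions of affine maps with $\min$ and $\max$; here I would simply check that the hypotheses of that theorem are met, which holds because $\dom h$ is a convex polyhedron.

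The substantive step is $(3) \Rightarrow (1)$, which closes the cycle and is where the real work lies. A map given as a composition of affine maps, $\min$, and $\max$ is piecewise linear by the converse half of \cite{ovchinnikov2002max}*{Theorem 2.1, \S 3}, so it suffices to show that any piecewise linear $h$ on the convex polyhedron $\dom h$ has polyhedral graph. To do this I would take a finite subdivision of $\dom h$ into convex polyhedral cells $P_1, \ldots , P_r$ on each of which $h$ agrees with an affine map $x \mapsto A_i x + b_i$. The graph is then the finite union $\bigcup_{i=1}^r \{ (x, A_i x + b_i) : x \in P_i \}$, and each summand is the image of the polyhedron $P_i$ under an affine embedding, hence again a convex polyhedral cell. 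Continuity of $h$ forces the affine pieces to agree on the shared faces of the $P_i$, so these cells fit together into a polyhedral complex whose underlying set is the graph; thus the graph is a polyhedron in the sense of \cite{stallings1967lectures}, giving $(3) \Rightarrow (1)$.

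The main obstacle is precisely this last gluing: a finite union of convex cells need not be a polyhedron, so one must verify that the affine images of the $P_i$ meet along common faces and therefore assemble into a genuine polyhedral complex rather than an arbitrary union. This is where convexity of $\dom h$ and the continuity built into piecewise linearity are indispensable, guaranteeing the matching along faces; once that is confirmed, the remaining bookkeeping is routine PL topology.
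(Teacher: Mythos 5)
Your proof is correct, and your arrows $(1)\Rightarrow(2)$ and $(2)\Rightarrow(3)$ coincide with the paper's (the paper additionally spells out the bookkeeping you gloss over in $(2)\Rightarrow(3)$: it defines the class $\mathfrak{A}$ generated by affine maps and $\mathrm{min}_2$, and shows the $\min_{j\in J_i}\max_{k\in K_{ij}} g_{ijk}$ representation from Ovchinnikov's theorem really is such a composition, building $\mathrm{min}_k$ inductively and $\mathrm{max}_k$ via negation). Where you genuinely diverge is the closing arrow $(3)\Rightarrow(1)$. The paper never routes back through piecewise linearity: it checks that the generators are polyhedral --- affine maps trivially, and $\mathrm{min}_2$ by splitting its graph over each convex cell $Q$ into the two pieces cut out by $x_1\geq x_2$ and $x_2\geq x_1$, each an affine image of a polyhedron --- and then invokes Stallings' Proposition 1.6.9 that a composition of polyhedral maps is polyhedral. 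You instead argue: composition of affine/$\min$/$\max$ $\Rightarrow$ piecewise linear $\Rightarrow$ the graph is the finite union $\bigcup_{i=1}^r \{(x,A_ix+b_i): x\in P_i\}$ of affine images of convex cells, hence a polyhedron. This is a legitimate and arguably more self-contained alternative (it is in effect a direct proof of $(2)\Rightarrow(1)$, which the paper never states), but it needs one repair: the ``converse half'' of Ovchinnikov's theorem only covers the canonical min-max form, not arbitrary compositions of affine maps, $\min$, and $\max$, so you must supply the standard fact that compositions of piecewise linear maps are piecewise linear --- exactly the closure-under-composition work that Stallings' Proposition 1.6.9 performs for the paper at the polyhedral level. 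Finally, the ``main obstacle'' you identify is vacuous: under the paper's Stallings-style definition, a polyhedron is simply a (finite) union of convex polyhedra, with no polyhedral-complex or face-matching requirement, so once the graph is exhibited as a finite union of affine images of the $P_i$ you are done; the gluing-along-faces discussion, while harmless, can be deleted.
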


It is also known that convex piecewise linear functions can be characterized as follows.
\begin{proposition}[{\citealt[Corollary 19.1.2]{rockafellar1970conjugate}, cf.\ \cref{prop:PolyhedralMap}}]
\label{prop:convexPolyhedralMap}
    For a function $\ell\colon\Theta\to\mathbb{R}$, the following statements are equivalent.
    (A) The graph of $\ell$ is a convex polytope.
    (B) The function $\ell$ is convex and piecewise linear.
    (C) There exists a finite set $I$ such that, for any $i\in I$, there exist $a^i\in\mathbb{R}^d$ and $b_i\in\mathbb{R}$ satisfying
    \[
        \ell(\theta)=\max_{i\in I}\left(a^{i\top}\theta+b_i\right)
        .
    \]
\end{proposition}

In order to prove \cref{prop:PolyhedralMap},
we first review definitions and propositions related to polyhedral mappings.

\begin{definition}
    Let $\mathfrak{A}$ be the family of mappings generated by finite compositions of affine maps and
    \[
        \mathrm{min}_2(x_1,x_2,x_3,\ldots,x_m):=(\min(x_1,x_2),x_3,\ldots,x_m)
        .
    \]
\end{definition}

\begin{proposition}[{\citealp[Proposition 1.6.9]{stallings1967lectures}}]
    \label{prop:composition_PM_is_PM}
    The composition of two polyhedral mappings is polyhedral.
\end{proposition}

\begin{proposition}[{\citealp[Theorem 2.2.4]{stallings1967lectures}}]
    \label{prop:PM_is_PL}
    Every polyhedral mapping is piecewise linear.
\end{proposition}

\begin{proposition}[{\citealp[Theorem 2.1, \S 3]{ovchinnikov2002max}}]
    \label{prop:PL_has_minmax_rep}
    Let $f$ be a piecewise linear mapping.
    Then each component $f_i$ of $f$ admits the representation
    \[
        f_i(x)=\min_{j\in J_i}\max_{k\in K_{ij}} g_{ijk}(x)
        \, ,
    \]
    where $g_{ijk}$ are affine functions, and $J_i$ and $K_{ij}$ are finite sets.
\end{proposition}

{
\renewcommand{\proofname}{Proof of \cref{prop:PolyhedralMap}}%
\begin{proof}
    (A)$\Rightarrow$(B) follows from \cref{prop:PM_is_PL}.
    
    (B)$\Rightarrow$(C)
    Let $f$ be a piecewise linear mapping.
    By \cref{prop:PL_has_minmax_rep}, each component $f_i$ of $f$ can be written as
    \[
        f_i(x)=\min_{j\in J_i}\max_{k\in K_{ij}} g_{ijk}(x)
    \]
    , where $g_{ijk}$ are affine functions.
    On the other hand, for any $k\in\mathbb{Z}_{\geq 2}$,
    \begin{align*}
        \mathrm{min}_{k+1}(x_1,\ldots,x_{k+1},x_{k+2},\ldots,x_m)
        &:=
        \mathrm{min}_{2}\Bigl(
            \mathrm{min}_{k}(x_1,\ldots,x_{k}),
            x_{k+1},
            x_{k+2},\ldots,x_m
        \Bigr)
        \, , \\
        \mathrm{max}_{k}(x_1,\ldots,x_{k},x_{k+1},\ldots,x_m)
        &:=
        -
        \mathrm{min}_{k}(-x_1,\ldots,-x_{k},-x_{k+1},\ldots,-x_m)
    \end{align*}
    belong to $\mathfrak{A}$.
    Therefore, $\min_{j\in J_i}$ and $\max_{k\in K_{ij}}$ can be represented as elements of $\mathfrak{A}$.
    Hence, $f\in\mathfrak{A}$.
    
    (C)$\Rightarrow$(A)
    It suffices to show that affine maps and $\mathrm{min}_2$ are polyhedral; then, by \cref{prop:composition_PM_is_PM},
    $\mathfrak{A}$ is contained in the class of polyhedral mappings.
    
    It is immediate that affine maps are polyhedral.
    Next, we show that $\mathrm{min}_2$ is polyhedral.
    Let $P$ be a finite union of convex polytopes. Then there exists a finite family $\mathcal{Q}$ of convex polytopes such that
    \[
        P=\bigcup_{Q\in\mathcal{Q}} Q
    \]
    holds.
    The graph of $\mathrm{min}_2\rest{P}$ is given by
    \[
        \Gamma\left(\mathrm{min}_2\rest{P}\right)
        =
        \Gamma\left(\mathrm{min}_2\rest{\bigcup_{Q\in\mathcal{Q}} Q}\right)
        =
        \bigcup_{Q\in\mathcal{Q}} \Gamma\left(\mathrm{min}_2\rest{Q}\right)
        .
    \]
    For each $Q\in\mathcal{Q}$,
    \begin{align*}
        & \Gamma\left(\mathrm{min}_2\rest{Q}\right)
        \\
        &=
        \{(x,\mathrm{min}_2(x))\mid x\in Q\} \\
        &=
        \{(x,\mathrm{min}_2(x))\mid x\in Q,\ x_1\geq x_2\}
        \cup
        \{(x,\mathrm{min}_2(x))\mid x\in Q,\ x_2\geq x_1\} \\
        &=
        \{(x,x_1,x_3,\ldots,x_m)\mid x\in Q,\ x_1\geq x_2\}
        \cup
        \{(x,x_2,x_3,\ldots,x_m)\mid x\in Q,\ x_2\geq x_1\}
    \end{align*}
    holds.
    Since affine maps send polyhedra to polyhedra, $\Gamma\left(\mathrm{min}_2\rest{Q}\right)$ is a polyhedron.
    Hence, $\mathrm{min}_2$ is polyhedral.
\end{proof}
}

\begin{proposition}
    \label{prop:polyhedral_map_rest_polyhedra}
    Let $X^\prime\subset X\subset\mathbb{R}^m$ be polyhedra, and let $f\colon X\to\mathbb{R}^d$ be a polyhedral mapping.
    Then $f\rest{X^\prime}$ is a polyhedral mapping.
\end{proposition}

\begin{proof}
    Define the graph of the restriction $f\rest{X^\prime}\colon X^\prime\to\mathbb{R}^d$ by
\[
\Gamma(f\rest{X^\prime})
:=\{(x,y)\in\mathbb{R}^{m+d}\mid x\in X^\prime,\ y=f(x)\}
\]
Then,
\[
\Gamma(f\rest{X^\prime})
=\Gamma(f)\cap\bigl(X^\prime\times\mathbb{R}^d\bigr)
\]
holds.
By assumption, $\Gamma(f)$ is a polyhedron; moreover, since $X^\prime$ is a polyhedron,
$X^\prime\times\mathbb{R}^d$ is also a polyhedron.
Therefore, since the intersection of polyhedra is a polyhedron,
$\Gamma(f\rest{X^\prime})$ is a polyhedron.
\end{proof}

\begin{proposition}
    \label{prop:polyhedral_map_image_polyhedron}
    Let $ X\subset\mathbb{R}^m$ be a polyhedron, and let $f\colon X\to\mathbb{R}^d$ be a polyhedral mapping.
    Then $f(X)\subset\mathbb{R}^d$ is a polyhedron.
\end{proposition}

\begin{proof}
    Define the projection $\pi\colon\mathbb{R}^{m+d}\to\mathbb{R}^d$ by $\pi(x,y)=y$.
    By definition,
    \[
        f(X)=\{f(x)\mid x\in X\}=\{y\in\mathbb{R}^d\mid \exists x\in X,\ (x,y)\in\Gamma(f)\}=\pi(\Gamma(f))
    \]
    holds.
    Therefore, it suffices to show that if $\Gamma(f)$ is a polyhedron, then its linear image $\pi(\Gamma(f))$ is a polyhedron.
    
    First, since $\Gamma(f)$ is a polyhedron, there exist finitely many convex polytopes $\{P_j\}_{j=1}^J$ such that
    \[
        \Gamma(f)=\bigcup_{j=1}^J P_j
        .
    \]
    Hence,
    \[
        f(X)=\pi(\Gamma(f))=\pi\Bigl(\bigcup_{j=1}^J P_j\Bigr)=\bigcup_{j=1}^J \pi(P_j)
    \]
    and thus it suffices to show that each $\pi(P_j)$ is a convex polytope.
    
    Fix an arbitrary $j$ and set $P:=P_j$.
    Since $P$ is a convex polytope, there exist points $\{ a^i \}_{i \in I}$ such that
    \[
        P = \mathrm{Conv} (\{ a^i | i \in I \})
        .
    \]
    Since $\pi$ is linear,
    \[
        \pi (P ) = \mathrm{Conv} (\{  \pi (a^i ) | i \in I \})
        .
    \]
    Therefore, $\pi (P)$ is a convex polytope, and the proposition follows.
\end{proof}

\begin{proposition}
    \label{prop:image_polyhedral_map_on_polyhedron}
    Let $X^\prime\subset X\subset\mathbb{R}^m$ be polyhedra, and let $f\colon X\to\mathbb{R}^d$ be a polyhedral mapping.
    Then $f (X^\prime) $ is a polyhedron.
\end{proposition}

\begin{proof}
    This follows from \cref{prop:polyhedral_map_rest_polyhedra,prop:polyhedral_map_image_polyhedron}.
\end{proof}

\begin{proposition}
    \label{prop:image_polyhedral_map_on_polyhedra}
    Let $X^\prime\subset X\subset\mathbb{R}^m$ be a finite union of polyhedra, and let $f\colon X\to\mathbb{R}^d$ be a polyhedral mapping.
    Then $f(X^\prime)$ is a finite union of polyhedra.
\end{proposition}

\begin{proof}
    Suppose that $X^\prime$ can be written, using a family of polyhedra $\{X^i\}_{i\in I}$, as
    \[
        X^\prime=\bigcup_{i\in I} X^i
        .
    \]
    Applying $f$ to both sides yields
    \[
        f(X^\prime)=f\left(\bigcup_{i\in I} X^i\right)=\bigcup_{i\in I} f(X^i)
        .
    \]
    By \cref{prop:image_polyhedral_map_on_polyhedron}, each $f(X^i)$ is a polyhedron; hence $f(X^\prime)$ is a finite union of polyhedra.
\end{proof}

\begin{definition}
    \label{defi:vertex}
    Let $X$ be a finite union of polyhedra.
    A set $Y$ is said to be the vertex set of $X$ if $Y$ is the vertex set of the convex hull $\mathrm{Conv} X$.
\end{definition}

\section{Technical Lemmas: Finite-time exact solvability of the DDIOP for MILPs}

\subsection{A minimum-attainment theorem via gradient-based optimization for convex piecewise-linear functions}
\label{sec:grad_based_opt_for_PWLF_appx}

In this subsection, we prove \cref{theo:PL_convex_achieve_min}.

\begin{proposition}
    \label{prop:PL_convex_eq_PLT_outside_min}
    Under \cref{assu:piecewise_linear_covex}, for any $\theta \in \Theta \setminus \argmin_{\Theta} \ell$, the following hold:
    \emph{(A)} $\ell (\theta) = \widetilde{\ell} (\theta)$;
    \emph{(B)} $\partial \ell (\theta) = \partial \widetilde{\ell} (\theta)$.
\end{proposition}

\begin{proof}
    (A)
    By definition, $\ell \geq \widetilde{\ell}$.
    Fix $\theta \in \Theta \setminus \argmin_{\Theta} \ell$, and take $j\in \argmax_{i \in I} (a^{i \top} \theta + b_i)$.
    Suppose $P_{V_0}a^{j} = 0$. Then the affine function $a^{j\top}\theta + b_j$ is constant on $V$ (with value $\ell(\theta)$), so that for every $\theta' \in \Theta \subset V$ we have $\ell(\theta') \geq a^{j\top}\theta' + b_j = a^{j\top}\theta + b_j = \ell(\theta)$, which means that $\theta$ attains the minimum of $\ell$ on $\Theta$, a contradiction. Hence, $P_{V_0}a^{j} \neq 0$.
    Therefore,
    \begin{align*}
        \ell (\theta)
        &= \max_{i \in I} (a^{i \top } \theta + b_i)
        = a^{j \top } \theta + b_j \\
        &= \max_{i \in I \,:\, P_{V_0}a^i \neq 0 } \left( a^{i \top} \theta + b_i \right)
        = \widetilde{\ell} (\theta).
    \end{align*}

    (B)
    Let $\mathrm{Conv}$ denote the convex hull.
    Then,
    \begin{align*}
        \partial \ell (\theta)
        &=
        \mathrm{Conv}\left\{
            a^i
            \,\middle|\,
            a^{i \top } \theta + b_i = \max_{i \in I} (a^{i \top } \theta + b_i )
        \right\},
        \\
        \partial \widetilde{\ell} (\theta)
        &=
        \mathrm{Conv}\left\{
            a^i
            \,\middle|\,
            P_{V_0}a^i \neq 0,\ 
            a^{i \top } \theta + b_i = \max_{i \in I} (a^{i \top } \theta + b_i )
        \right\}.
    \end{align*}
    By the same reasoning as in (A), at $\theta \notin \argmin_\Theta \ell$ every index $i$ attaining the maximum $\max_{i \in I}(a^{i \top}\theta + b_i)$ satisfies $P_{V_0}a^i \neq 0$. Therefore,
    \[
        \left\{
            a^i
            \,\middle|\,
            a^{i \top } \theta + b_i = \max_{i \in I} (a^{i \top } \theta + b_i )
        \right\}
        =
        \left\{
            a^i
            \,\middle|\,
            P_{V_0}a^i \neq 0,\ 
            a^{i \top } \theta + b_i = \max_{i \in I} (a^{i \top } \theta + b_i )
        \right\}.
    \]
    and thus $\partial \ell (\theta) = \partial \widetilde{\ell} (\theta)$.
\end{proof}

\begin{proposition}
    \label{prop:gamma_is_positive}
    Under \cref{assu:piecewise_linear_covex}, one has $\gamma (\ell ) > 0$.
\end{proposition}

\begin{proof}
    By definition, $\ell \geq \widetilde{\ell}$ (here $\widetilde{\ell}$ is the $\max$ over a subset of the affine functions defining $\ell$), and $\gamma(\ell) = \min_{\Theta}\ell - \min_{\Theta}\widetilde{\ell} \geq 0$.

    Since $\argmin_{\Theta}\ell$ has a relative interior point in $V$, take such a relative interior point $\theta^*$ (relative to $V$). We show that $a^{i\top}\theta^* + b_i < \min_{\Theta}\ell$ holds for every $i$ with $P_{V_0}a^i \neq 0$. Suppose, to the contrary, that $a^{i\top}\theta^* + b_i = \min_{\Theta}\ell$. Since $P_{V_0}a^i \neq 0$, the vector $u := P_{V_0}a^i \in V_0$ satisfies $a^{i\top}u = (P_{V_0}a^i)^\top u = \|P_{V_0}a^i\|^2 > 0$. As $\theta^*$ is a relative interior point of $\argmin_{\Theta}\ell$ (relative to $V$), for all sufficiently small $\varepsilon > 0$ we have $\theta^* + \varepsilon u \in \Theta$ and $\theta^* + \varepsilon u \in \argmin_{\Theta}\ell$, that is, $\ell(\theta^* + \varepsilon u) = \min_{\Theta}\ell$. On the other hand,
    \[
        \ell(\theta^* + \varepsilon u) \geq a^{i\top}(\theta^* + \varepsilon u) + b_i = (a^{i\top}\theta^* + b_i) + \varepsilon\, a^{i\top}u > \min_{\Theta}\ell,
    \]
    a contradiction. Hence, $a^{i\top}\theta^* + b_i < \min_{\Theta}\ell$ for every $i$ with $P_{V_0}a^i \neq 0$.

    Since \cref{assu:piecewise_linear_covex} assumes that $\ell$ is not constant on $\Theta$, the index set $\{i \in I \mid P_{V_0}a^i \neq 0\}$ of $\widetilde{\ell}$ is nonempty. Therefore,
    \[
        \min_{\Theta}\widetilde{\ell} \leq \widetilde{\ell}(\theta^*) = \max_{i \,:\, P_{V_0}a^i \neq 0} (a^{i\top}\theta^* + b_i) < \min_{\Theta}\ell,
    \]
    and we obtain $\gamma(\ell) = \min_{\Theta}\ell - \min_{\Theta}\widetilde{\ell} > 0$.

\end{proof}

\begin{proposition}
    \label{prop:argmin_PL}
    Under \cref{assu:piecewise_linear_covex},
    \[
        \argmin_{\Theta} \ell
        =
        \left\{
            \theta \in \Theta
        \,\middle|\,
            \widetilde{\ell} (\theta) \leq \min_{\Theta} \widetilde{\ell} + \gamma (\ell)
        \right\}.
    \]
\end{proposition}

\begin{proof}
    Since $\gamma(\ell) = \min_{\Theta}\ell - \min_{\Theta}\widetilde{\ell}$, we have $\min_{\Theta}\widetilde{\ell} + \gamma(\ell) = \min_{\Theta}\ell$, so it suffices to show $\argmin_{\Theta}\ell = \{\theta\in\Theta \mid \widetilde{\ell}(\theta) \leq \min_{\Theta}\ell\}$.
    ($\subseteq$) Let $\theta\in\argmin_{\Theta}\ell$. Since $\ell\geq\widetilde{\ell}$ by definition, $\widetilde{\ell}(\theta)\leq\ell(\theta)=\min_{\Theta}\ell$.
    ($\supseteq$) Let $\theta\in\Theta$ satisfy $\widetilde{\ell}(\theta)\leq\min_{\Theta}\ell$. If $\theta\notin\argmin_{\Theta}\ell$, then by \cref{prop:PL_convex_eq_PLT_outside_min}(A) we have $\ell(\theta)=\widetilde{\ell}(\theta)\leq\min_{\Theta}\ell$; since $\ell(\theta)\geq\min_{\Theta}\ell$ always holds, this yields $\ell(\theta)=\min_{\Theta}\ell$, i.e., $\theta\in\argmin_{\Theta}\ell$, a contradiction. Hence $\theta\in\argmin_{\Theta}\ell$.
\end{proof}

{
\renewcommand{\proofname}{Proof of \cref{theo:PL_convex_achieve_min}}%
\begin{proof}
    Assume that, for all $t=1 , \ldots , T$,
    \begin{equation}
        \min_{t=1 , \ldots , T} \ell (\theta^t ) > \min_{\Theta} \ell .
        \label{eq:assu:not_achieve_argmin}
    \end{equation}

    Step 1:
    By \cref{prop:PL_convex_eq_PLT_outside_min}(A), for every $t=1 , \ldots , T$,
    \begin{equation}
        \ell (\theta^t) = \widetilde{\ell} (\theta^t).
        \label{eq:PL_PLT}
    \end{equation}
    Moreover, by \cref{prop:PL_convex_eq_PLT_outside_min}(B), we may identify
    \begin{equation}
        \nabla \widetilde{\ell} (\theta^t) = \nabla \ell (\theta^t).
        \label{eq:nabla_PL_PLT}
    \end{equation}

    Step 2:
    Define a sequence $\{ \theta^{\prime t} \}_{t}$ inductively by setting, for $t=1 , \ldots , T-1$,
    \[
        \theta^{\prime t+1}
        =
        \mathrm{update}_t \left( \{ \theta^{\prime t^\prime} \}_{t^\prime =1}^t , \widetilde{\ell} ,\nabla \widetilde{\ell} \right).
    \]
    We show that $\theta^t = \theta^{\prime t}$ for all $t=1 , \ldots , T$.

    For $t=1$, we have $\theta^{1} = \theta^{\prime 1}$ by definition.
    Assume that $\theta^{t^\prime} = \theta^{\prime t^\prime}$ holds for all $t^\prime \leq t$.
    By the definitions of $\theta^{t+1}$, $\theta^{\prime t+1}$, and $\mathrm{update}_t$,
    \begin{align*}
        \theta^{t+1}
        &=
        \mathrm{update}_t \left( \{ \theta^{t^\prime} \}_{t^\prime =1}^t \mid \ell , \nabla \ell \right)
        =
        \mathrm{update}_t \left(
            \{ \theta^{t^\prime} \}_{t^\prime =1}^t ,
            \{ \ell (\theta^{t^\prime}) \}_{t^\prime =1}^t,
            \{ \nabla \ell (\theta^{t^\prime}) \}_{t^\prime =1}^t
        \right),
        \\
        \theta^{\prime t+1}
        &=
        \mathrm{update}_t \left(
            \{ \theta^{\prime t^\prime} \}_{t^\prime =1}^t ,
            \{ \widetilde{\ell} (\theta^{\prime t^\prime}) \}_{t^\prime =1}^t,
            \{ \nabla \widetilde{\ell} (\theta^{\prime t^\prime}) \}_{t^\prime =1}^t
        \right).
    \end{align*}
    By \cref{eq:PL_PLT,eq:nabla_PL_PLT} and \cref{assu:gradient_based_opt_independ_ell},
    \begin{align*}
        &
        \mathrm{update}_t \left(
            \{ \theta^{t^\prime} \}_{t^\prime =1}^t ,
            \{ \ell (\theta^{t^\prime}) \}_{t^\prime =1}^t,
            \{ \nabla \ell (\theta^{t^\prime}) \}_{t^\prime =1}^t
        \right)
        \\
        &=
        \mathrm{update}_t \left(
            \{ \theta^{\prime t^\prime} \}_{t^\prime =1}^t ,
            \{ \widetilde{\ell} (\theta^{\prime t^\prime}) \}_{t^\prime =1}^t,
            \{ \nabla \widetilde{\ell} (\theta^{\prime t^\prime}) \}_{t^\prime =1}^t
        \right).
    \end{align*}
    Hence, $\theta^{t+1} = \theta^{\prime t+1}$.
    By induction, $\theta^t = \theta^{\prime t}$ for all $t=1 , \ldots , T$.

    Step 3:
    The Lipschitz constant of $\widetilde{\ell}$ is bounded by $\max_i \| a^i \| = L(\ell)$.
    If $T \geq Q_{L(\ell), \theta^1} (\gamma (\ell))$, then by \cref{assu:gradient_based_opt_Q},
    \[
        \min_{t = 1 , \ldots , T } \widetilde{\ell} (\theta^t ) - \min_{\Theta} \widetilde{\ell}
        \leq \gamma (\ell).
    \]
    Hence, there exists $t \in \{1 , \ldots , T\}$ such that
    \[
        \widetilde{\ell} (\theta^t ) \leq \min_{\Theta} \widetilde{\ell} + \gamma (\ell).
    \]
    By \cref{prop:argmin_PL}, there exists $t \in \{1 , \ldots , T\}$ such that
    \[
        \ell (\theta^t ) = \min_{\Theta} \ell,
    \]
    which contradicts \cref{eq:assu:not_achieve_argmin}.
    Therefore, \cref{theo:PL_convex_achieve_min} holds.
\end{proof}
}

\subsection{Proof of Proposition~\ref{prop:argmin_SL_has_inner_pt}}
\label{sec:proof_prop:argmin_SL_has_inner_pt}

{
\renewcommand{\proofname}{Proof of \cref{prop:argmin_SL_has_inner_pt}}%
\begin{proof}
    (A)
    By \cref{eq:SL_is_PWLF}, \cref{assu:WIRL,assu:WIRL-uniqueness}, and \cref{prop:convexPolyhedralMap}, the suboptimality loss is piecewise linear.

    (B)
    By definition,
    \[
        \theta^{ \top} a^* (\theta , s^{(n)} ) - \theta^{ \top} a^{(n)} \geq 0 .
    \]
    Hence, $\ell_\mathrm{sub} (\theta) \geq 0$.
    Moreover, since $a^{(n)} = a^* (\theta^* , s^{(n)} )$ for every $n=1,\ldots,N$, we have $\ell_\mathrm{sub} (\theta^*) = 0$.
    Therefore, $\min_{\Theta} \ell_\mathrm{sub} = 0$.

    (C)
    Fix $n\in\{1,\ldots,N\}$.
    Define the open set $\Theta ( a^{(n)} , Y^{(n)})$ by
    \[
        \Theta ( a^{(n)} , Y^{(n)})
        =
        \left\{
            \theta \in \Theta
            \,\middle|\,
            \forall a^n \in Y^{(n)} \setminus \{ a^{(n)} \},\ 
            \theta^\top a^{(n)} > \theta^\top a^n
        \right\}.
    \]
    By \cref{assu:WIRL-uniqueness} and $Y^{(n)} \subset f (X (s^{(n)}), s^{(n)})$,
    \begin{equation}
        \theta^* \in \Theta ( a^{(n)} , Y^{(n)}).
        \label{eq:true_weight_in_unique_set}
    \end{equation}
    On the other hand, for any $\theta \in \Theta ( a^{(n)} , Y^{(n)})$, we have $a^* (\theta  , s^{(n)} ) = a^{(n)}$, and hence $\ell_\mathrm{sub} (\theta) = 0$.
    By \cref{eq:true_weight_in_unique_set},
    \[
        \theta^* \in \bigcap_{n=1}^N \Theta ( a^{(n)} , Y^{(n)}) \subset \argmin_{\Theta} \ell_\mathrm{sub}.
    \]
    Finally, we verify that $\argmin_{\Theta}\ell_\mathrm{sub}$ has a relative interior point (in the affine hull $V$ of $\Theta$). Each $\Theta(a^{(n)}, Y^{(n)})$ is defined by finitely many strict linear inequalities, so it can be written as $\Theta(a^{(n)},Y^{(n)}) = \Theta \cap U_n$ for some open set $U_n$ of $\mathbb{R}^d$. The set $U := \bigcap_{n=1}^N U_n$ is an open subset of $\mathbb{R}^d$ containing $\theta^*$. Since $\Theta$ is a nonempty convex set, its interior $\mathrm{int}_V \Theta$ relative to $V$ (the relative interior) is nonempty, and $\theta^* \in \Theta \subseteq \overline{\mathrm{int}_V \Theta}$. Hence the open neighborhood $U$ of $\theta^*$ meets $\mathrm{int}_V \Theta$, and every $\theta \in U \cap \mathrm{int}_V \Theta$ is an interior point, relative to $V$, of $\bigcap_{n=1}^N \Theta(a^{(n)},Y^{(n)}) \subseteq \argmin_{\Theta} \ell_\mathrm{sub}$. Therefore $\argmin_{\Theta} \ell_\mathrm{sub}$ has a relative interior point.
\end{proof}
}

\subsection{Proof of Proposition~\ref{prop:grad_based_opt_achieve_min_SL_before}}
\label{sec:proof_prop:grad_based_opt_achieve_min_SL_before}

{
\renewcommand{\proofname}{Proof of \cref{prop:grad_based_opt_achieve_min_SL_before}}%
\begin{proof}
    We first show that the right-hand side of \cref{eq:gamma_SL} (i.e., $\gamma(\ell_\mathrm{sub})$) is positive. Choosing $\theta = \theta^*$ in the $\max_{\theta\in\Theta}$ gives
    \[
        \gamma(\ell_\mathrm{sub}) \geq \min_{(a^n)_n \in \prod_{n=1}^N Y^{(n)} \setminus \{(a^{(n)})_n\}} \frac{1}{N}\sum_{n=1}^N \theta^{*\top}(a^{(n)} - a^n) .
    \]
    For each $n$, the point $a^{(n)} = a^*(\theta^*, s^{(n)})$ is the (by \cref{assu:WIRL-uniqueness} unique) maximizer over $Y^{(n)}$ under $\theta^*$, so $\theta^{*\top}(a^{(n)} - a^n) \geq 0$; moreover, if $(a^n)_n \neq (a^{(n)})_n$, then $a^{n_0} \neq a^{(n_0)}$ for some $n_0$, whence $\theta^{*\top}(a^{(n_0)} - a^{n_0}) > 0$ (a strict inequality by uniqueness). Since $\prod_{n=1}^N Y^{(n)} \setminus \{(a^{(n)})_n\}$ is a finite set, the above $\min$ is attained at a positive value, and hence $\gamma(\ell_\mathrm{sub}) > 0$.

    Next, we show that the thickness $\min_\Theta\ell_\mathrm{sub} - \min_\Theta\widetilde{\ell_\mathrm{sub}}$ of $\ell_\mathrm{sub}$ (in the sense of \cref{eq:def_gamma_ell}) is at least $\gamma(\ell_\mathrm{sub})$.
    Suppose that
    \[
        \sum_{n=1}^N \left( a^n - a^{(n)} \right) = 0 .
    \]
    Applying $\theta^{* \top}$ to both sides yields
    \[
        \sum_{n=1}^N \theta^{* \top}\left( a^n - a^{(n)} \right) = 0 .
    \]
    By the definition of $a^{(n)}$, we have $\theta^{* \top}\left( a^n - a^{(n)} \right) \leq 0$.
    Hence, for every $n=1,\ldots,N$,
    \[
        \theta^{* \top}\left( a^n - a^{(n)} \right) = 0 .
    \]
    By \cref{assu:WIRL-uniqueness}, it follows that
    \[
        a^n = a^{(n)} .
    \]
    Therefore,
    \[
        \sum_{n=1}^N \left( a^n - a^{(n)} \right) = 0
        \Longleftrightarrow
        \forall n =1 , \ldots , N,\ a^n = a^{(n)} .
    \]
    The affine function corresponding to a tuple $(a^n)_n$ has gradient $g = \frac{1}{N}\sum_{n=1}^N (a^n - a^{(n)})$. If $P_{V_0}g \neq 0$, then $g \neq 0$, so by the equivalence above $(a^n)_n \neq (a^{(n)})_n$. Hence the index set $\{(a^n)_n \mid P_{V_0}g \neq 0\}$ of $\widetilde{\ell_\mathrm{sub}}$ is a subset of $\prod_{n=1}^N Y^{(n)} \setminus \{(a^{(n)})_n\}$, and for every $\theta \in \Theta$,
    \[
        \widetilde{\ell_\mathrm{sub}} (\theta) = \max_{(a^n)_n \,:\, P_{V_0}g \neq 0} \frac{1}{N}\sum_{n=1}^N \theta^\top (a^n - a^{(n)})
        \leq \max_{(a^n)_n \neq (a^{(n)})_n} \frac{1}{N}\sum_{n=1}^N \theta^\top (a^n - a^{(n)}) .
    \]
    Taking $\min_\Theta$ of both sides and using $\min_\Theta\ell_\mathrm{sub} = 0$ (\cref{prop:argmin_SL_has_inner_pt}(B)) together with \cref{eq:gamma_SL}, we obtain
    \[
        \min_\Theta\ell_\mathrm{sub} - \min_\Theta\widetilde{\ell_\mathrm{sub}}
        \geq - \min_{\theta\in\Theta} \max_{(a^n)_n \neq (a^{(n)})_n} \frac{1}{N}\sum_{n=1}^N \theta^\top (a^n - a^{(n)})
        = \gamma(\ell_\mathrm{sub}) .
    \]
    That is, the thickness of $\ell_\mathrm{sub}$ is at least $\gamma(\ell_\mathrm{sub})$.
\end{proof}
}

\section{Universal finite-time exact solvability of the DDIOP for MILPs}
\label{sec:universal_finite_time_MILP_DDIOP_appx}

As a technical condition, we introduce the following assumption.

\begin{assumption}
    \label{assu:universal_WIRL-uniqueness}
    For almost every $\theta \in \Theta$, and for every $n=1,\ldots,N$, the feature $a^*(\theta,s^{(n)})$ is uniquely determined.
\end{assumption}

\begin{example}
    By \cref{lem:Psi_set_is_almost_Phi}, the choice $\Theta = \Delta^{d-1}$ satisfies \cref{assu:universal_WIRL-uniqueness}.
\end{example}

Under \cref{assu:universal_WIRL-uniqueness}, define
\begin{align*}
    & \Theta_{\mathrm{unique}}
    \\
    & :=
    \left\{
        \theta \in \Theta
        \,\middle|\,
        \text{for every $n=1,\ldots,N$, the feature $a^*(\theta,s^{(n)})$ is uniquely determined}
    \right\}.
\end{align*}
For any $n=1,\ldots,N$ and any $\theta \in \Theta_{\mathrm{unique}}$, condition \cref{eq:optimal_in_vertex} holds.
Since each $Y^{(n)}$ is finite,
\[
    \left\{
        (a^*(\theta,s^{(n)}))_{n}
        \,\middle|\,
        \theta \in \Theta_{\mathrm{unique}}
    \right\}
    \left( \subset \prod_{n=1}^N Y^{(n)} \right)
\]
is a finite set.
By \cref{prop:grad_based_opt_achieve_min_SL_before,prop:gamma_is_positive}, for any $\theta^* \in \Theta_{\mathrm{unique}}$, the following constants $\gamma_{\theta^*}$ and $\widetilde{\gamma}$ are strictly positive:
\begin{align}
    \gamma_{\theta^*}
    & :=
    \max_{\theta \in \Theta}
    \min_{(a^*(\theta^*,s^{(n)}))_n \in \prod_{n=1}^N Y^{(n)} \setminus \{ (a^{(n)})_n \}}
    \frac{1}{N} \sum_{n=1}^N \theta^\top \left( a^{(n)} - a^*(\theta^*,s^{(n)}) \right),
    \notag \\
    \widetilde{\gamma}
    & :=
    \min_{\theta^* \in \Theta_{\mathrm{unique}}} \gamma_{\theta^*}.
\end{align}
By definition, $\widetilde{\gamma}$ depends only on $\{X(s^{(n)})\}_n$, $f$, and $\Theta$.

\begin{theorem}
    \label{theo:universal_grad_based_opt_achieve_min_SL}
    Assume that \cref{assu:WIRL,assu:gradient_based_opt_independ_ell,assu:gradient_based_opt_Q,assu:universal_WIRL-uniqueness} hold.
    Then, for almost every $\theta^* \in \Theta$, the following implication holds:
    if $T \geq Q_{L (\ell_\mathrm{sub}), \theta^1} (\widetilde{\gamma})$, then
    \[
        \min_{t=1 , \ldots , T} \ell_{\mathrm{sub}} (\theta^t )
        =
        \min_{\Theta} \ell_{\mathrm{sub}}
        =
        0.
    \]
\end{theorem}

\begin{proof}
    Fix any $T \geq Q_{L (\ell_\mathrm{sub}), \theta^1} (\widetilde{\gamma})$.
    By the monotonicity of $Q_{L,\theta^1}$, for any $\theta^* \in \Theta_{\mathrm{unique}}$ we have
    $T \geq Q_{L (\ell_\mathrm{sub}), \theta^1} (\gamma_{\theta^*})$.
    Hence, by \cref{theo:grad_based_opt_achieve_min_SL},
    \[
        \min_{t=1 , \ldots , T} \ell_{\mathrm{sub}} (\theta^t ) = 0.
    \]
    Since \cref{assu:universal_WIRL-uniqueness} implies that $\Theta_{\mathrm{unique}}$ has full measure in $\Theta$, the conclusion holds for almost every $\theta^* \in \Theta$.
\end{proof}

\section{Technical Lemmas: finite-time attainment of the PLF minimum}
\label{sec:finite_time_PLF_min_appx}

We show that PSGD can drive a PLF to $0$ within finitely many iterations.

Let $\theta^{**} \in \Theta$ satisfy
\[
    \theta^{**} \in \argmin_{\Theta} \widetilde{\ell_{\mathrm{sub}} }
    \left( \subset \argmin_{\Theta} \ell_{\mathrm{sub}} \right).
\]

\begin{lemma}
    \label{lem:intention_learning_complete_Psi}
    Assume \cref{assu:WIRL,assu:WIRL-uniqueness}.
    Then, for any $\theta \in \Theta$, the following are equivalent:
    \emph{(A)} $g(\theta)=0$;
    \emph{(B)} $\ell_{\mathrm{plf}}(\theta)=0$.
\end{lemma}

\begin{remark}
Assume that the feature $a^*(\theta, s^{(n)})$ is defined as the lexicographically minimal element of
\[
    \argmax_{f(x, s)\in f(X(s), s)}\ \theta^{\top} f(x, s).
\]
Under this convention, \citet{Kitaoka-2023-imitation-WIRL} argued that once learning via \cref{alg:intention-WIRL-gradual-decay} is completed, i.e., once a subgradient of the suboptimality loss becomes zero, one has $a^*(\theta, s^{(n)}) = a^{(n)}$.
The statement and proof of \cref{lem:intention_learning_complete_Psi} are inspired by \citet{Kitaoka-2023-imitation-WIRL}.
\end{remark}

{
\renewcommand{\proofname}{Proof of \cref{lem:intention_learning_complete_Psi}}%
\begin{proof}
  (A) $\Rightarrow$ (B)
  Assume that a subgradient of $\ell_{\mathrm{sub}}$ is given by
  \[
      \sum_{n=1}^N \left(
          a^*(\theta, s^{(n)})
          -
          a^*(\theta^*, s^{(n)})
      \right)
      = 0.
  \]
  Applying $\theta^{*\top}$ to both sides yields
  \[
      \sum_{n=1}^N \theta^{*\top} a^*(\theta^*, s^{(n)})
      =
      \sum_{n=1}^N \theta^{*\top} a^*(\theta, s^{(n)}).
  \]
  By the definition of $a^*(\theta^*, s^{(n)})$, for every $n$,
  \[
      \theta^{*\top} a^*(\theta^*, s^{(n)})
      \geq
      \theta^{*\top} a^*(\theta, s^{(n)}),
  \]
  and hence, for every $n$,
  \[
      \theta^{*\top} a^*(\theta^*, s^{(n)})
      =
      \theta^{*\top} a^*(\theta, s^{(n)}).
  \]
  By \cref{eq:FOP_linear},
  \[
      a^*(\theta, s^{(n)})
      \in
      \argmax_{f(x, s^{(n)})\in f(X(s^{(n)}), s^{(n)})}\ \theta^{*\top} f(x, s^{(n)}).
  \]
  Since $\theta^*\in\Psi$, for every $n$ there exists $\xi^{(n)}\in Y^{(n)}$ such that
  \[
      \theta^* \in \Theta(\xi^{(n)}, Y^{(n)}).
  \]
  This implies that, for every $n$, the set $\argmax_{f(x, s^{(n)})\in f(X(s^{(n)}), s^{(n)})}\theta^{*\top} f(x, s^{(n)})$ is a singleton.
  Therefore,
  \[
      a^*(\theta^*, s^{(n)}) = a^*(\theta, s^{(n)}).
  \]

  (B) $\Rightarrow$ (A)
  Under (B), for every $n$,
  \[
      a^*(\theta, s^{(n)})
      -
      a^*(\theta^*, s^{(n)})
      =
      0.
  \]
  Summing over $n$ yields (A).
\end{proof}
}

\begin{lemma}
    \label{lem:SPO_bound}
    Assume \cref{assu:WIRL,assu:WIRL-uniqueness}.
    Then, for any $\theta \in \Theta$ satisfying $g(\theta) \neq 0$,
    \[
        - \gamma (\ell_{\mathrm{sub}})
        =
        \max_{( a^{n} )_n \in \prod_{n=1}^N Y^{(n)} \setminus \{ (a^{(n)} )_n \} }
        \frac{1}{N} \sum_{n=1}^N \theta^{**\top} \left( a^{(n)} - a^n \right)
        \geq \theta^{**\top} g (\theta )
    \]
    holds.
\end{lemma}

\begin{proof}
    By \cref{prop:grad_based_opt_achieve_min_SL_before} and the choice of $\theta^{**}$,
    \begin{align*}
        - \gamma (\ell_\mathrm{sub})
        = - \min_{\Theta} \widetilde{\ell_{\mathrm{sub}}}
        &=
        \min_{\theta \in \Theta}
        \max_{( a^{n} )_n \in \prod_{n=1}^N Y^{(n)} \setminus \{ (a^{(n)} )_n \} }
        \frac{1}{N} \sum_{n=1}^N \theta^\top \left( a^n - a^{(n)} \right)
        \\
        &=
        \max_{( a^{n} )_n \in \prod_{n=1}^N Y^{(n)} \setminus \{ (a^{(n)} )_n \} }
        \frac{1}{N} \sum_{n=1}^N \theta^{**\top} \left( a^{(n)} - a^n \right).
    \end{align*}
    The claim follows from \cref{prop:SL_is_Lipschitz_convex}(C).
\end{proof}

\begin{proposition}
    \label{prop:update_estimation_-2}
    Assume \cref{assu:WIRL,assu:WIRL-uniqueness}.
    Let $\{\theta^t\}_t$ be the sequence obtained by implementing the update rule $\{\mathrm{update}_t\}_t$ in \cref{alg:intention-WIRL-gradual-decay} with PSGD.
    Then, for any $t\in\mathbb{Z}_{\geq 1}$,
    \[
        \| \theta^{t+1} - \theta^{**} \|^2
        \leq
        \left\| \theta^{t}  - \theta^{**} \right\|^2
        - 2 \alpha_t \gamma (\ell_{\mathrm{sub}})
        + \alpha_t^2 \left\|  g (\theta^{t}  ) \right\|^2 .
    \]
\end{proposition}

\begin{proof}
    By \cref{lem:SPO_bound} and $\theta^{t \top} g (\theta^{t} ) \geq 0$, the distance between $\theta^{t+1}$ and $\theta^{**}$ can be bounded as follows:
    \begin{align*}
        \| \theta^{t+1} - \theta^{**} \|^2
        &\leq
        \left\| \Proj_{\Theta} \left( \theta^{t}  - \alpha_t g (\theta^{t}  ) \right) - \theta^{**} \right\|^2
        \\
        &\leq
        \left\| \theta^{t}  - \alpha_t g (\theta^{t}  ) - \theta^{**} \right\|^2
        \\
        &=
        \left\| \theta^{t}  - \theta^{**} \right\|^2
        + 2 \alpha_t (\theta^{**} - \theta^{t}  )^{ \top} g (\theta^{t} )
        + \alpha_t^2 \left\|  g (\theta^{t}  ) \right\|^2
        \\
        &\leq
        \left\| \theta^{t}  - \theta^{**} \right\|^2
        - 2 \alpha_t \gamma (\ell_{\mathrm{sub}})
        + \alpha_t^2 \left\|  g (\theta^{t}  ) \right\|^2 .
    \end{align*}
\end{proof}

\subsection{The case of PSGD (NSS)}

\begin{theorem}
    \label{theo:achieve_min_PLF_PSGD_NSS}
    Assume that \cref{assu:WIRL,assu:WIRL-uniqueness} hold.
    Let $\{\theta^t\}_t$ be the sequence generated by implementing the update rules $\{\mathrm{update}_t\}_t$ in \cref{alg:intention-WIRL-gradual-decay} via PSGD (NSS).
    Then, for sufficiently large $T \in \mathbb{Z}_{\geq 1}$, one has $\ell_{\mathrm{plf}}(\theta^{T})=0$.
\end{theorem}

\begin{proof}
    By \cref{lem:intention_learning_complete_Psi}, the problem of minimizing the PLF can be reduced to the problem of minimizing the convex suboptimality loss.
    Assume that $g(\theta^{t})\neq 0$ for all $t=1,\ldots,T$.
    By \cref{prop:update_estimation_-2},
    \[
        \| \theta^{t+1} - \theta^{**} \|^2
        \leq
        \left\| \theta^{t}  - \theta^{**} \right\|^2
        - 2 \beta_t \gamma (\ell_{\mathrm{sub}})
        + \beta_t^2 L (\ell_{\mathrm{sub}})^2
        .
    \]
    Summing both sides over $t=1,\ldots,T$ yields
    \begin{align}
        \| \theta^{T+1} - \theta^{**} \|^2
        & \leq
        \left\| \theta^{1}  - \theta^{**} \right\|^2
        - 2 \gamma (\ell_{\mathrm{sub}}) \sum_{t=1}^{T } \beta_t
        + L (\ell_{\mathrm{sub}})^2 \sum_{t=1}^{T } \beta_t^2
        \notag \\
        & =
        \left\| \theta^{1}  - \theta^{**} \right\|^2
        - \left( \sum_{t=1}^{T} \beta_t \right)
        \left(
        2 \gamma (\ell_{\mathrm{sub}})
        - L (\ell_{\mathrm{sub}})^2 \frac{\sum_{t=1}^{T} \beta_t^2}{\sum_{t=1}^{T } \beta_t}
        \right).
        \label{eq:achieve_min_PLF_PSGD_NSS_01}
    \end{align}
    By the assumption on the sequence $\{\beta_t\}_t$, for sufficiently large $T \in \mathbb{Z}_{\geq 1}$,
    \[
        \frac{\sum_{t=1}^{T } \beta_t^2}{\sum_{t=1}^{T } \beta_t} \leq \gamma (\ell_{\mathrm{sub}})
    \]
    holds, and thus \cref{eq:achieve_min_PLF_PSGD_NSS_01} becomes strictly negative for sufficiently large $T$.
    This contradicts $\| \theta^{T+1} - \theta^{**} \|^2 \geq 0$.

    Therefore, there exists $t\in\{1,\ldots,T\}$ such that $g(\theta^{t})=0$.
    By \cref{lem:intention_learning_complete_Psi}, there exists $t\in\{1,\ldots,T\}$ such that $\ell_{\mathrm{plf}}(\theta^{t})=0$.
\end{proof}

\begin{theorem}
    \label{theo:achieve_min_PLF_PSGD_SRSS}
    Assume that \cref{assu:WIRL,assu:WIRL-uniqueness} hold.
    Let $\{\theta^t\}_t$ be the sequence generated by implementing the update rules $\{\mathrm{update}_t\}_t$ in \cref{alg:intention-WIRL-gradual-decay} via PSGD (SRSS).
    Then, for any $T \in \mathbb{Z}_{\geq 1}$, either $\ell_{\mathrm{plf}}(\theta^{T})=0$, or
    \[
        \| \theta^{T+1} - \theta^{**} \|^2
        \leq
        \mathrm{diam} (\Theta)^2
        - \beta (2-\sqrt{2}) \left( \sqrt{T} -1 \right)
        \left(
        2 \gamma (\ell_{\mathrm{sub}})
        - \beta L (\ell_{\mathrm{sub}})^2 \frac{2 ( 1 + \log 2)}{\sqrt{T+2}}
        \right)
        .
    \]
\end{theorem}

\begin{proof}
    By \cref{lem:intention_learning_complete_Psi}, the problem of minimizing the PLF can be reduced to the problem of minimizing the convex suboptimality loss.
    Assume that, for $t=1,\ldots,T$, one has $g(\theta^{t})\neq 0$.
    By \cref{prop:update_estimation_-2},
    \begin{align}
        \| \theta^{t+1} - \theta^{**} \|^2
        & \leq
        \left\| \theta^{t}  - \theta^{**} \right\|^2
        - 2 \beta_t \gamma (\ell_{\mathrm{sub}})
        + \beta_t^2 L (\ell_{\mathrm{sub}})^2
        \notag \\
        & = \left\| \theta^{t}  - \theta^{**} \right\|^2
        - 2 \beta \gamma (\ell_{\mathrm{sub}}) t^{-1/2}
        + \beta^2 L (\ell_{\mathrm{sub}})^2 t^{-1}
        .
        \notag
    \end{align}
    Summing over $t= \lceil T/2 \rceil , \lceil T/2 \rceil +1 , \ldots , T$, we obtain
    \begin{align}
        & \| \theta^{T+1} - \theta^{**} \|^2
        \notag \\
        & \leq
        \left\| \theta^{\lceil T/2 \rceil}  - \theta^{**} \right\|^2
        - 2 \beta \gamma (\ell_{\mathrm{sub}}) \sum_{t = \lceil T/2 \rceil }^T t^{-1/2}
        + \beta^2 L (\ell_{\mathrm{sub}})^2 \sum_{t = \lceil T/2 \rceil }^T t^{-1}
        \notag \\
        & \leq
        \left\| \theta^{\lceil T/2 \rceil}  - \theta^{**} \right\|^2
        - \beta \left( \sum_{t = \lceil T/2 \rceil }^T t^{-1/2} \right)
        \left(
        2 \gamma (\ell_{\mathrm{sub}})
        - \beta L (\ell_{\mathrm{sub}})^2 \frac{\sum_{t = \lceil T/2 \rceil }^T t^{-1}}{\sum_{t = \lceil T/2 \rceil }^T t^{-1/2}}
        \right)
        \notag \\
        & \leq
        \mathrm{diam} (\Theta)^2
        - \beta \left( \sum_{t = \lceil T/2 \rceil }^T t^{-1/2} \right)
        \left(
        2 \gamma (\ell_{\mathrm{sub}})
        - \beta L (\ell_{\mathrm{sub}})^2 \frac{\sum_{t = \lceil T/2 \rceil }^T t^{-1}}{\sum_{t = \lceil T/2 \rceil }^T t^{-1/2}}
        \right)
        .
        \notag
    \end{align}
    By \cref{prop:sum_SQ_SQS},
    \begin{align}
        \frac{\sum_{t=\lceil T/2 \rceil}^T t^{-1} }{\sum_{t=\lceil T/2 \rceil}^T t^{-1/2}}
        & \leq
       \frac{2 ( 1 + \log 2)}{\sqrt{T+2}}
       .
       \notag
    \end{align}
    Therefore,
    \begin{align}
        &
        \mathrm{diam} (\Theta)^2
        - \beta \left( \sum_{t = \lceil T/2 \rceil }^T t^{-1/2} \right)
        \left(
        2 \gamma (\ell_{\mathrm{sub}})
        - \beta L (\ell_{\mathrm{sub}})^2 \frac{\sum_{t = \lceil T/2 \rceil }^T t^{-1}}{\sum_{t = \lceil T/2 \rceil }^T t^{-1/2}}
        \right)
        \notag \\
        & \leq
        \mathrm{diam} (\Theta)^2
        - \beta \left( \sum_{t = \lceil T/2 \rceil }^T t^{-1/2} \right)
        \left(
        2 \gamma (\ell_{\mathrm{sub}})
        - \beta L (\ell_{\mathrm{sub}})^2 \frac{2 ( 1 + \log 2)}{\sqrt{T+2}}
        \right)
        .
        \notag
    \end{align}
    Moreover,
    \begin{align}
        \sum_{t = \lceil T/2 \rceil }^T t^{-1/2}
        & \geq
        \sum_{t = \lceil T/2 \rceil +1 }^T t^{-1/2}
        \geq \int_{\lceil T/2 \rceil}^T z^{-1/2} dz
        = 2 T^{1/2} - 2 \lceil T/2 \rceil^{1/2}
        \notag \\
        & = 2 T^{1/2} - \sqrt{2} (T+1)^{1/2}
        = \frac{4 T - 2 (T+1) }{2 T^{1/2} + \sqrt{2} (T+1)^{1/2}}
        \geq \frac{2T-2}{(2 + \sqrt{2})T^{1/2}}
        \notag \\
        & = (2-\sqrt{2}) \left( \sqrt{T} -1 \right).
        \notag
    \end{align}
    Hence,
    \begin{align*}
        &
        \mathrm{diam} (\Theta)^2
        - \beta \left( \sum_{t = \lceil T/2 \rceil }^T t^{-1/2} \right)
        \left(
        2 \gamma (\ell_{\mathrm{sub}})
        - \beta L (\ell_{\mathrm{sub}})^2 \frac{2 ( 1 + \log 2)}{\sqrt{T+2}}
        \right)
        \\
        & \leq
        \mathrm{diam} (\Theta)^2
        - \beta (2-\sqrt{2}) \left( \sqrt{T} -1 \right)
        \left(
        2 \gamma (\ell_{\mathrm{sub}})
        - \beta L (\ell_{\mathrm{sub}})^2 \frac{2 ( 1 + \log 2)}{\sqrt{T+2}}
        \right),
    \end{align*}
    which proves the claim.
\end{proof}

{
\renewcommand{\proofname}{Proof of \cref{theo:achieve_min_PLF_PSGD_SRSS_readable}}%
\begin{proof}
    (A) This follows from \cref{theo:achieve_min_PLF_PSGD_SRSS}.

    (B)
    By the assumption on $T$,
    \begin{equation}
        2 \gamma (\ell_{\mathrm{sub}})
        -
        \beta L (\ell_{\mathrm{sub}})^2 \frac{2 ( 1 + \log 2)}{\sqrt{T+2}}
        \geq \gamma (\ell_{\mathrm{sub}}).
        \notag
    \end{equation}
    By \cref{theo:achieve_min_PLF_PSGD_SRSS},
    \begin{align}
        & \| \theta^{T+1} - \theta^{**} \|^2
        \notag \\
        & \leq
        \mathrm{diam} (\Theta)^2
        - \beta (2-\sqrt{2}) \left( \sqrt{T} -1 \right)
        \left(
        2 \gamma (\ell_{\mathrm{sub}})
        - \beta L (\ell_{\mathrm{sub}})^2 \frac{2 ( 1 + \log 2)}{\sqrt{T+2}}
        \right)
        \notag \\
        & \leq
        \mathrm{diam} (\Theta)^2
        - \beta \gamma (\ell_{\mathrm{sub}}) (2-\sqrt{2}) \left( \sqrt{T} -1 \right)
        .
        \notag
    \end{align}
    By the assumption on $T$,
    \[
        \mathrm{diam} (\Theta)^2
        - \beta \gamma (\ell_{\mathrm{sub}}) (2-\sqrt{2}) \left( T^{1/2} -1 \right) \leq 0 .
    \]
    Combining the above inequalities yields $\| \theta^{T+1} - \theta^{**} \|^2 < 0$, which is a contradiction.

    Therefore, there exists $t\in\{1,\ldots,T\}$ such that $g(\theta^{t})=0$.
    By \cref{lem:intention_learning_complete_Psi}, there exists $t\in\{1,\ldots,T\}$ such that $\ell_{\mathrm{plf}}(\theta^{t})=0$.
\end{proof}
}

\subsection{The case of PSGD (NSL)}

\begin{theorem}
    \label{theo:achieve_min_PLF_PSGD_NSL}
    Assume that \cref{assu:WIRL,assu:WIRL-uniqueness} hold.
    Let $\{\theta^t\}_t$ be the sequence generated by implementing the update rules $\{\mathrm{update}_t\}_t$ in \cref{alg:intention-WIRL-gradual-decay} via PSGD (NSL).
    Then, for sufficiently large $T \in \mathbb{Z}_{\geq 1}$, one has $\ell_{\mathrm{plf}}(\theta^{T}) = 0$.
\end{theorem}

\begin{proof}
    By \cref{lem:intention_learning_complete_Psi}, the problem of minimizing the PLF can be reduced to the problem of minimizing the convex suboptimality loss.
    Assume that $g(\theta^{t}) \neq 0$ for all $t=1,\ldots,T$.
    By \cref{prop:update_estimation_-2},
    \[
        \| \theta^{t+1} - \theta^{**} \|^2
        \leq
        \left\| \theta^{t}  - \theta^{**} \right\|^2
        - 2 \beta_t \frac{\gamma (\ell_{\mathrm{sub}})}{\| g (\theta^t) \| }
        + \beta_t^2
        \leq
        \left\| \theta^{t}  - \theta^{**} \right\|^2
        - 2 \beta_t \frac{\gamma (\ell_{\mathrm{sub}})}{L (\ell_{\mathrm{sub}}) }
        + \beta_t^2
        .
    \]
    Summing both sides over $t=1,\ldots,T$ yields
    \begin{align}
        \| \theta^{T+1} - \theta^{**} \|^2
        & \leq
        \left\| \theta^{1}  - \theta^{**} \right\|^2
        - 2 \frac{\gamma (\ell_{\mathrm{sub}})}{L (\ell_{\mathrm{sub}}) } \sum_{t=1}^{T } \beta_t
        + \sum_{t=1}^{T } \beta_t^2
        \notag \\
        & =
        \left\| \theta^{1}  - \theta^{**} \right\|^2
        - \left( \sum_{t=1}^{T} \beta_t \right)
        \left(
        2 \frac{\gamma (\ell_{\mathrm{sub}})}{L (\ell_{\mathrm{sub}}) }
        - \frac{\sum_{t=1}^{T} \beta_t^2}{\sum_{t=1}^{T } \beta_t}
        \right).
        \label{eq:achieve_min_PLF_PSGD_NSL_01}
    \end{align}
    By the assumption on the sequence $\{\beta_t\}_t$, for sufficiently large $T \in \mathbb{Z}_{\geq 1}$,
    \[
        \frac{\sum_{t=1}^{T } \beta_t^2}{\sum_{t=1}^{T } \beta_t} \leq \frac{\gamma (\ell_{\mathrm{sub}})}{L (\ell_{\mathrm{sub}}) } .
    \]
    Hence, for sufficiently large $T$, the right-hand side of \cref{eq:achieve_min_PLF_PSGD_NSL_01} becomes strictly negative, contradicting $\| \theta^{T+1} - \theta^{**} \|^2 \geq 0$.

    Therefore, there exists $t\in\{1,\ldots,T\}$ such that $g(\theta^{t})=0$.
    By \cref{lem:intention_learning_complete_Psi}, there exists $t\in\{1,\ldots,T\}$ such that $\ell_{\mathrm{plf}}(\theta^{t})=0$.
\end{proof}

\begin{theorem}
    \label{theo:achieve_min_PLF_PSGD_SRSL}
    Suppose that \cref{assu:WIRL,assu:WIRL-uniqueness} hold.
    Let $\{ \theta^t \}_t$ be the sequence generated by implementing the update rule $\{ \mathrm{update}_t \}_t$ of \cref{alg:intention-WIRL-gradual-decay} via PSGD (SRSL).
    Then, for any $T \in \mathbb{Z}_{\geq 1}$, either $\ell_{\mathrm{plf}}(\theta^{T}) = 0$, or
    \[
        \| \theta^{T+1} - \theta^{**} \|^2
        \leq
        \mathrm{diam}(\Theta)^2
        - \beta (2-\sqrt{2}) \left( \sqrt{T} -1 \right)
        \left(
        2 \frac{\gamma (\ell_{\mathrm{sub}})}{L(\ell_{\mathrm{sub}})}
        - \beta \frac{2 ( 1 + \log 2)}{\sqrt{T+2}}
        \right).
    \]
\end{theorem}

\begin{proof}
    The claim can be proved in the same manner as \cref{theo:achieve_min_PLF_PSGD_SRSS}.
\end{proof}

{
\renewcommand{\proofname}{Proof of \cref{theo:achieve_min_PLF_PSGD_SRSL_readable}}%
\begin{proof}
    (A) This follows immediately from \cref{theo:achieve_min_PLF_PSGD_SRSL}.

    (B) By the assumption on $T$, we have
    \begin{equation}
        2 \frac{\gamma (\ell_{\mathrm{sub}})}{L(\ell_{\mathrm{sub}})}
        -
        \beta\frac{2 ( 1 + \log 2)}{\sqrt{T+2}}
        \geq \frac{\gamma (\ell_{\mathrm{sub}})}{L(\ell_{\mathrm{sub}})}.
        \notag
    \end{equation}
    Hence, by \cref{theo:achieve_min_PLF_PSGD_SRSL},
    \begin{align}
        \| \theta^{T+1} - \theta^{**} \|^2
        & \leq
        \mathrm{diam} (\Theta)^2
        - \beta (2-\sqrt{2}) \left( \sqrt{T} -1 \right)
        \left(
        2 \frac{\gamma (\ell_{\mathrm{sub}})}{L(\ell_{\mathrm{sub}})}
        - \beta \frac{2 ( 1 + \log 2)}{\sqrt{T+2}}
        \right)
        \notag \\
        & \leq
        \mathrm{diam} (\Theta)^2
        - \beta \frac{\gamma (\ell_{\mathrm{sub}})}{L(\ell_{\mathrm{sub}})} (2-\sqrt{2}) \left( \sqrt{T} -1 \right).
        \notag
    \end{align}
    By the assumption on $T$, it follows that
    \[
        \mathrm{diam} (\Theta)^2
        - \beta \frac{\gamma (\ell_{\mathrm{sub}})}{L(\ell_{\mathrm{sub}})} (2-\sqrt{2}) \left( T^{1/2} -1 \right) \leq 0.
    \]
    Combining these inequalities yields $\| \theta^{T+1} - \theta^{**} \|^2 < 0$, which is a contradiction.

    Therefore, there exists some $t \in \{1,\ldots,T\}$ such that $g(\theta^{t}) = 0$.
    By \cref{lem:intention_learning_complete_Psi}, there exists some $t \in \{1,\ldots,T\}$ such that $\ell_{\mathrm{plf}}(\theta^{t}) = 0$.
\end{proof}
}

\subsection{Some Remarks}

\begin{remark}
    \label{rem:lower_bound_first_order_method}
    Let $\ell \colon \mathbb{R}^d \to \mathbb{R}$ be an $L$-Lipschitz convex function.
    \citet{nemirovskii1983problem} established matching (tight) lower and upper bounds on the worst-case complexity of (deterministic) gradient-based first-order methods for ensuring that the function-value suboptimality is at most $\varepsilon>0$ in the problem
    \[
        \min_{x \in \mathbb{R}^d,\ \| x \| \leq B} \ell(x),
    \]
    namely, $\Theta(L^2 B^2/\varepsilon^2)$ iterations.

    By \cref{prop:convergence_rate_NSL_eg,prop:convergence_rate_NSS_eg}, PSGD (SRSS, SRSL) attains this rate with respect to $\varepsilon$.
\end{remark}

\subsection{Remark: SPO loss}
\label{sec:SPO_loss}

We define the estimation loss, namely the SPO loss \citep{Mohajerin-2018-Data,elmachtoub2022smart}, by
\[
    \ell_{\mathrm{est}} (\theta ; \theta^*)
    :=
    \frac{1}{N}\sum_{n=1}^N \left( \theta^{* \top} a^{(n)} - \theta^{* \top} a^* (\theta , s^{(n)} ) \right).
\]

\begin{proposition}
    \label{prop:SPO_bound_2}
    Assume that \cref{assu:WIRL,assu:WIRL-uniqueness} hold.
    Then, for any $\theta \in \Theta$ satisfying $g(\theta) \neq 0$,
    \[
        \gamma (\ell_{\mathrm{sub}})
        \leq \ell_{\mathrm{est}} (\theta ; \theta^{**}).
    \]
\end{proposition}

\begin{proof}
    By \cref{lem:SPO_bound},
    \begin{align*}
        - \gamma (\ell_{\mathrm{sub}})
        \geq
        \theta^{**\top} g (\theta )
        =
        - \ell_{\mathrm{est}} (\theta ; \theta^{**}),
    \end{align*}
    which implies the claim.
\end{proof}

\section{Prediction loss and suboptimality loss: the case of strongly convex and smooth objectives with convex constraints}
\label{sec:sub_0_eqs_pre}

In this section, we establish the following result.
\begin{proposition}
   \label{prop:sub_0_eqs_pre}
   Let the weight space $\Theta$ be a convex set.
   Assume that, for any $\theta \in \Theta$, the function $\theta^\top f(x, s)$ is $\mu$-strongly convex and differentiable in $x$, and that, for every $n$, there exists $x^{(n)}\in X(s^{(n)})$ satisfying $a^{(n)}=f(x^{(n)}, s^{(n)})$.
   Then, minimizing the suboptimality loss is equivalent to minimizing the PLF; namely,
   \begin{equation*}
      \ell_{\mathrm{sub}}(\theta)=0
      \Leftrightarrow \ell_{\mathrm{plf}}(\theta)=0
      \Leftrightarrow
      \forall n,\ a^{(n)}=a^*(\theta,s^{(n)}) \, .
   \end{equation*}
\end{proposition}

Before proving the above, we recall the following known proposition.
\begin{proposition}
  \label{prop:mohajerin-esfahani}
  \textrm{\citet[Proposition 2.5]{Mohajerin-2018-Data}}
  Let the weight space $\Theta$ be a convex set.
  Assume that, for any $\theta \in \Theta$, the function $\theta^\top f(x, s)$ is $\mu$-strongly convex and differentiable in $x$, and that, for every $n$, there exists $x^{(n)}\in X(s^{(n)})$ satisfying $a^{(n)}=f(x^{(n)}, s^{(n)})$.
  Then,
  \[
    \ell_{\mathrm{sub}}(\theta)\geq \frac{\mu}{2N}\sum_{n=1}^N \left\|x^*(\theta,s^{(n)})-x^{(n)}\right\|^2
  \]
  holds.
\end{proposition}

{
\renewcommand{\proofname}{Proof of \cref{prop:sub_0_eqs_pre}}%
\begin{proof}
Assuming that the mapping $f$ is $L(f)$-Lipschitz continuous, we obtain
\begin{align}
    L(f)^2 \ell_{\mathrm{sub}}(\theta)
    &\geq L(f)^2 \frac{\mu}{2N}\sum_{n=1}^N \left\|x^*(\theta,s^{(n)})-x^{(n)}\right\|^2
    \notag\\
    &\geq \frac{\mu}{2N}\sum_{n=1}^N \left\|a^*(\theta,s^{(n)})-a^{(n)}\right\|^2
    = \frac{\mu}{2}\ell_{\mathrm{plf}}(\theta)
    \notag
\end{align}
On the other hand, by the Cauchy--Schwarz inequality,
\[
    \ell_{\mathrm{sub}}(\theta)
    \leq \sup_{\theta\in\Theta}\|\theta\|\,\ell_{\mathrm{plf}}(\theta)^{\nicefrac{1}{2}} \, .
\]
Combining the above, we conclude that
\[
  \ell_{\mathrm{sub}}(\theta)=0
  \Leftrightarrow \ell_{\mathrm{plf}}(\theta)=0
  \Leftrightarrow
  \forall n,\ a^{(n)}=a^*(\theta,s^{(n)})
\]
holds.
\end{proof}
}

\section{Projected gradient descent when the objective is strongly convex and smooth and the constraints are convex}
\label{sec:subopt_PSGD_to_pre}

In this section, we establish the following result.
\begin{proposition}
    \label{prop:subopt_PSGD_to_pre}
    Let the weight space $\Theta$ be a convex set.
    Assume that, for any $\theta\in\Theta$, the function $\theta^\top f(x, s)$ is $\mu$-strongly convex and differentiable in $x$, and that, for every $n$, there exists $x^{(n)}\in X(s^{(n)})$ satisfying $a^{(n)}=f(x^{(n)}, s^{(n)})$.
    Then, if one runs projected gradient descent on the iterate sequence $\{\theta^{t}\}$ with step size $\alpha_t=1/\beta$, the PLF converges linearly; namely,
    \begin{equation*}
        \ell_{\mathrm{plf}}(\theta^{t})
        \leq \frac{2\ell_{\mathrm{sub}}(\theta^1)}{\mu}\exp\left(-\frac{\mu}{4\beta}(t-1)\right) \, .
    \end{equation*}
\end{proposition}

\begin{proof}
By \citet[Theorem 2.6]{hazan2022introduction}, $\ell_{\mathrm{sub}}(\theta^{t})$ admits the bound
\[
    \ell_{\mathrm{sub}}(\theta^{t})
    \leq \ell_{\mathrm{sub}}(\theta^1)\exp\left(-\frac{\mu}{4\beta}(t-1)\right) \, .
\]
By \cref{prop:mohajerin-esfahani}, it follows that
\begin{equation*}
    \ell_{\mathrm{plf}}(\theta^{t})
    \leq \frac{2}{\mu}\ell_{\mathrm{sub}}(\theta^{t})
    \leq \frac{2\ell_{\mathrm{sub}}(\theta^1)}{\mu}\exp\left(-\frac{\mu}{4\beta}(t-1)\right) \, .
\end{equation*}
\end{proof}

\section{Regret analysis for the suboptimality loss}
\label{sec:known_regret_analysis}

Let $\Theta$ be a nonempty subset of $\mathbb{R}^d$.
Let $\ell\colon\Theta\to\mathbb{R}$ be a loss function.
Consider a sequence $\{\theta^t\}_{t=1}^T\subset\Theta$.
We define the regret by
\[
    \mathrm{Regret}(T)
    = \sum_{t=1}^T \left(\ell(\theta^{t}) - \min_{\theta\in\Theta}\ell(\theta)\right) \, .
\]
In what follows, we review existing work on regret analyses for the suboptimality loss.
The results are summarized in \cref{tab:pro_con_SL_regret_detail}.

\begin{table}[ht]
\vspace{-\intextsep} 
    \caption{Performance comparison of methods for solving the DDIOP for MILPs (\cref{eq:IOP_linear}).
    The integer $T$ denotes the number of iterations or the number of optimization calls.
    The integer $t$ denotes the iteration index or the optimization-call index.}
    \label{tab:pro_con_SL_regret_detail}
    \centering
    \begin{tabular}{p{3cm}|p{9cm}}
        \toprule
        Method & Regret for the suboptimality loss \\
        \midrule
        MWU & $O \left( L(\ell_{\mathrm{sub}}) \log d  \sqrt{T} \right)$, provided that $\Theta = \Delta^{d-1}$ \\
        \citep{arora2012multiplicative} & \citep[Theorem 3.5]{Barmann-2018-online} \\
        \hline
        PSGD (online gradient descent) & $O \left( \mathrm{diam} (\Theta) L(\ell_{\mathrm{sub}}) \sqrt{T} \right)$ \\
        (with step size & \citep[Theorem 3.11]{Barmann-2018-online} \\
        $\mathrm{diam} (\Theta) L(\ell_{\mathrm{sub}})^{-1} t^{-1/2}$) &  \\
        \hline
        \citet{besbes2021online,besbes2025contextual} & $O \left( d^4 \log T  \right)$, provided that $\Theta$ is the unit sphere and $L(\ell_{\mathrm{sub}})\leq 1$\\
        \hline
        \citet{gollapudi2021contextual} & $O \left( d \log T \right)$, provided that $\Theta$ is the unit ball and $L(\ell_{\mathrm{sub}})\leq 1$ \\
         & $O \left( d^{2(d+1)} \right)$, provided that $\Theta$ is the unit ball and $L(\ell_{\mathrm{sub}})\leq 1$ \\
        \hline
        ONS & $O \left(\mathrm{diam} (\Theta) L(\ell_{\mathrm{sub}}) d\log (T/d) \right)$\\
        \citep{hazan2007logarithmic} &  \citep[Theorem 3.1]{sakaue2025online} \\
        \hline
        MetaGrad & $O \left( \mathrm{diam} (\Theta) L(\ell_{\mathrm{sub}}) d\log (T/d) \right)$ \\
        (\citealp{van2016metagrad}; & \citep[Theorem 4.1]{sakaue2025online} \\
        \citealp{van2021metagrad}) & \\
        \bottomrule
        Cf.\ lower bound & $\Omega (d)$ \citep[\S 5]{sakaue2025online}\\
    \end{tabular}
    \vspace{-\intextsep} 
\end{table}

\section{Online and offline optimization}

Let $\Theta$ be a nonempty subset of $\mathbb{R}^d$.
Let $\ell\colon\Theta\to\mathbb{R}$ be a function.
Consider a sequence $\{\theta^t\}_{t=1}^T\subset\Theta$.
Then, the following holds:
\begin{equation}
    \min_{t=1,\ldots,T}
    \left(\ell(\theta^{t}) - \min_{\Theta}\ell\right)
    \leq
    \frac{1}{T}
    \sum_{t=1}^T \left(
        \ell(\theta^t) - \min_{\Theta}\ell
    \right)
    \leq \frac{\mathrm{Regret}(T)}{T}
    \, .
    \label{eq:offline-online-estimate}
\end{equation}

By applying the existing regret bounds reviewed in \cref{sec:known_regret_analysis} to \cref{eq:offline-online-estimate}, one can upper-bound the best-iterate performance for the suboptimality loss.
For the resulting bounds, see \cref{tab:pro_con_SL_detail}.

\section{Numerical Experiment}
\label{sec:experiment}

In this section, under \cref{assu:WIRL}, we report numerical results comparing \cref{alg:intention-WIRL-gradual-decay} with existing methods.
Implementation details and the computational device used are provided in \cref{sec:devices}.

When the objective function is linear and $X(s)$ is a convex region, it is known that one can solve formulation (15) of \citet{chan2023inverse} by taking the number of point clouds $T$ uniformly.
We refer to this method as CHAN.

\begin{wraptable}{r}{0.55\textwidth}
    \vspace{-2\intextsep} 
    \caption{Computational time per iteration, or equivalently, the computational cost required for a single call to the forward problem, for each method when $\Theta=\Delta^{d-1}$.
    The constant $\mathrm{Opt}$ denotes the computational complexity of running \cref{eq:FOP_linear}.
    }
    \label{tab:wall-time}
    \centering
    \begin{tabular}{ll}
        \toprule
        Method & Computational complexity \\
        \midrule
        UPA & $N \mathrm{Opt}$ \\
        RPA & $N \mathrm{Opt}$ \\
        CHAN & at least $N \mathrm{Opt}$ \\
        PSGD & $N \mathrm{Opt} + O(d \log d )$ \\
        \bottomrule
    \end{tabular}
    \vspace{-\intextsep} 
\end{wraptable}

When $\Theta=\Delta^{d-1}$, the computational time per iteration, or equivalently, the computational cost required for a single call to the forward problem, is summarized in \cref{tab:wall-time}.
The computational complexity of one iteration of \cref{alg:intention-WIRL-gradual-decay} is the sum of the complexity of \cref{eq:FOP_linear} and the complexity of the projection $\Proj_{\Delta^{d-1}}$, namely $O(d \log d)$ \citep{Wang-13-projection}.
Since the present experiments have dimension $d \leq 8$, we may regard the runtime of methods other than CHAN as comparable to, or faster than, that of CHAN.

\subsection{Definitions of UPA and RPA}
\label{sec:def:UPA-RPA-CHAN}

We briefly review UPA and RPA.
For a subset $\Theta^\prime \subset \Delta^{d-1}$, define
\[
    \psi(\Theta^\prime):=\argmin_{\Theta^\prime}\ell_{\mathrm{plf}} \, .
\]
In UPA, we define
\[
    \Theta_k :=
    \left\{
        \left(
            \frac{2k_1+1}{2k+d},
            \ldots,
            \frac{2k_d+1}{2k+d}
        \right)
        \,\middle|\,
        \forall i,\ k_i \in \mathbb{Z}_{\geq 0},\ \sum_i k_i = k
    \right\} \, .
\]
UPA returns $\psi(\Theta_k)$.
In RPA, letting $\Theta_{\mathrm{rand}}$ be a set of $k$ random points sampled from $\Delta^{d-1}$, RPA returns $\psi(\Theta_{\mathrm{rand}})$.

\subsection{Linear programming}
\label{sec:exp-LP}

\begin{figure*}[t]
    \centering
    \begin{minipage}[ht]{0.3\linewidth}
        \includegraphics[width = \linewidth ]{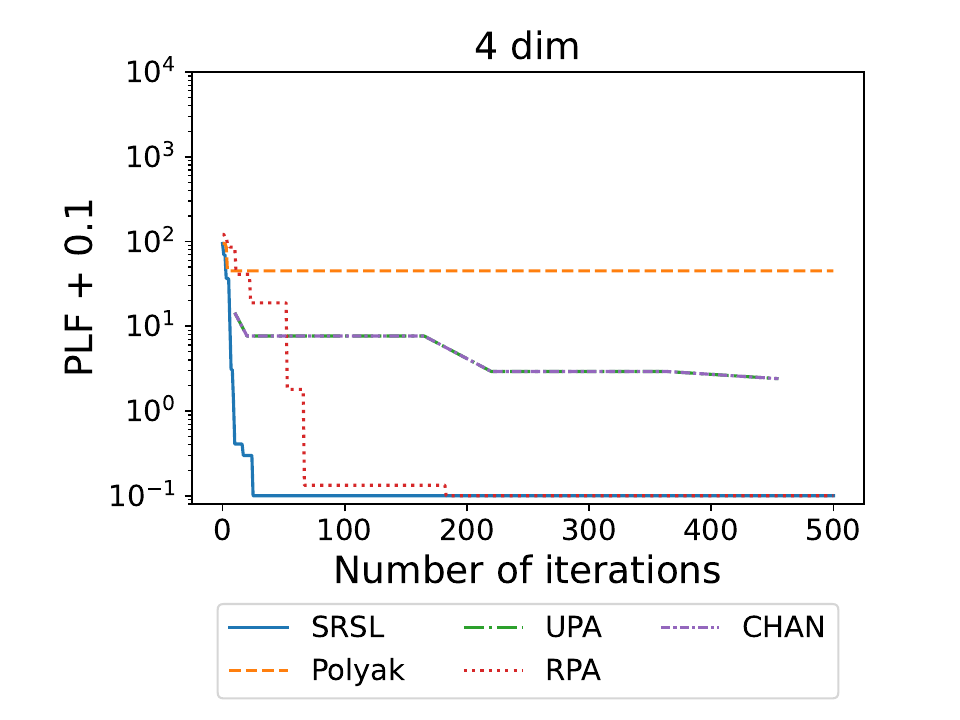}
    \end{minipage}
    \begin{minipage}[ht]{0.3\linewidth}
        \includegraphics[width = \linewidth ]{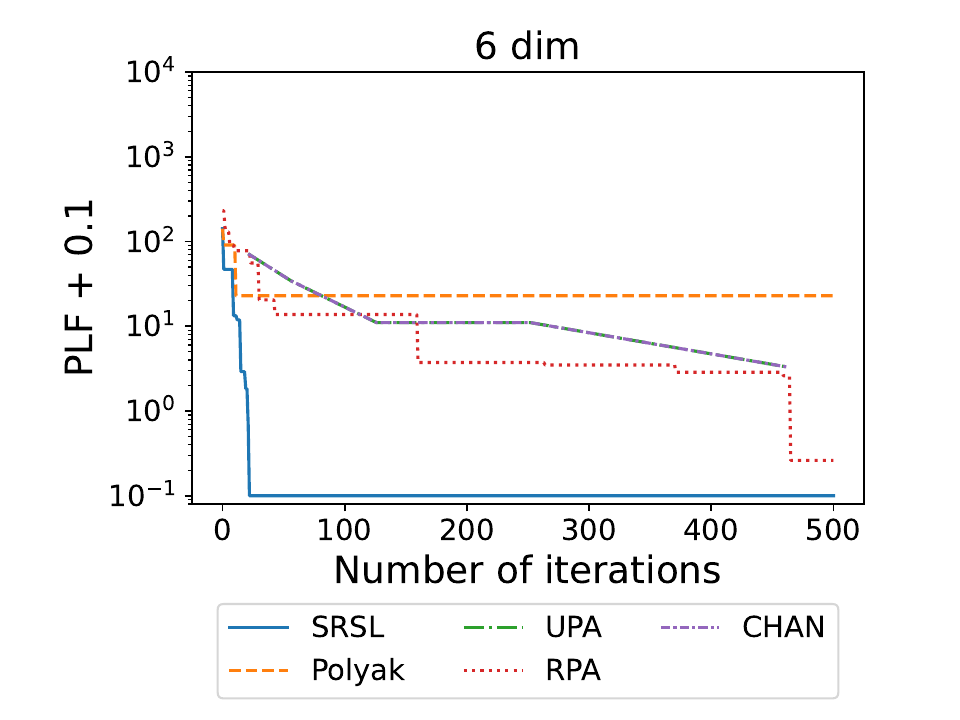}
    \end{minipage}
    \begin{minipage}[ht]{0.3\linewidth}
        \includegraphics[width = \linewidth ]{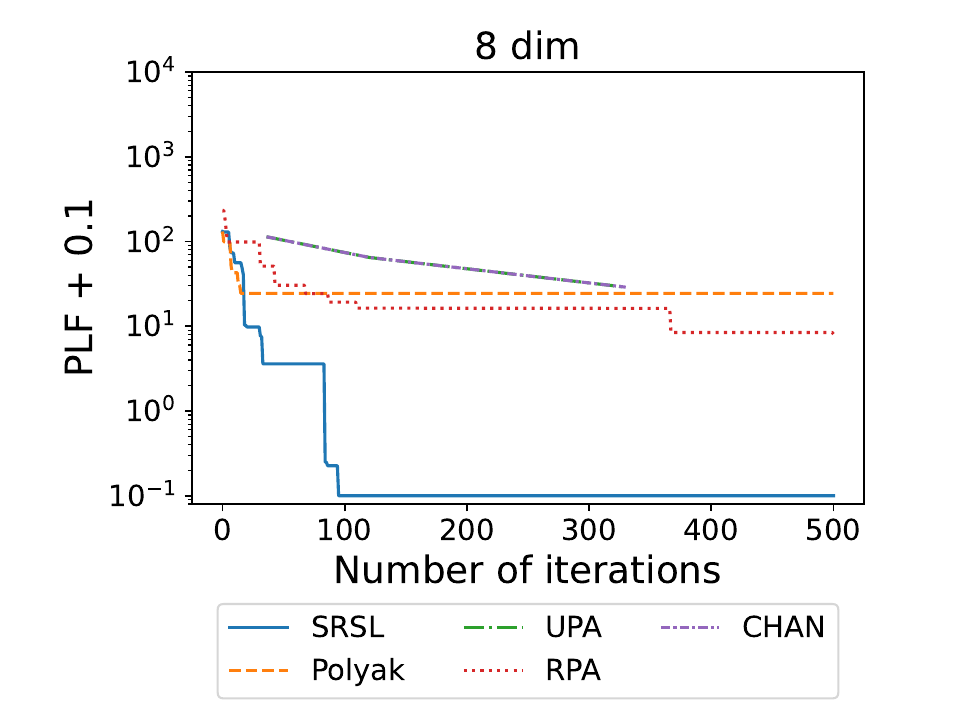}
    \end{minipage}
    \caption{Worst-case behavior of $\ell_{\mathrm{plf}} (\theta^{t} (\mathcal{D})) +0.1$ for the LP setting.}
    \label{pic:eliptic_LP-1-PLA-100}
\end{figure*}

The LP setup is described in detail in \cref{sec:exp-LP-appendix}.
Under this setup, we compare PSGD (SRSL), PSGD (Polyak), UPA, RPA, and CHAN.

Each experiment consists of the following five steps.
(Step 1) Sample $\theta^* \in \Delta^{d-1}$ according to the uniform distribution on $\Delta^{d-1}$.
(Step 2) Sample $s^{(n)} \in \mathcal{S}$ according to the random variable specified in \cref{sec:exp-LP-appendix}.
(Step 3) For $n=1,\ldots,N$, set $a^{(n)}=a^*(\theta^*,s^{(n)})$.
(Step 4) Run PSGD (SRSL), PSGD (Polyak), UPA, RPA, and CHAN with the number of iterations (or the number of acquired data points) set to $500$.
(Step 5) Repeat (Step 1)--(Step 4) $100$ times.

\begin{remark}
    UPA and CHAN cannot be run for an arbitrary number of iterations: the number of evaluation points can only take the discrete values $|\Theta_j|$ indexed by an integer $j$. We therefore ran these methods only at the values $t = |\Theta_j|$ with $t < 500$, and linearly interpolated the loss values at the remaining values of $t$.
\end{remark}

We summarize the results as follows.
\cref{pic:eliptic_LP-1-PLA-100} shows, for $d=4,6,8$, the worst-case trajectories of $\ell_{\mathrm{plf}}(\theta^{t}(\mathcal{D}))+0.1$.
For the PLF minimization problem, \cref{pic:eliptic_LP-1-PLA-100} suggests that PSGD (SRSL) is the most effective method for reducing the PLF, achieving PLF minimization in fewer than $1/7$ of the iterations required by UPA, RPA, and CHAN.
In particular, when $d \geq 6$, once training with PSGD (SRSL) is complete, PSGD (SRSL) improves the PLF by more than two orders of magnitude compared with the existing algorithms.

By \cref{pic:eliptic_LP-1-PLA-100}, as the dimension $d$ increases, UPA, RPA, and CHAN are expected to move toward the upper right.
In contrast, we observe that PSGD (SRSL) is less sensitive to the dimension than the other methods.

Moreover, PSGD (SRSL) drives the PLF to $0$ with probability $100\%$.
This phenomenon can be theoretically justified by \cref{theo:universal_grad_based_opt_achieve_min_SL}.

\begin{remark}
    \label{rem:behavior-PSGD-Polyak-LP}
    The behavior of the suboptimality loss is described in detail in \cref{pic:eliptic_LP-1-SL-100}.
    From a theoretical viewpoint, PSGD (SRSS, SRSL) is the most effective method for reducing the suboptimality loss (see \cref{rem:lower_bound_first_order_method}).
    However, in our experiments, we found that both PSGD (SRSL) and PSGD (Polyak) are efficient.

    It remains necessary to investigate why PSGD (Polyak) fails to attain the minimum value of the PLF.
    Under PSGD (Polyak), the suboptimality loss value becomes extremely small, and consequently the update step size also becomes extremely small.
    As a result, the update magnitude becomes negligible, preventing the method from attaining the PLF minimum value $0$.
\end{remark}

\subsection{Single-machine weighted sum of completion times scheduling}
\label{sec:1-machine}

\begin{figure*}[ht]
    \centering
    \begin{minipage}[ht]{0.3\linewidth}
        \includegraphics[width=\linewidth ]{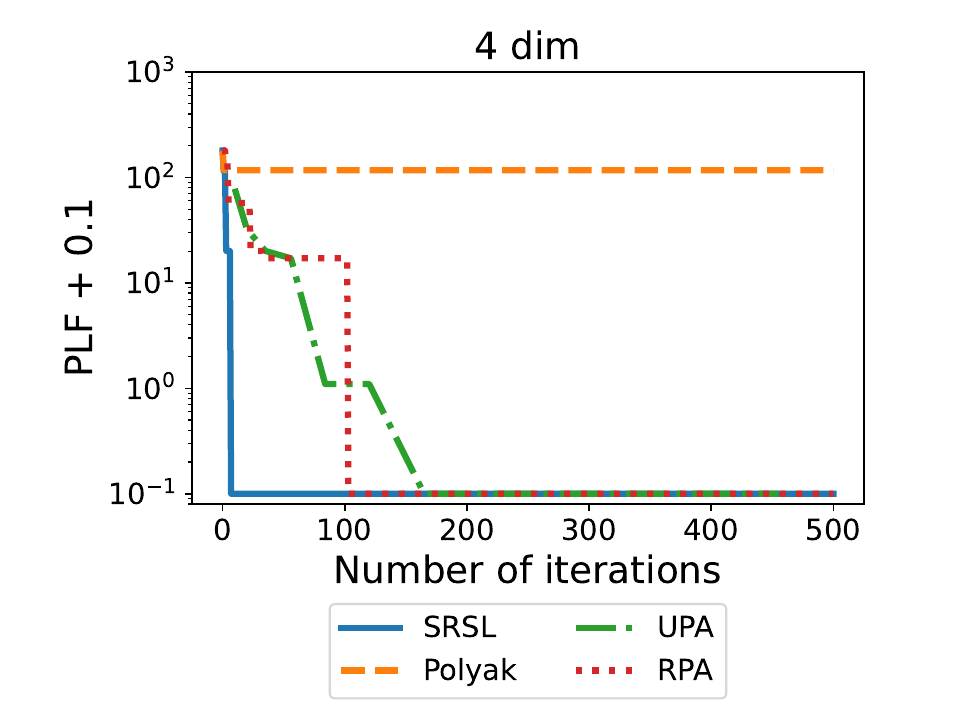}
    \end{minipage}
    \begin{minipage}[ht]{0.3\linewidth}
        \includegraphics[width=\linewidth ]{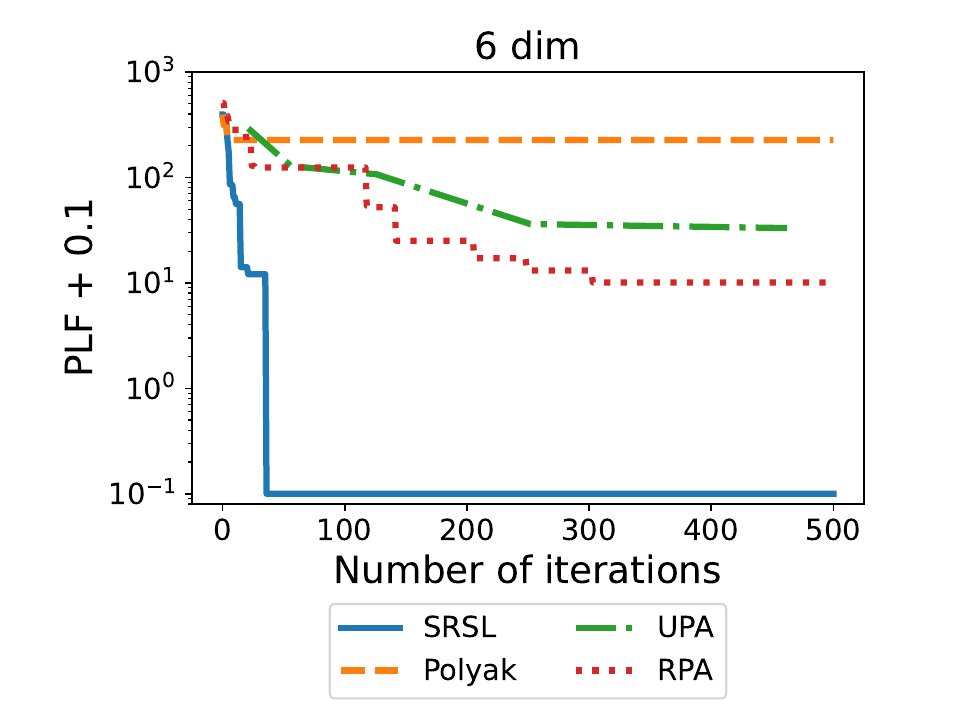}
    \end{minipage}
    \begin{minipage}[ht]{0.3\linewidth}
        \includegraphics[width=\linewidth ]{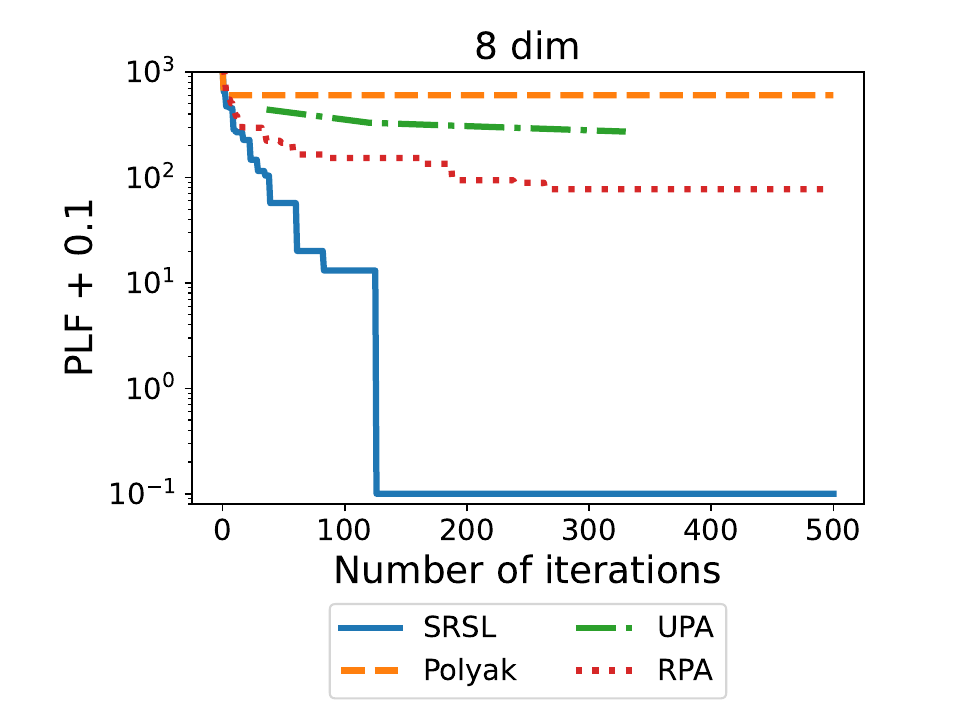}
    \end{minipage}
    \caption{Worst-case behavior of $\ell_{\mathrm{plf}} (\theta^{t} (\mathcal{D})) +0.1$ for the single-machine scheduling instance.}
    \label{pic:scheduling-1-PLA-100}
\end{figure*}

The single-machine weighted sum of completion times scheduling problem $1|r_j|\sum \theta_j C_j$ is an ILP.
The experimental setup for $1|r_j|\sum \theta_j C_j$ is given in \cref{sec:1-machine-appendix}.

We conducted experiments following the same steps (Step 1)--(Step 5) as in \cref{sec:exp-LP}, with $t \leq 1000$ and excluding CHAN.
We summarize the results as follows.
\cref{pic:scheduling-1-PLA-100} reports, for $d=4,6,8$, the worst-case trajectories of $\ell_{\mathrm{plf}}(\theta^{t}(\mathcal{D}))+0.1$.
As in \cref{sec:exp-LP}, we confirm that PSGD (SRSL) exhibits weaker dependence on the dimension than existing methods and minimizes the PLF in fewer than $1/10$ of the iterations required by the other methods.
The runtimes of each method are reported in \cref{tab:schedulingd_4,tab:schedulingd_6,tab:schedulingd_8}.
PSGD (SRSL) terminates within $2.27$ seconds for $d=4,6,8$.

\begin{remark}
    \label{rem:behavior-PSGD-Polyak-scheduling}
    The behavior of the suboptimality loss is shown in \cref{pic:scheduling-1-SL-100}.
\end{remark}

\section{Conclusion}
\label{sec:conclusion}

To solve the DDIOP for MILPs, we focused on the suboptimality loss, which is Lipschitz continuous and convex.
By leveraging its convex piecewise-linear structure and the existence of a relative interior point in its minimizer set, we showed that a broad class of gradient-based optimization methods, including PSGD, attains the minimum value $0$ of the suboptimality loss in finitely many iterations, and hence solves the DDIOP for MILPs exactly.
This finite-step attainment rests on a general principle for convex piecewise-linear functions---if the minimizer set has a relative interior point, a broad class of first-order methods, including projected subgradient descent, reaches the minimum in finitely many iterations---which we applied by showing that the suboptimality loss satisfies this condition.
As a further consequence, we proved that PSGD attains the minimum value of the PLF in finitely many iterations.
We also derived an explicit upper bound on the number of iterations required for PSGD to reach the minimum in finite time, and verified the finite-time attainment behavior through numerical experiments.

Several directions for future work are as follows. One is to develop algorithms that can efficiently learn inverse optimization problems when the given data contain noise. Real-world data tend to be noisy~\citep[cf.][]{aswani2018inverse}. Establishing methods that simultaneously achieve computational efficiency and robustness to noise is expected to broaden the applicability to practical optimization problems.

Another direction is whether the finite-step attainment established in the offline setting ($\ell_{\mathrm{sub}}=0$) also holds in online inverse optimization---that is, whether, for sequentially arriving data, the predictions can become exactly consistent with the observations after some finite round, incurring no further loss thereafter. For online inverse optimization, a finite cumulative regret independent of the number of iterations is already known~\citep{gollapudi2021contextual}. However, finiteness of the cumulative regret is a guarantee weaker than, and different from, achieving exact consistency in finitely many rounds (zero loss thereafter). The relationship between the two, and a characterization of what obstructs finite-step attainment of exact consistency (such as the vanishing of the thickness $\gamma(\ell_{\mathrm{sub}})$ of the consistent set), remain open.

\section*{Acknowledgement }
    We would like to thank Riki Eto, Kei Takemura, Yuzuru Okajima, and Yoichi Sasaki for carefully reviewing this paper.
    We also thank GPT-5.2, GPT-5.4, Opus 4.7, and Opus 4.8, Fable 5 for their assistance with proofreading the manuscript.

\bibliographystyle{apalike} 
\bibliography{suboptimality_loss.bib} 


\newpage

\appendix
\crefalias{section}{appendix}

\section{Experiments}
\label{sec:exp_appendix}

\subsection{Implementation and device details}
\label{sec:devices}

The primary libraries used in our experiments are OR-Tools v9.8 \citep{ortools}, NumPy 1.26.3 \citep{harris2020array}, and Python 3.9.0 \citep{Python3}.
All computations were conducted on a machine equipped with 192 Intel CPU cores and 1.0~TB of RAM.

\subsection{Linear programming}
\label{sec:exp-LP-appendix}

\paragraph{Theoretical setup}

We set $d \in \{4,6,8\}$, $N=1$, $J^{(n)}=100$, and $r_{\max}=10$.
For each $n=1,\ldots,N$, we draw $\log_{0.1} r_i^{(n)}$ i.i.d.\ from the uniform distribution on $[0,\log_{0.1}(1/r_{\max})]$.
For each $n=1,\ldots,N$ and $j=1,\ldots,J^{(n)}$, we draw $b^{(n,j)}$ i.i.d.\ from the uniform distribution on $\mathbb{R}_{\geq 0}^d$.
We then rescale each $b^{(n,j)}$ by a positive scalar so that $\sum_{i=1}^d r_i^{(n)2} b_i^{(n,j)2}=1$.
Under this setting, we consider the following LP:
\begin{equation}
    \begin{split}
        \text{maximize}_{x}\,
        & \theta^{\top} x
        \\
        \text{subject to}\,
        &
        \sum_{i=1}^d r_i^{(n)2} b_i^{(n,j)} x_i \leq 1,
        \\
        & \forall j=1,\ldots,J^{(n)},
        \,
        x \geq 0 .
    \end{split}
    \label{opt:LP-elliptic}
\end{equation}
Let $\mathcal{S}$ denote the set of pairs $(r,b)$.

\paragraph{CHAN setup}

We apply the discussion in \citet[(15)]{chan2023inverse} to \cref{opt:LP-elliptic}.
We rewrite \cref{opt:LP-elliptic} as
\begin{equation*}
    \begin{split}
        \text{maximize}_{x}\,
        & \theta^{\top} x
        \\
        \text{subject to}\,
        &
        A^{(n)} x \leq 1_{J^{(n)}} := (\underbrace{1,\ldots,1}_{J^{(n)}\text{ times}}),
        \quad
        x \geq 0 .
    \end{split}
\end{equation*}
Applying the discussion in \citet[(15)]{chan2023inverse} to \cref{opt:LP-elliptic}, we obtain
\begin{equation}
    \begin{split}
        \text{minimize}_{x,\theta\in\Theta_k}\,
        & \sum_{n=1}^N \|x^{(n)}-a^{(n)}\|_2^2
        \\
        \text{subject to}\,
        &
        A^{(n)\top}\lambda^{(n)} \geq \theta,
        \,
        \lambda^{(n)} \geq 0,
        \,
        \theta^{\top} x^{(n)} = 1_{J^{(n)}}^{\top}\lambda^{(n)},
        \\
        &
        A^{(n)} x^{(n)} \leq 1_{J^{(n)}},
        \,
        x^{(n)} \geq 0,
        \,
        \forall n=1,\ldots,N
        \, .
    \end{split}
    \label{opt:LP-elliptic-chan}
\end{equation}

\paragraph{Results}

We report the experimental results here.
\cref{pic:eliptic_LP-1-SL-100} shows, for $d=4,6,8$, the worst-case behavior of $\ell_{\mathrm{sub}}(\theta^{t}(\mathcal{D}))+0.001$, respectively.

\begin{figure}[ht]
    \vskip 0.2in
    \begin{center}
        \begin{minipage}[ht]{0.3\linewidth}
            \centerline{
                \includegraphics[width=\columnwidth]{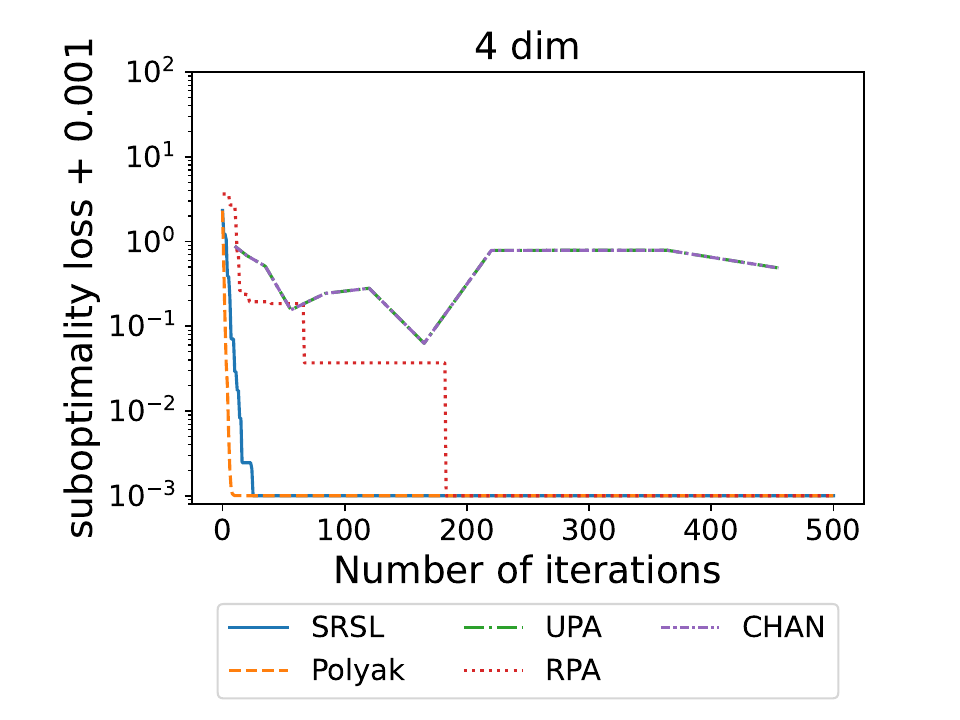}
            }
        \end{minipage}
        \begin{minipage}[ht]{0.3\linewidth}
            \centerline{
                \includegraphics[width=\columnwidth]{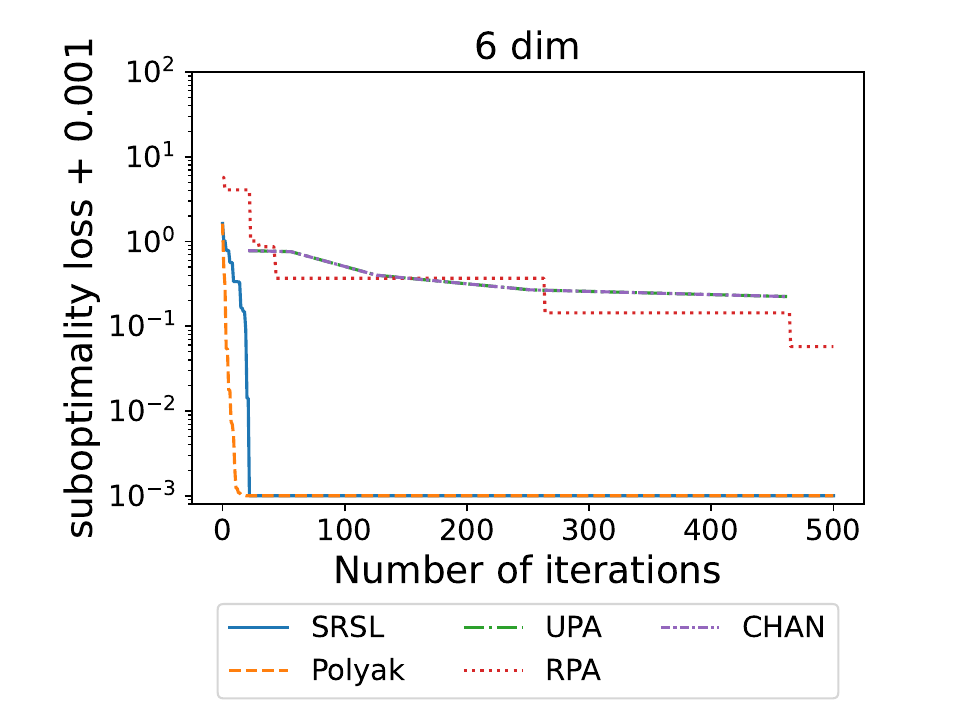}
            }
        \end{minipage}
        \begin{minipage}[h]{0.3\linewidth}
            \centerline{
                \includegraphics[width=\columnwidth]{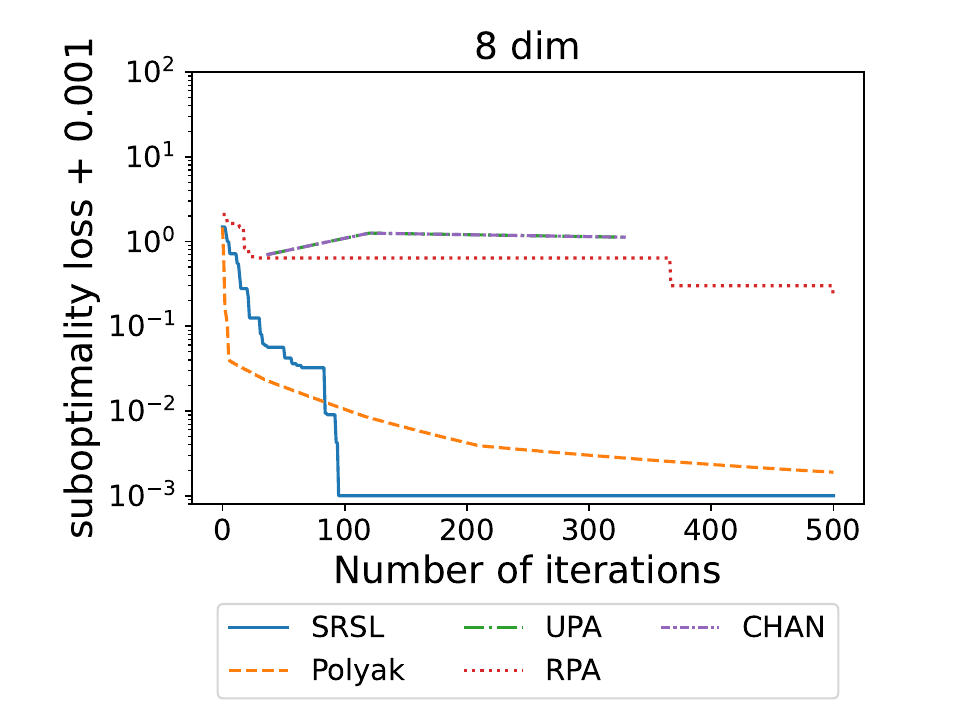}
            }
        \end{minipage}
    \end{center}
    \caption{Worst-case behavior of $\ell_{\mathrm{sub}} (\theta^{t} (\mathcal{D})) + 0.001$ for the LP setting.}
    \label{pic:eliptic_LP-1-SL-100}
\end{figure}

\subsection{Single-machine weighted completion-time scheduling}
\label{sec:1-machine-appendix}

\paragraph{Setup}

We consider the single-machine weighted completion-time scheduling problem $1\mid r_j\mid \sum_{j} \theta_j C_j$, in which $d$ jobs are to be processed on a single machine.
We assume that the machine can process at most one job at a time and that processing is non-preemptive.
Let $j=1,\ldots,d$ index the jobs, and let $J=\{1,\ldots,d\}$ denote the job set.
For each job $j$, let $p_j$ be its processing time, $\theta_j$ its importance weight, and $r_j$ its release time (i.e., the earliest time at which job $j$ can start).
The goal is to find an execution order (schedule) on the machine so as to minimize the weighted sum of completion times, where $C_j$ denotes the completion time of job $j$.

\begin{table}[ht]
    \vspace{-\intextsep}
    \caption{Decision variables and constraints in the scheduling formulation.}
    \label{tab:scheduling_variable}
    \centering
    \begin{tabular}{l|rrr}
        $d$ & 4 & 6 & 8 \\
        \hline
        Decision variables & 16 & 36 & 64 \\
        Constraints & 28 & 66 & 120 \\
    \end{tabular}
\end{table}

Let the continuous variable $b_j$ denote the start time of job $j$, and let $x_{jk}$ be a binary variable that equals $1$ if job $j$ precedes job $k$, and equals $0$ otherwise.
Define $M(p,r):=\max_{j} r_j + \sum_{j} p_j$.
Then, the problem can be formulated as follows:
\begin{align*}
    \text{minimize}_{b,x}\,
    &
    \sum_{j\in J} \theta_j (b_j+p_j)
    \\
    \text{subject to}\,
    &
    b_j+p_j-M(p,r)(1-x_{jk}) \leq b_k, \quad \forall j\neq k, \\
    &
    x_{jk}+x_{kj}=1,\ \ x_{jk}\in\{0,1\}, \quad \forall j\neq k, \\
    &
    b_j \geq r_j,\ \ b_j\in\mathbb{R}, \quad \forall j.
\end{align*}
The numbers of decision variables and constraints are summarized in \cref{tab:scheduling_variable}.

We set
\[
    \Theta
    =
    \left\{
        \theta\in\mathbb{R}^d
        \,\middle|\,
        \theta_j \geq 0,
        \,
        \sum_{j\in J}\theta_j = 1
    \right\}
    +10^{-3}(1,\ldots,1)
    \, .
\]
We sample $r_j$ i.i.d.\ from the uniform distribution on $[0,10]$ and $p_j$ i.i.d.\ from the uniform distribution on $[1,5]$.
Let $\mathcal{S}$ denote the set of pairs $s=(p,r)$, and let $X(s)$ denote the feasible set of $(b,x)$ satisfying the constraints.
We set $N=1$.
We run \cref{alg:intention-WIRL-gradual-decay} on $\Theta$ in place of $\Delta^{d-1}$.

\paragraph{Results}

We report the experimental results here.
\cref{pic:scheduling-1-SL-100} shows, for $d=4,6,8$, the worst-case behavior of $\ell_{\mathrm{sub}} (\theta^{t} (\mathcal{D})) + 0.001$, respectively.

\begin{figure}[ht]
    \vskip 0.2in
    \begin{center}
        \begin{minipage}[ht]{0.3\linewidth}
            \centerline{
                \includegraphics[width=\columnwidth]{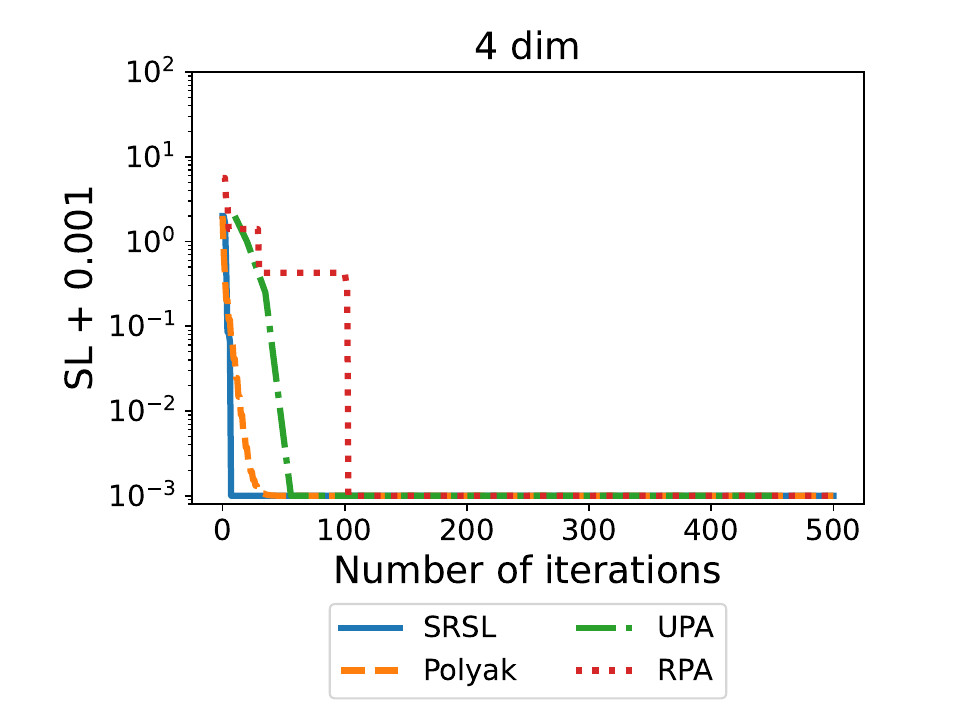}
            }
        \end{minipage}
        \begin{minipage}[ht]{0.3\linewidth}
            \centerline{
                \includegraphics[width=\columnwidth]{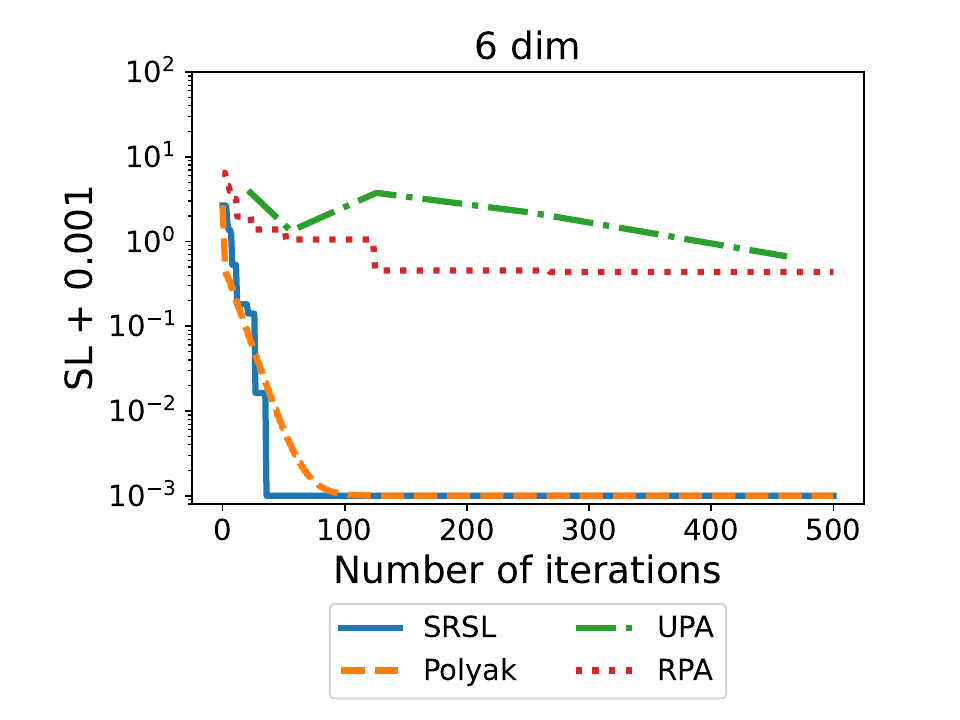}
            }
        \end{minipage}
        \begin{minipage}[ht]{0.3\linewidth}
            \centerline{
                \includegraphics[width=\columnwidth]{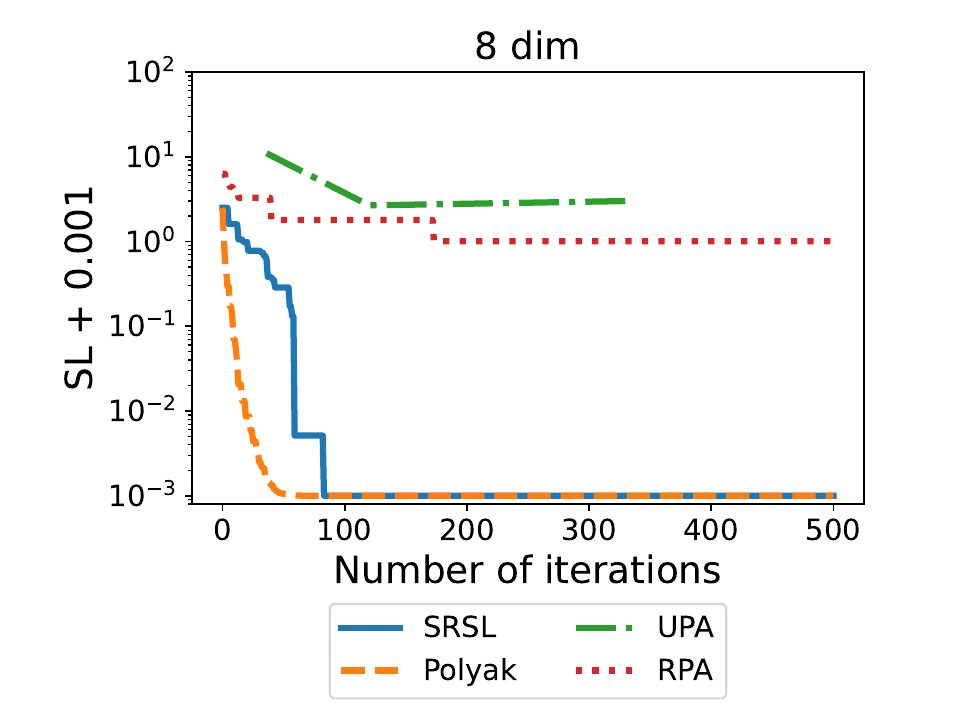}
            }
        \end{minipage}
    \end{center}
    \caption{
        Worst-case behavior of $\ell_{\mathrm{sub}} (\theta^{t} (\mathcal{D})) + 0.001$ for the single-machine scheduling problem.
    }
    \label{pic:scheduling-1-SL-100}
\end{figure}

\begin{table}[ht]
  \centering
  \begin{minipage}{\linewidth}

    \caption{Runtime of each method for $d=4$ jobs.}
    \label{tab:schedulingd_4}
    \centering
    \begin{tabular}{lrrrr}
        \toprule
        & PSGD & PSGD & UPA & RPA \\
        & (SRSL) & (Polyak) &  &  \\
        \midrule
        Mean (s)    & 0.010 & 3.70   & 9.12  & 2.06 \\
        Max (s)     & 0.024 & 103.49 & 12.17 & 2.86 \\
        Median (s)  & 0.009 & 0.024  & 8.74  & 1.98 \\
        \bottomrule
    \end{tabular}

    \vspace{2.0ex}

    \caption{Runtime of each method for $d=6$ jobs.}
    \label{tab:schedulingd_6}
    \centering
    \begin{tabular}{lrrrr}
        \toprule
        & PSGD & PSGD & UPA & RPA \\
        & (SRSL) & (Polyak) &  &  \\
        \midrule
        Mean (s)    & 0.04 & 18.30  & 4.89 & 4.61 \\
        Max (s)     & 1.14 & 132.86 & 6.06 & 11.15 \\
        Median (s)  & 0.02 & 0.27   & 4.84 & 4.19 \\
        \bottomrule
    \end{tabular}

    \vspace{2.0ex}

    \caption{Runtime of each method for $d=8$ jobs.}
    \label{tab:schedulingd_8}
    \centering
    \begin{tabular}{lrrrr}
        \toprule
        & PSGD & PSGD & UPA & RPA \\
        & (SRSL) & (Polyak) &  &  \\
        \midrule
        Mean (s)    & 0.382 & 86.72   & 7.52  & 33.83 \\
        Max (s)     & 2.27  & 1354.91 & 8.61  & 93.85 \\
        Median (s)  & 0.211 & 12.25   & 7.51  & 29.24 \\
        \bottomrule
    \end{tabular}

  \end{minipage}
  \vspace{-\intextsep}
\end{table}

\end{document}